\newtheorem*{prop}{Proposition}
\newtheorem*{remark1}{Remark 1}
\newtheorem*{remark2}{Remark 2}
\newsavebox{\imagebox}
\journal{Engineering Applications of Artificial Intelligence}
\begin{document}

\begin{frontmatter}

\title{Towards Reliable Uncertainty Quantification via Deep Ensemble\\
in Multi-output Regression Task}

\author[1]{Sunwoong Yang}
\ead{sunwoongy@kaist.ac.kr}

\author[2]{Kwanjung Yee\corref{cor1}}
\ead{kjyee@snu.ac.kr}

\cortext[cor1]{Corresponding author}

\address[1]{Cho Chun Shik Graduate School of Mobility, Korea Advanced Institute of Science and Technology, Daejeon, 34051, Republic of Korea}

\address[2]{Department of Aerospace Engineering, Seoul National University, Seoul, 08826, Republic of Korea}

	
\let\thefootnote\relax\footnote{This work was presented at the AIAA SciTech 2023 Forum, 23-27 January 2023, National Harbor, MD \& Online \citep{yang2023uncertainty}}

\begin{abstract} 

This study aims to comprehensively investigate the deep ensemble approach, an approximate Bayesian inference, in the multi-output regression task for predicting the aerodynamic performance of a missile configuration. To this end, the effect of the number of neural networks used in the ensemble, which has been blindly adopted in previous studies, is scrutinized. As a result, an obvious trend towards underestimation of uncertainty as it increases is observed for the first time, and in this context, we propose the deep ensemble framework that applies the post-hoc calibration method to improve its uncertainty quantification performance. It is compared with Gaussian process regression and is shown to have superior performance in terms of regression accuracy ($\uparrow55\sim56\%$), reliability of estimated uncertainty ($\uparrow38\sim77\%$), and training efficiency ($\uparrow78\%$). Finally, the potential impact of the suggested framework on the Bayesian optimization is briefly examined, indicating that deep ensemble without calibration may lead to unintended exploratory behavior. This UQ framework can be seamlessly applied and extended to any regression task, as no special assumptions have been made for the specific problem used in this study.

\end{abstract}

\begin{keyword}
Regression task \sep Predictive uncertainty \sep Deep ensemble \sep Uncertainty calibration \sep Bayesian optimization
\end{keyword}

\end{frontmatter}


\section{Introduction}
\label{sec:intro}

We are entering an era of high-performance computing technologies and they have enabled engineers to efficiently obtain vast amounts of data, so-called big data. Accordingly, numerous data-driven approaches have been studied to derive physical insights from the growing number of available datasets. The most popular but most fundamental one is to utilize a given dataset to train a regression model (also referred to as a surrogate model), which is used to predict quantities of interest (QoIs) \citep{jeong2005efficient, nikolopoulos2022non, yang2022inverse, hong2023exploration}. This straightforward approach can be leveraged for a variety of applications, from exploration during the design optimization process \citep{yang2022design} to the prediction of high-dimensional data via reduced-order modeling \citep{kang2022pof}. Furthermore, from the perspective that the regression model can accelerate the realization of digital twins by replacing the high-demand simulations required within its procedure \citep{vanderhorn2021digital}, its potential seems boundless.

However, such impacts cannot be fully achieved by the regression model alone. In real-world engineering problems, \textit{knowing what it does not know and therefore improving interpretability} is an indispensable issue. In the decision-making process based on the regression model, engineers should consider the predictive uncertainty derived from insufficient train data and imperfect regression model \citep{zhang2022uncertainty}. Otherwise, blind faith in regression models, especially during risk assessment and management procedures, can lead to unexpected and therefore disastrous outcomes. The most common approach to deal with this issue is to perform Bayesian optimization, also known as efficient global optimization in engineering fields \citep{jones1998efficient, yang2022design, chae2010helicopter, kanazaki2007multi}. Briefly, it aims to reduce model uncertainty by iteratively updating the model based on the acquisition function \citep{snoek2012practical, shin2020deep, shimoyama2013updating}, which contains uncertainty information (Fig. \ref{fig:EGO}). Since the Bayesian optimization process requires uncertainty quantification (UQ), whether the model quantifies the uncertainty over its prediction is the key consideration for engineers in determining which regression model to utilize.

\begin{figure*}[htb!]
    \centering

        \includegraphics[width=.9\textwidth]{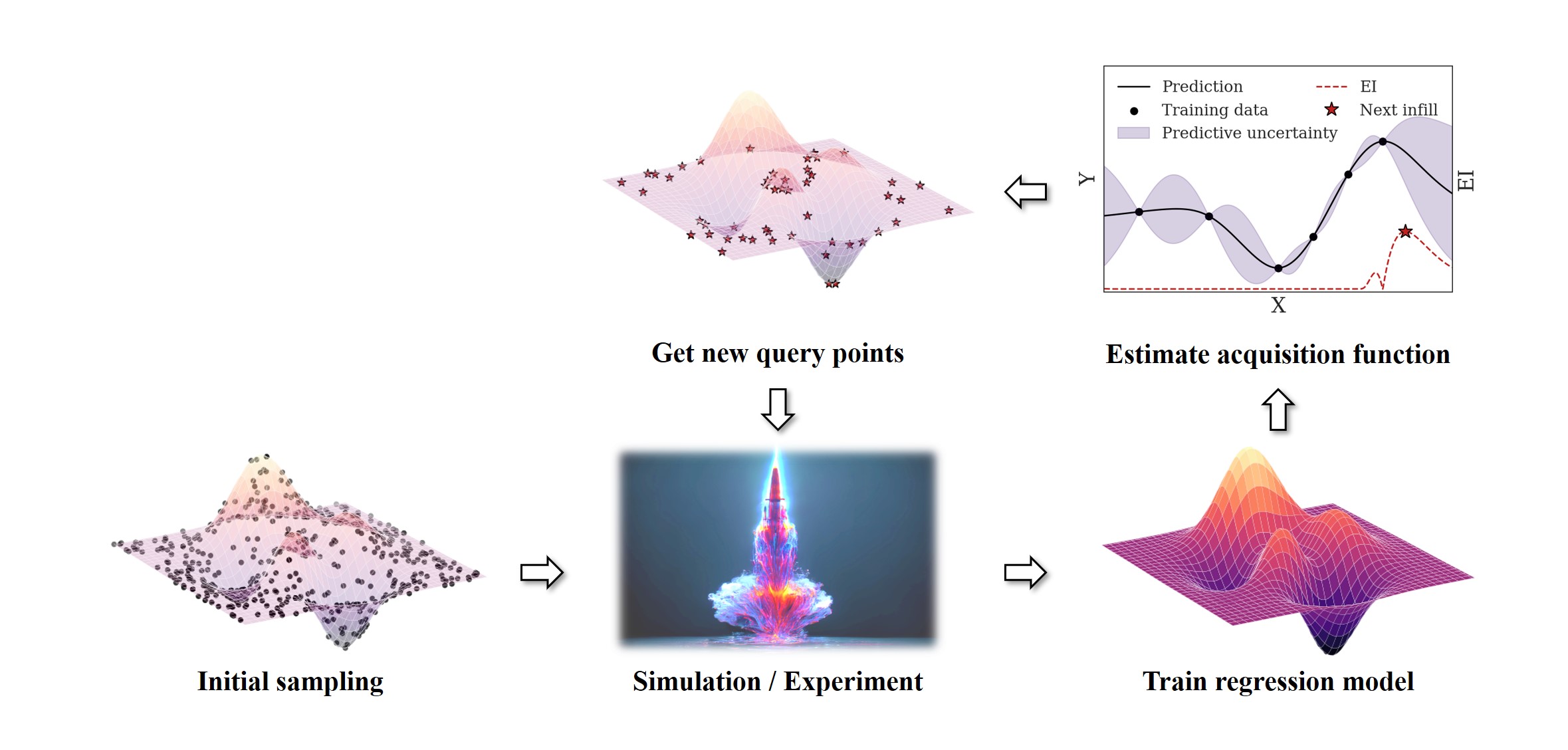}
        
    \caption{Flowchart of Bayesian optimization.}
    \label{fig:EGO}
\end{figure*} 

Gaussian process regression (GPR)---also known as Kriging---is one of the most widely used regression models capable of UQ in various engineering fields \citep{rhode2020non, wang2022dynamic, yang2020surrogate, quirante2018hybrid, quirante2016optimization, kessler2019global, zhong2019operation, yang2022comment, sugimura2009kriging, park2022multi, munoz2023gaussian, yildiz2020slime, yildiz2022marine}. GPR allows engineers to identify which predictions are unreliable by providing predictive uncertainty, and it has become the most prevalent regression model for Bayesian optimization \citep{jeong2005efficient, yang2022design, namura2016efficient, shimoyama2013updating}. However, GPR is notorious for its time complexity of $O(n^3)$ and memory complexity of $O(n^2)$, where $n$ denotes the dataset size \citep{cheng2017variational, wang2017time}. Even in multi-output regression tasks, since a GPR is trained for each output independently, the required training time increases linearly with respect to the output dimension, and the correlations within outputs become completely ignored \citep{lin2021multi, lin2022gradient, wang2015gaussian}.  

In this regard, Bayesian neural networks (BNNs) \citep{mackay1992information, mackay1995probable, fernandez2023physics} can be effective alternatives for the following reasons: 1) their universal approximation capability \citep{hornik1989multilayer, barron1993universal}; 2) scalability to large datasets due to mini-batch training \citep{meng2021multi}; and 3) multi-output prediction only with a single regression model. Since BNNs aim to learn the probability distributions of the model parameters on the basis of Bayesian inference, they can estimate the uncertainty of their prediction, whereas traditional neural networks (NNs) only provide point estimates. However, their additional model parameters lead to slower convergence during the training \citep{zhang2019quantifying} and require significant modifications to the conventional framework of NNs, leading to cumbersome and knotty training algorithms \citep{gal2016uncertainty, lakshminarayanan2017simple, fernandez2022uncertainty}. Such computational complexity and inefficiency prevent BNNs from being a viable option for engineers who prioritize practicality and are not familiar with Bayesian formalism.

Recently, easy-to-use but scalable approaches for approximating Bayesian inference have attracted the attention of engineers. Especially, deep ensemble (DE) \citep{lakshminarayanan2017simple} and MC-dropout \citep{gal2016dropout, deruyttere2021giving} require only a few modifications to standard (or vanilla) NNs, demonstrating their applicability to the fields of engineering. However, since MC-dropout has controversial issues about whether or not it is Bayesian inference \citep{osband2016risk, hron2017variational, hron2018variational, folgoc2021mc}, it is out of our focus; see \ref{sec:app_MCD}. DE, an approach to quantify the predictive uncertainty by leveraging ensembles of NNs, was first proposed by \citet{lakshminarayanan2017simple}. Their idea is so ``simple and straightforward'' that it only requires training multiple NNs in parallel on the same training dataset. Despite its simplicity, several researchers have recognized that the DE provides not only accurate predictions, but also robust, reliable, and practically useful uncertainty on a wide variety of architectures and datasets, even on out-of-distribution (OOD) examples \citep{gustafsson2020evaluating,fort2019deep,ovadia2019can,ashukha2020pitfalls}. Finally, it has come to be treated as the ``gold standard for accurate and well-calibrated predictive distributions'' \citep{wilson2020bayesian}.

However, most previous studies have focused on verifying whether DE accurately estimates the uncertainty in classification tasks \citep{lakshminarayanan2017simple, rahaman2021uncertainty, wu2021should, fort2019deep, ovadia2019can, ashukha2020pitfalls}. Its comprehensive validation has not been conducted in multi-output regression tasks, which are the most common problems in practical engineering disciplines. For example, \citet{de2021bayesian} and \citet{pawar2022multi} utilized DE for tailless aircraft range optimization and boundary layer flow prediction tasks, respectively, without any validation of the estimated uncertainty in their problems. In this sense, our research focuses on a thorough validation of the DE approach in multi-output regression tasks, while comparing it with GPR, both in terms of regression accuracy and reliability of the estimated uncertainty. Especially, we seek to overcome the limitations of existing studies that blindly adopted the number of NNs used in DE without sufficient explanation of their effects \citep{pocevivciute2022generalisation, ovadia2019can, ilg2018uncertainty, de2021bayesian, linmans2020efficient, rahaman2021uncertainty, de2021bayesian, egele2022autodeuq, maulik2023quantifying}. Finally, a tendency of the quantified uncertainty to become underconfident with the number of NNs is observed and a practical calibration method is proposed to be applied. The corresponding effects are verified quantitatively with two uncertainty evaluation criteria, and their potential impact on Bayesian optimization is briefly investigated. The main contributions of this work can be summarized as follows:
\begin{enumerate}
    
    \item First attempt to validate DE approach in the multi-output regression task.
    
    \item The effect of the number of NNs used for DE is comprehensively investigated and two different criteria are utilized for rigorous validation of its uncertainty quality.
 
    \item Accordingly, an increasing trend of underconfidence with the increasing number of NNs is first empirically observed in the regression task, and its analytical explanation is derived.
    
    \item A simple post-hoc calibration method is applied to DE models for the correction of unsatisfactory uncertainty quality and its effectiveness is verified both qualitatively and quantitatively.
    
    \item The potential impact of the proposed calibration method on Bayesian optimization is briefly examined: the possibility that different estimates of uncertainty could lead to different exploration behavior is examined.

    \item Throughout the above procedures, GPR---the most well-known UQ model---is compared with DE, and the effectiveness of DE over GPR is confirmed.
    
\end{enumerate}

The rest of this paper is organized as follows. In Section \ref{sec:DE}, the background on how to implement DE and evaluate its uncertainty quality is described. In Section \ref{sec:DE_missile}, the application of DE to a multi-output regression task in aerospace engineering is elaborated. It provides a thorough validation of DE models compared to GPR models, both in terms of prediction accuracy and uncertainty quality. In Section \ref{sec:calib_results}, a simple post-hoc calibration method is applied and its effects on uncertainty quality and Bayesian optimization are investigated. Finally, in Section \ref{sec:conclusion}, the conclusion and future work of this study are presented.

\clearpage
\section{Implementation and evaluation of DE}
\label{sec:DE}

DE was first proposed by \citet{lakshminarayanan2017simple} for the simple and scalable estimation of predictive uncertainty. Although its idea can be seen as a straightforward extension of NNs (making use of multiple NNs), DE has received little attention in the engineering disciplines, in contrast to its reputation in computer science. This is due to the lack of previous works explaining its algorithm friendly and comprehensively, and therefore the purpose of this section is to fill the academic gap by elaborating on the DE methodology and its validation. First, we briefly introduce the NNs (Section \ref{sec:DE_NN}) before moving on to DE. Then, the background of how to implement DE (Section \ref{sec:DE_DE}) and how to evaluate its uncertainty quality (Section \ref{sec:DE_UQ}) is described.

\subsection{\label{sec:DE_NN} Neural networks (NNs)}

Engineers from various disciplines have been drawn to NNs due to their universal approximation capability \citep{hornik1989multilayer, barron1993universal}, ability to scale to large datasets through mini-batch training \citep{meng2021multi}, and capability of making multi-output predictions with a single regression model. This section provides a brief theoretical overview of these NNs.

The feed-forward mechanism propagates the data obtained from the input layer of NNs to the output layer. In this procedure, information moves via an affine transformation as follows:
\begin{equation}\
\label{eq:affine}
y = Wx + b,
\end{equation}
where $x$ is a vector of nodes in the input layer and $y$ is that in the output layer. $W$ and $b$ are the weight matrix and bias vector between the input and output layers, respectively. Regardless of the number of hidden layers between the input and output layers, nonlinearity between $x$ and $y$ cannot be captured since they are linearly correlated in Eq. \ref{eq:affine}. In this context, the concept of an activation function that modifies the output of NNs is introduced. By incorporating nonlinear activation functions at each layer, NNs can perform nonlinear modeling. A variety of activation functions are available, including the LeakyReLU function \citep{maas2013rectifier}, which is as follows:
\begin{equation}\
\label{eq:act}
    f(x)= 
\begin{cases}
    x,& \text{if } x \ge 0\\
    ax,              & \text{otherwise}
\end{cases}
\end{equation}
where $a$ stands for a non-zero small gradient (0.01 for this study). And this activation function (Eq. \ref{eq:act}) is applied to the output of the previous layer. The correspondingly transformed output is then utilized as the input for the subsequent layer. This process, known as feed-forward, is repeated through the hidden layers.

However, the feed-forward itself cannot achieve the expected accuracy because it lacks an algorithm for training the parameters of NNs, namely weights ($W$) and biases ($b$). To address this issue, the backpropagation training algorithm was introduced, which minimizes the loss function by adjusting the parameters to make the predicted values of the NNs similar to the desired target values as the training progresses \citep{rumelhart1986learning}. To this end, gradient descent optimization techniques, such as Adagrad \citep{duchi2011adaptive}, RMSprop \citep{tieleman2012lecture}, and Adam \citep{kingma2014adam}, are utilized to minimize the loss function. In particular, Adam has become increasingly popular due to its strengths in dealing with sparse gradients and non-stationary objectives, combining Adagrad and RMSprop \citep{kingma2014adam}. As the feed-forward process and backpropagation with gradient descent are repeated iteratively, the loss function decreases to the desired level so that the training stops. The converged weights and biases of the NN model can then be used to make almost real-time predictions using the feed-forward operation. Only the essential aspects of NNs are presented here, as many studies have already described them. More information on NNs can be found in \citet{goodfellow2016deep}.

\clearpage
\subsection{\label{sec:DE_DE} Deep ensemble (DE)}

The NNs discussed above are often considered ``overconfident'' because they do not provide any measure of uncertainty. For those who are interested in UQ, DE can be an alternative approach. DE is based on an ensemble of NNs, but there is a key distinction: unlike a standard NN, which only outputs QoIs as $\mu(x)$, the NN used for DE outputs them as a Gaussian distribution, $N\bigl(\mu(x),\sigma^2(x)\bigr)$. That is, it assumes that QoIs are sampled from $N\bigl(\mu(x),\sigma^2(x)\bigr)$ and aims to provide information about this distribution by outputting $\mu(x)$ and $\sigma^2(x)$. Here, $\mu(x)$ refers to the estimated/predicted value and $\sigma^2(x)$ refers to the estimated/predicted variance. It should be noted that the estimated variance $\sigma^2(x)$ indicates the aleatory uncertainty (uncertainty arising from noise inherent in the training data) regarding the estimated value $\mu(x)$ \citep{solopchuk2021active, laves2021recalibration}. With this specific NN architecture, the number of final nodes is doubled since it outputs not only the standard outputs, $\mu(x)$, but also the uncertainty about them, $\sigma^2(x)$. Due to the probabilistic distribution it provides, this type of NN is referred to as a probabilistic NN.

The probabilistic NN architecture is adopted in the DE model since the vanilla NN structure cannot apply the proper scoring rule, which is the criterion for estimating the quality of predictive uncertainty \citep{gneiting2007strictly}. \citet{lakshminarayanan2017simple} emphasized that with the vanilla NN architecture, which provides only the estimated value $\mu(x)$, the mean squared error (MSE) would be used as the loss function:
\begin{equation}\
\label{eq:MSE}
\mathrm{MSE}=\bigl(y-\mu(x)\bigr)^2
\end{equation}
and therefore the information about the predictive uncertainty is entirely disregarded during the training. To address this issue, they proposed utilizing a probabilistic NN that can output both $\mu(x)$ and $\sigma^2(x)$. It allows the use of the proper scoring rule, negative log-likelihood (NLL), which is the standard metric for assessing the quality of probabilistic models \citep{hastie2009elements}:

\begin{equation}\
\label{eq:NLL}
\mathrm{NLL(\mu(x), \sigma^2(x), y)}=-\mathrm{log}\bigl(p_{\theta}(y|x)\bigr)=\cfrac{\mathrm{log}{\sigma^2(x)}}{2}+\cfrac{\bigl(y-\mu(x)\bigr)^2}{2\sigma^2(x)}+\cfrac{\mathrm{log}2\pi}{2}
\end{equation}
This NLL allows the intuitive interpretations as follows \citep{kendall2017uncertainties, guo2017calibration}. 1) When some training points have high MSE, $\bigl(y-\mu(x)\bigr)^2$, the impact of the term $\tfrac{\bigl(y-\mu(x)\bigr)^2}{2\sigma^2(x)}$ is relatively significant compared to $\tfrac{\mathrm{log}{\sigma^2(x)}}{2}$. Therefore, the model is trained to output high denominator value, $\sigma^2(x)$, at the corresponding points to reduce the NLL. 2) At training points with low MSE, the term $\tfrac{\mathrm{log}{\sigma^2(x)}}{2}$ becomes relatively dominant and thus the model is encouraged to output low $\sigma^2(x)$ at those points. In summary, the NLL scoring rule-based training algorithm for the probabilistic NN facilitates the learning of reliable predictive uncertainty by estimating high uncertainty where prediction error is high and low uncertainty where prediction error is low. It should be noted that this cannot be accomplished in vanilla NN with MSE loss function.

However, using a single probabilistic NN is limited to estimating the aleatory uncertainty. To estimate the epistemic uncertainty arising from the model parameters due to insufficient training data, a further step is required. \citet{lakshminarayanan2017simple} suggested the use of multiple probabilistic NNs, called deep ensemble (DE), to quantify both aleatory and epistemic uncertainties. Specifically, they aimed to capture the epistemic uncertainty by using the multiple probabilistic NNs trained on the identical dataset (also identical architectures for NNs are used). The overall training procedure is summarized in Algorithm \ref{alg:train}. There are two notable points herein: 1) the random initialization of the model parameters of the NNs in line 2; and 2) the random shuffling of the training dataset due to mini-batches in line 5. These two factors are regarded as the main causes of the individual NN with identical architecture in the ensemble being able to be trained with enough diversity \citep{lakshminarayanan2017simple}. See \citet{fort2019deep} for further information, which examined the effects of random initialization and random shuffling.

\begin{algorithm}[htb!]
\caption{Training procedure of DE}\label{alg:train}
    \begin{algorithmic}[1]
        \State Split the train dataset $X$ (with input $x$ and output $y$) into $J$ mini-batches.
        \State Randomly initializes model parameters of the $M$ probabilistic NNs and set training epochs.
        \For{$i=1:M$} \Comment{Loop for NN (parallelizable)}
            \For{epochs} \label{epochs} \Comment{Loop for epoch}
                \For{$j=1:J$} \Comment{Loop for mini-batch}
                    \State{$\mu_{ij}, \sigma^2_{ij}=$ NN$_{i}(x_{ij})$} \Comment{Feed-forward with mini-batch $x_{j}$}
                    \State{$\mathcal{L}_{ij}=$NLL$(\mu_{ij}, \sigma^2_{ij}, y_{ij})$} \Comment{Calculate NLL}
                    \State{$\theta_{i}=\theta_{i}-{learning\,rate}*{\delta{\mathcal{L}_{ij}}}/{\delta{\theta}}$} \Comment{Update model parameters of NN$_{i}$}
                \EndFor 
            \EndFor
        \EndFor
    \end{algorithmic}
\end{algorithm}

To see how the ensemble of probabilistic NNs trained in Algorithm \ref{alg:train} estimates two types of uncertainty, let $\mu_{i}(x)$ and ${\sigma}^2_{i}(x)$ be the predictive mean and predictive variance output by the $i$th individual NN. Herein, the predicted probabilities of $y$ from the $i$th NN can be expressed as $N\bigl(\mu_{i}(x),\sigma^2_{i}(x)\bigr)$, indicating that there are multiple Gaussian distributions according to each NN in the ensemble. \citet{lakshminarayanan2017simple} suggested approximating the final probability of the output as a mixture of Gaussian probabilities as follows:
\begin{equation}\
\label{eq:mixture_m}
\hat{\mu}=\cfrac{1}{M}\sum\limits_{i=1}^M{\mu_{i}},
\end{equation}

\begin{equation}\
\label{eq:mixture_s}
\begin{aligned}
\underbrace{\hat{\sigma}^2}_{\substack{\text{predictive}\\ \text{uncertainty}}}
& =\cfrac{1}{M}\sum\limits_{i=1}^M{\sigma^2_{i}}
+ (\cfrac{1}{M}\sum\limits_{i=1}^M{\mu^2_{i}}-\hat{\mu}^2) \\
& = \underbrace{E(\sigma_i)}_{\substack{\text{aleatory}\\
\text{uncertainty}}} + \underbrace{Var(\mu_i)}_{\substack{\text{epistemic}\\ \text{uncertainty}}} \\
\end{aligned}
\end{equation}
where $M$ is the number of probabilistic NNs used for the ensemble. Accordingly, the final predictive value of DE is $\hat{\mu}$ and the final predictive uncertainty is $\hat{\sigma}^2$. As in Eq. \ref{eq:mixture_s}, the predictive uncertainty can be decomposed into aleatory and epistemic uncertainty; see \citet{scalia2020evaluating} and \citet{hu2021learning} for more details. It should be noted that no additional training algorithm is required after the training of probabilistic NNs in Algorithm \ref{alg:train}: only the mixture process of already trained NNs in Eq. \ref{eq:mixture_m} and Eq. \ref{eq:mixture_s} is required. The overall flowchart of DE from the training of probabilistic NNs to the final prediction is schematically shown in Fig. \ref{fig:DE_struct}.

\begin{figure*}[htb!]
    \centering
    
        \includegraphics[width=.7\textwidth]{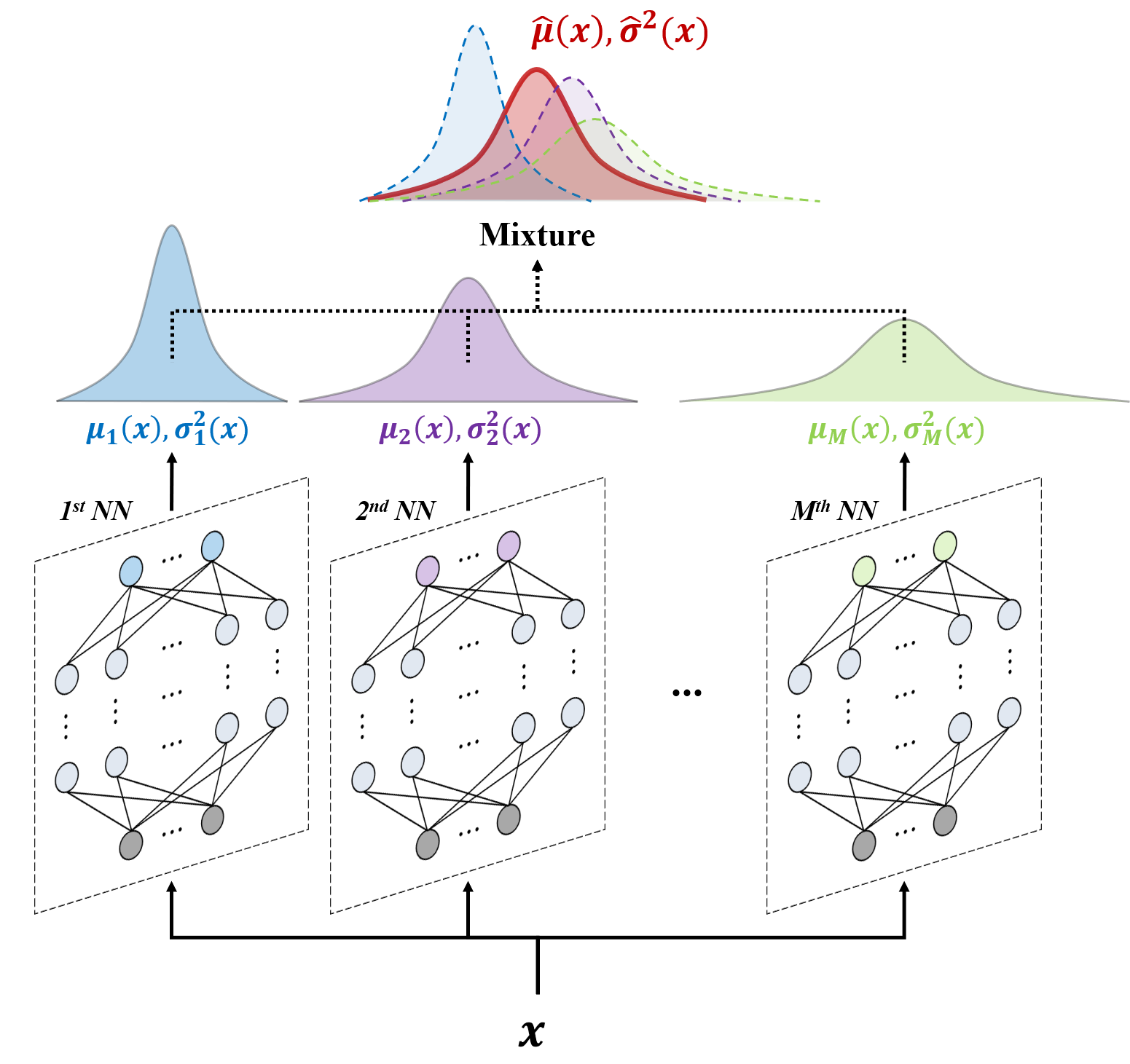}
        
    \caption{Flowchart of DE approach.}
    \label{fig:DE_struct}
\end{figure*} 

\subsection{\label{sec:DE_UQ} Uncertainty quality evaluation}

In the previous Section \ref{sec:DE_DE}, we explored the ability of the DE technique to determine predictive uncertainty. However, engineers who are interested in predicting uncertainty require more than just the feasibility of UQ; they also require confidence in the reliability of the estimated uncertainty. Unfortunately, previous studies that employed GPR to evaluate predictive uncertainty in engineering fields have disregarded this point. Consequently, the purpose of this section is to address this gap by presenting two criteria for assessing the accuracy/reliability of estimated predictive uncertainty. These techniques are applicable to any regression model performing UQ, such as GPR and DE.

\subsubsection{\label{sec:AUCE} AUCE}

The most widely used metric to evaluate the reliability of uncertainty is the area under the calibration error curve (AUCE) \citep{kuleshov2018accurate, gustafsson2020evaluating}. The primary goal of this measure is to ensure that the confidence intervals (CI) estimated by the model are accurate in practice. The concept of AUCE is shown schematically in Fig. \ref{fig:AUCE_info}. In Fig. \ref{fig:AUCEa}, the CI labeled ``Well-calibrated 60\% CI'' contains 60\% of the test dataset (6 out of 10 points), where test dataset indicates the dataset used to verify the quality of the estimated uncertainty. Thus, a well-calibrated model would have a 60\% CI that actually contains 60\% of the test data. On the other hand, if the 60\% CI contains more than 60\% of the dataset (8 out of 10 points), the model is considered underconfident, which corresponds to the case of ``Underconfident 60\% CI.'' This means that the model is not confident enough about its prediction and overestimates its CI. Conversely, if the 60\% CI contains less than 60\% of the dataset (4 out of 10 points, ``Overconfident 60\% CI'' case), the model is considered overconfident, meaning that it is too confident in its prediction and thus estimates a narrower CI than it actually should.

\begin{figure*}[htb!]
    \centering
    \begin{subfigure}[t]{0.4\textwidth}
        \centering
        \includegraphics[width=\linewidth]{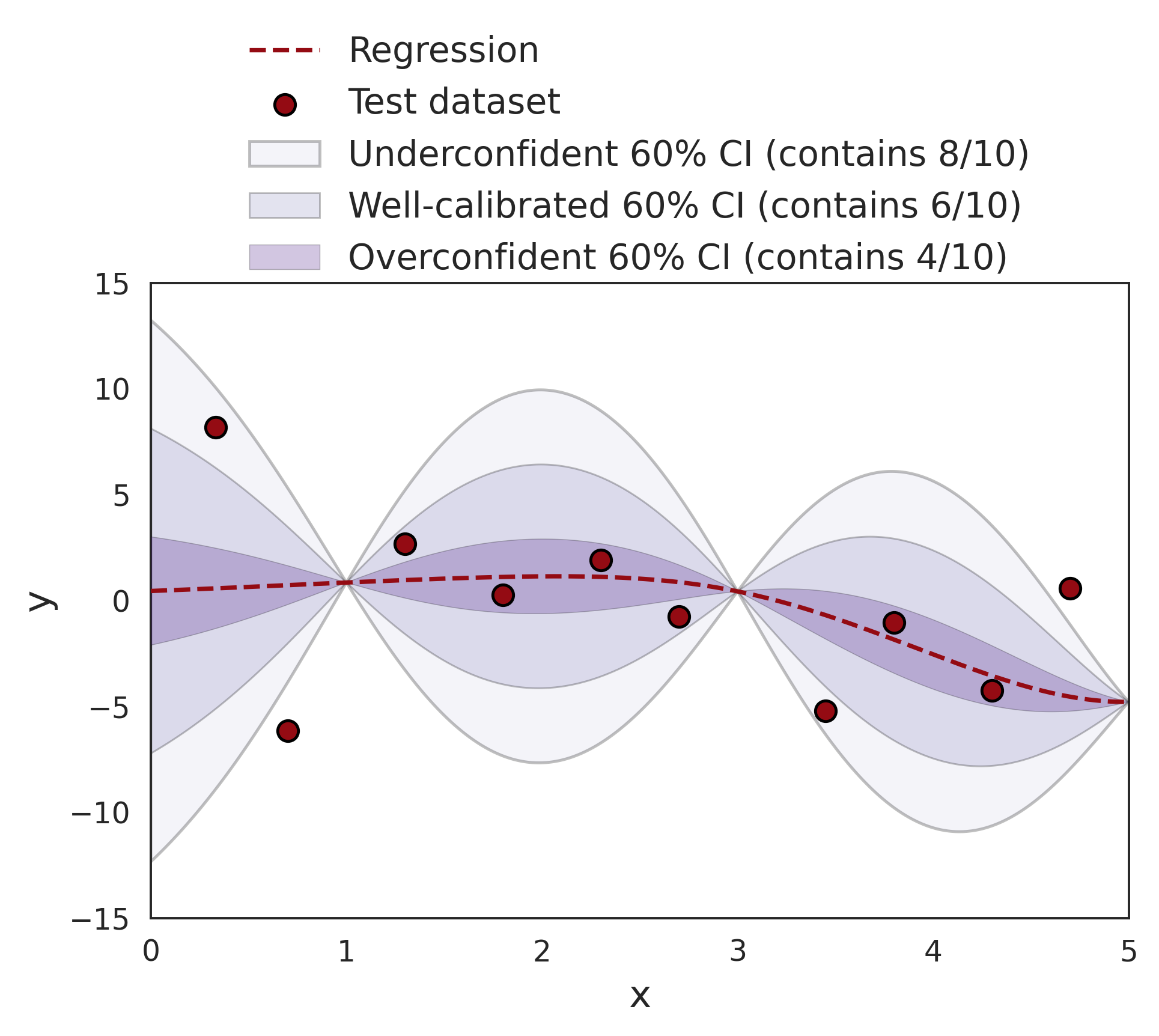}
        \caption{}\label{fig:AUCEa}
    \end{subfigure}
    \hspace{0.0\columnwidth}
    \begin{subfigure}[t]{0.4\textwidth}
        \centering
        \includegraphics[width=\linewidth]{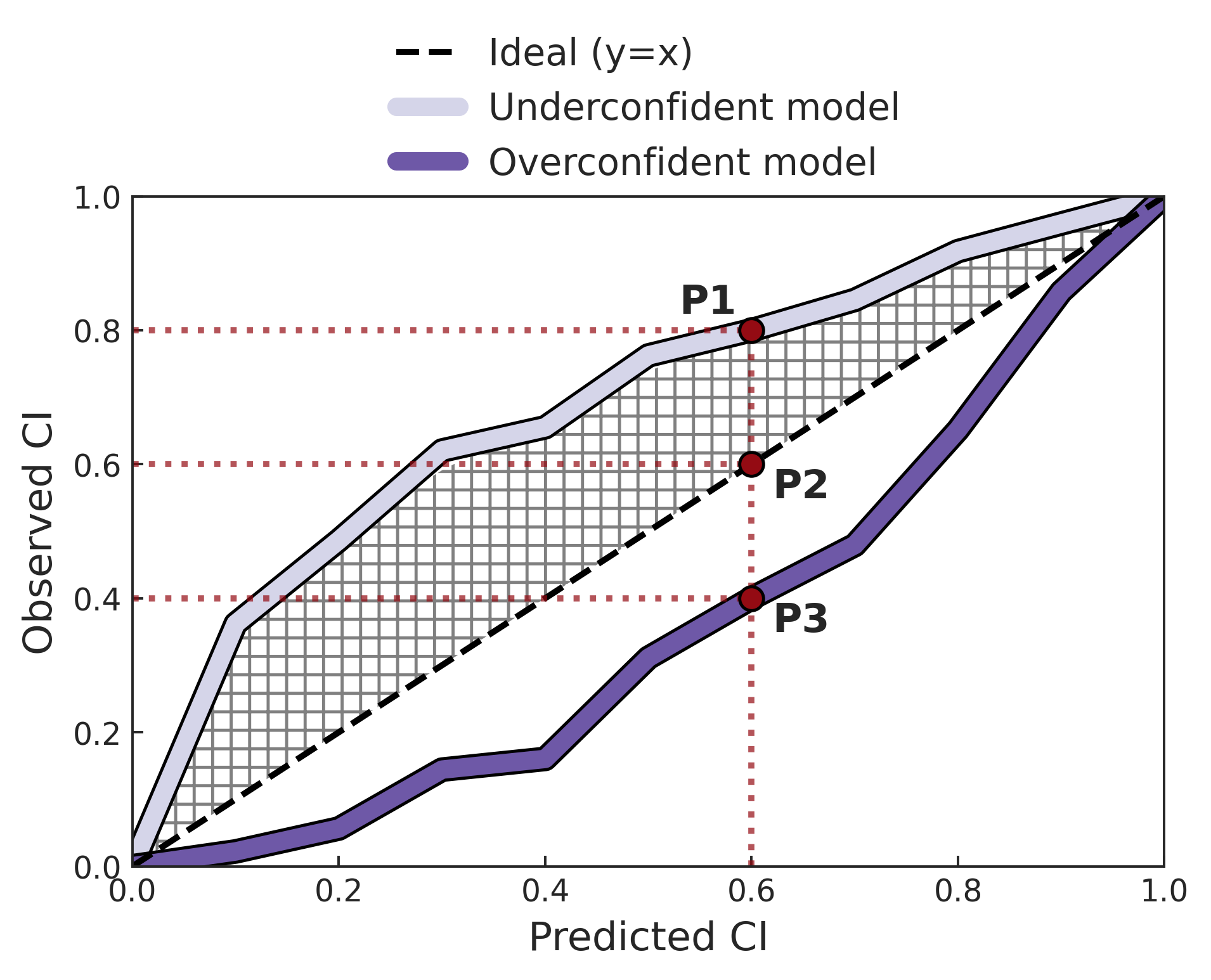}
        \caption{}\label{fig:AUCEb}
    \end{subfigure}\hspace{0.05\textwidth}
    \caption{(a) Illustration of well-calibrated/miscalibrated models: $60\%$ CI of the well-calibrated model contains $60\%$ of the test data, whereas that of the underconfident and overconfident model contains $80\%$ and $40\%$ of the data, respectively. (b) Illustration of CI-based reliability plot.}
    \label{fig:AUCE_info}
\end{figure*} 

The difference between the CI estimated by the model and the actual data it contains can be assessed visually by the CI-based reliability plot shown in Fig. \ref{fig:AUCEb}.  This plot compares the predicted CI from the model on the x-axis with the observed CI measured with the test dataset on the y-axis. To clarify, consider the situation depicted in Fig. \ref{fig:AUCEa}. In the underconfident case, which corresponds to point P1 (x=0.6, y=0.8), the predicted 60\% CI actually corresponds to the observed 80\% CI because 8 out of 10 points are included. Point P2 represents the well-calibrated case, where the predicted 60\% CI by the model matches the actual 60\% of data contained in the CI. In contrast, point P3 represents the overconfident case, where the model predicts a 60\% CI that actually contains only 40\% of the data. In this context, the line $y=x$ represents an ideally well-calibrated model where the predicted CI perfectly matches the observed CI. The algorithm for the CI-based reliability plot is summarized in Algorithm \ref{alg:CI_relia}.

\begin{algorithm}[htb!]
\caption{Procedure for CI-based reliability plot}\label{alg:CI_relia}
\begin{algorithmic}[1]
\State Prepare the test dataset $X$ (with input $x$ and output $y$).
\State Define candidates of CI to be investigated: $P = \{p_1,p_2,...,p_K\}$.
\State {$D = \varnothing$} \Comment{Initialize dataset $D$ to be plotted as y-axis}
\For{$i = 1 : K$} \Comment{Loop for $P$}
    \State {$count = 0$} \Comment{Initialize $count$}
    \State Find $Q(\dfrac{p_i+1}{2}|\mu,\sigma^2)$, which is $\dfrac{p_i+1}{2}$ quantile of $N(\mu,\sigma^2)$.
    \For{$j = 1 : length(X)$} \Comment{Loop for $X$}
        \If{$-Q\bigl(\dfrac{p_i+1}{2}|\mu(x_j),\sigma^2(x_j)\bigr) \leq y_j \leq Q\bigl(\dfrac{p_i+1}{2}|\mu(x_j),\sigma^2(x_j)\bigr)$}
            \State $count += 1$ \Comment{Increase $count$ if test data is within the estimated CI}
        \EndIf
    \EndFor
    \State $\hat{p} = count / length(X)$ \Comment{Calculate observed CI}
    \State $D = D \cup \hat{p}$ \Comment{Append $\hat{p}$ to $D$}
\EndFor
\State Plot CI-based reliability plot: x-axis with $P$ and y-axis with $D$.
\end{algorithmic}
\end{algorithm}

By utilizing this CI-based reliability plot, the AUCE, which is a metric that evaluates the quality of the estimated uncertainty, can be derived. In detail, it is calculated as the area between the ideal line $y=x$ and the reliability plot of the model. The hatched area in Fig. \ref{fig:AUCEb} corresponds to the AUCE of the underconfident model, and the mathematical expression for the AUCE is provided in the following equation \citep{gustafsson2020evaluating}:

\begin{equation}\
\label{eq:AUCE}
\mathrm{AUCE} = \frac{1}{K}\sum\limits_{i=1}^K\lvert\hat{p}-p_i\rvert
\end{equation}
where $K$ refers to the number of CI candidates as in Algorithm \ref{alg:CI_relia}. By definition, a low AUCE value implies that the predictive uncertainty quantified by the model is reliable (or well-calibrated). Additional information on AUCE can be found in \citet{naeini2015obtaining}, \citet{gustafsson2020evaluating}, and \citet{scalia2020evaluating}.

\subsubsection{\label{sec:ENCE} ENCE}

Despite its reputation as a metric of uncertainty quality, AUCE has a critical shortcoming in that it only considers the average over the entire test dataset rather than individuals as mentioned by \citet{levi2022evaluating}. Moreover, they analytically and empirically elaborated that AUCE can be zero even when the predicted distribution is statistically independent from that of the ground truth. In this context, they proposed a novel approach to evaluate the quality of uncertainty, the expected normalized calibration error (ENCE). It was first proposed based on the intuitive assumption: for the well-calibrated model, the estimated uncertainty $\sigma^2(x)$ will be equal to $\bigl(y-\mu(x)\bigr)^2$, MSE. This condition can be expressed mathematically as follows, implying that a higher estimated variance should correspond to a higher expected MSE \citep{phan2018calibrating}:
\begin{equation}\
\label{eq:ENCE_int}
\mathbb{E}_{x,y}[\bigl(y-\mu(x)\bigr)^2 | \sigma^2(x)] = \sigma^2(x)
\end{equation}
The above Eq. \ref{eq:ENCE_int} indicates that the ideally (perfectly) well-calibrated model will have an expected error exactly equal to predictive uncertainty. In this sense, whether the model is well-calibrated can be visually inspected using the error-based reliability plot \citep{scalia2020evaluating,levi2022evaluating}: x-axis as root mean squared error (RMSE), $y-\mu(x)$, and y-axis as root of the mean variance (RMV), $\sigma(x)$. Fig. \ref{fig:ENCE_info} illustrates it, and by its definition in Eq. \ref{eq:ENCE_int}, $y=x$ line indicates the ideally calibrated model. The procedure for its plotting is summarized in Algorithm \ref{alg:err_relia}. 

\begin{figure*}[htb!]
    \centering
            \includegraphics[width=.4\textwidth]{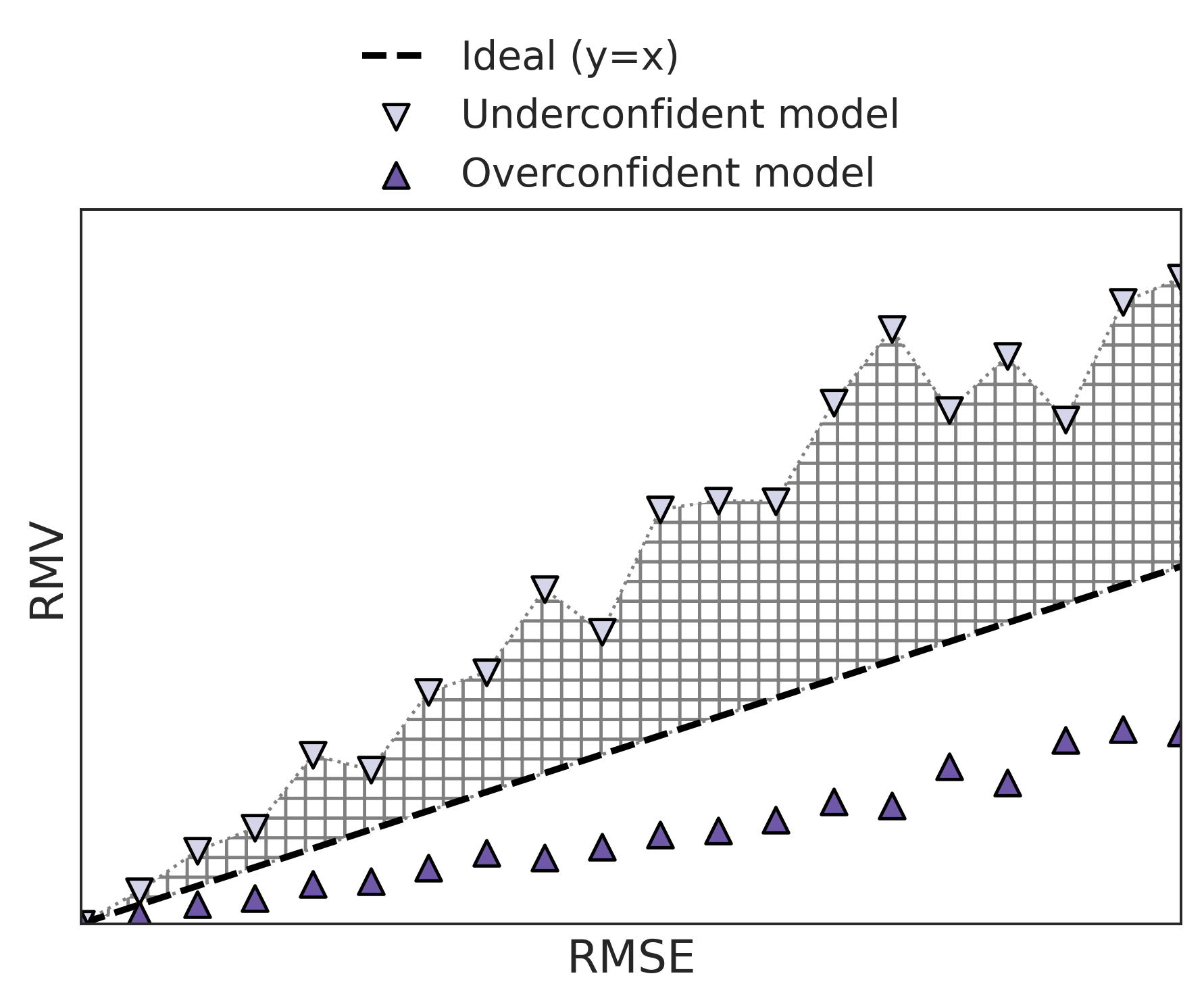}
    \caption{Illustration of error-based reliability plot. Underconfident model overestimates RMV relative to RMSE, while overconfident model underestimates RMV. The ideal model estimates the equivalent RMV and RMSE as the $y=x$ black dashed line.}
    \label{fig:ENCE_info}
\end{figure*} 

\begin{algorithm}[htb!]
\caption{Procedure for error-based reliability plot}\label{alg:err_relia}
\begin{algorithmic}[1]
\State Prepare the test dataset $X$ (with input $x$ and output $y$).
\State Sort $X$ according to $y$ values.
\State Define the number of bins: $B$ (assume $B$ divides $length(X)$).
\State Divide sorted $X$ into $B$ bins, $\tilde{X}=\{\tilde{X}_1,\tilde{X}_2,...,\tilde{X}_B\}$, such that each $\tilde{X}_i$ has the same size of $length(X)/B$.
\State {$D_{RMSE} = \varnothing$} \Comment{Initialize dataset $D_{RMSE}$ to be plotted as x-axis}
\State {$D_{RMV} = \varnothing$} \Comment{Initialize dataset $D_{RMV}$ to be plotted as y-axis}
\For{$i = 1 : B$} \Comment{Loop for $\tilde{X}$}
    \State $D_{RMSE} = D_{RMSE} \cup \sqrt{\dfrac{1}{\lvert \tilde{X}_i \rvert}\sum\limits_{x \in \tilde{X}_i} {\bigl(y(x) - \mu(x)\bigr)^2}}$ \Comment{Append RMSE to $D_{RMSE}$}
    \State $D_{RMV} = D_{RMV} \cup \sqrt{\dfrac{1}{\lvert \tilde{X}_i \rvert}\sum\limits_{x \in \tilde{X}_i} \sigma^2(x)}$ \Comment{Append RMV to $D_{RMV}$}
\EndFor
\State Plot error-based reliability plot: x-axis with $D_{RMSE}$ and y-axis with $D_{RMV}$.
\end{algorithmic}
\end{algorithm}

\clearpage
Then, the area between the ideal $y=x$ line and the error-based reliability plot can be calculated. The normalized version of this value refers to ENCE, the second uncertainty quality metric, and is as follows: 
\begin{equation}\
\label{eq:ENCE}
\mathrm{ENCE} = \frac{1}{B}\sum\limits_{i=1}^B \frac{{\lvert\mathrm{RMV}(i)-\mathrm{RMSE}(i)\rvert}}{\mathrm{RMV}(i)}
\end{equation}
where $B$ indicates the number of bins in Algorithm \ref{alg:err_relia}. Therefore, the ENCE of the underconfident model in Fig. \ref{fig:ENCE_info} can be calculated as the hatched area divided by RMV. As with AUCE, the lower the ENCE value, the better the model is calibrated. 

\subsection{\label{sec:Calib} Uncertainty calibration: STD scaling}

In situations where the estimated uncertainty from the model is imprecise in terms of AUCE (refer to Section \ref{sec:AUCE}) and ENCE (refer to Section \ref{sec:ENCE}), there are various techniques for calibrating uncertainty. Some of these methods include histogram binning \citep{zadrozny2001obtaining}, isotonic regression \citep{zadrozny2002transforming}, and temperature scaling \citep{guo2017calibration}. The first two techniques are non-parametric, and therefore, the number of parameters utilized is dependent on the training dataset size. Conversely, temperature scaling is a parametric approach that needs a fixed number of parameters.

Given the practicality being a crucial consideration in applying UQ techniques to the engineering domain, this research adopts a straightforward approach: temperature scaling. More specifically, the study employs STD scaling, which is a regression task version of temperature scaling \citep{levi2022evaluating}. With STD scaling, it is only necessary to determine a scalar parameter, denoted as $s$, which is used to multiply the standard deviation initially estimated by the DE model, $\hat{\sigma}$. The value of $s$ used in the calibration process is selected to minimize the NLL, as shown below:
\begin{equation}\
\label{eq:NLL_cal}
s=\underset{s}{\operatorname{argmin}}(\cfrac{\mathrm{log}\bigl({s\hat{\sigma}(x)}\bigr)^2}{2}+\cfrac{\bigl(y-\hat{\mu}(x)\bigr)^2}{2\bigl(s\hat{\sigma}(x)\bigr)^2}+\cfrac{\mathrm{log}2\pi}{2}),
\end{equation}
Please note that this equation is the simple modification of Eq. \ref{eq:NLL}, where $\sigma(x)$ is replaced by $s\hat{\sigma}(x)$. This calibration procedure is completely separate from the training procedure of DE; it is performed after the mixture step in Fig. \ref{fig:DE_struct}, so it is called the post-hoc or post-process calibration method. It should be emphasized that the model parameters (weights and biases in the NN model) remain unchanged throughout the calibration process. The STD scaling method is intuitively explained as follows: if the estimated uncertainty from the trained model $\bigl(\hat{\sigma}(x)\bigr)$ is poorly calibrated, the calibrated version of the uncertainty $s\hat{\sigma}(x)$ is used in its place. It is important to note that this calibration process is intended solely to correct the estimated uncertainty, and therefore, only the output $\hat{\sigma}(x)$ of the DE changes, while the predictive value $\hat{\mu}(x)$ remains unaltered. The steps involved in the STD calibration process are outlined in Algorithm \ref{alg:calib}. For calibration, a separate dataset should be used that is distinct from the training and test datasets to ensure calibration generalization \citep{levi2022evaluating}, and therefore, a validation dataset is utilized for the calibration. In multi-output regression tasks, every DE output can be calibrated independently using the number of scaling parameters $s$ equal to the output dimension (this is implemented by the for-loop in line 3 of Algorithm \ref{alg:calib}). In conclusion, this study uses a straightforward STD calibration method for uncertainty calibration, which involves tuning scalar parameters without modifying trained NNs.

\begin{algorithm}[htb!]
\caption{STD calibration procedure}\label{alg:calib}
\begin{algorithmic}[1]
\State Prepare calibration dataset $X$ (with input $x$ and output $y$).
\State Define candidates of scaling factor: $S$
\For {$i = 1 : length(y)$} \Comment{Loop for output dimension of DE}
    \State $s_i=\underset{s \in S}{\operatorname{argmin}}(\cfrac{\mathrm{log}{\bigl(s\hat{\sigma_i}(x)\bigr)^2}}{2}+\cfrac{\bigl(y_i-\hat{\mu_i}(x)\bigr)^2}{2\bigl(s\hat{\sigma_i}(x)\bigr)^2}+\cfrac{\mathrm{log}2\pi}{2})$
\EndFor
\State Utilize $s_i$ to calibrate estimated uncertainty over $i$th output \Comment{Use $s_i\hat{\sigma_i}$ in lieu of $\hat{\sigma_i}$.}
\end{algorithmic}
\end{algorithm}

\section{Application of DE to aerodynamic performance regression task}
\label{sec:DE_missile}

This section applies the DE method to a real-world engineering problem of predicting aerodynamic coefficients for a specific missile configuration with varying flow conditions. It aims to validate the performance of DE in multi-output regression tasks since no comprehensive study has been conducted on this topic. The section evaluates both the regression and uncertainty estimation performance of DE and investigates the impact of $M$, the number of NNs used for the ensemble.

\subsection{\label{sec:pre_missile} Data preparation and training details}

As an engineering problem, the present study adopts the prediction of six aerodynamic coefficients for a particular missile configuration, specifically ``Configuration 1'' described in the NASA TM-2005-213541 report \citep{allen2005aerodynamics}. The Missile Datcom \citep{blake1998missile} low-fidelity semi-empirical solver is then utilized to compute the coefficients for its configuration, given following five flow conditions: $Ma \in [1.1, 3]$, $\phi \in [-90^\circ, 0^\circ]$ (roll angle), $\delta{p} \in [-20^\circ, 20^\circ]$ (pitch control fin deflection angle), $\delta{r} \in [-10^\circ, 0^\circ] $ (roll control fin deflection angle), and $AoA \in [-3^\circ, 23^\circ]$. The resulting aerodynamic coefficients are $C_{NF}$ (normal force coefficient), $C_{AF}$ (axial force coefficient), $C_{PM}$ (pitching moment coefficient), $C_{RM}$ (rolling moment coefficient), $C_{YM}$ (yawing moment coefficient), and $C_{SF}$ (side force coefficient). Subsequently, 9800 points are obtained by full-factorial sampling in the input space and then split into train, validation, and test datasets in the ratio of 8:1:1. The training dataset is utilized to train DE and GPR models, and the validation dataset is used to perform hyperparameter tuning in this section and STD calibration in Section \ref{sec:UQ_calib}, and the test dataset is used for regression and UQ performance evaluation in Sections \ref{sec:pred_results} and \ref{sec:UQ_results}.

After obtaining the dataset, the next step is to determine the structure of the probabilistic NN to be used for the ensemble: following the work by \citet{lakshminarayanan2017simple}, the probabilistic NNs with identical architectures are used for ensembling in this study. To this end, grid search is carried out with hyperparameters regarding the network architecture and the diversity between NNs, such as the number of layers, number of nodes, and size of the mini-batch. Other hyperparameters such as the optimizer algorithm, initial learning rate, and total epochs are selected as Adam, $10^{-3}$, and 13000, respectively. The results of the tuning are available in \ref{sec:app_hyp_DE}. Based on NLL and RMSE, a probabilistic NN with 7 hidden layers and 128 nodes is selected, and the mini-batch size is set to 512. Subsequently, different values of the hyperparameter $M$ (2, 4, 8, and 16) are adopted, with each corresponding DE model referred to as DE-2, DE-4, DE-8, and DE-16 in this manuscript. That is, for DE-16, 16 probabilistic NNs with 7 hidden layers and 128 nodes are trained with a mini-batch size of 512, sharing identical hyperparameters with other DE models except $M$. 

GPR models with different kernels are also trained for their hyperparameter tuning. To this end, Mat\'ern 5/2, radial basis function, rational quadratic, and dot-product kernels are examined \citep{williams2006gaussian}, and their results also can be found in \ref{sec:app_hyp_GPR}. In addition, not only single-output GPR models are tested, but also the multi-output GPR (MOGPR) with radial basis function kernel is trained for more comprehensive comparison with DE \citep{williams2006gaussian, alvarez2012kernels, lin2021multi, lin2022gradient}. Among them, GPR with Mat\'ern 5/2 kernel shows the best performance, and is therefore selected for the comparison with DE throughout this paper. The required training times for DE with selected NN architecture (7 hidden layers, 128 nodes, minibatch size of 512) and GPR (Mat\'ern 5/2 kernel) models using Intel(R) Xeon(R) CPU @ 2.20GHz are as follows: 10.9 hours for GPR, 2.4 hours for DE-2, 5 hours for DE-4, 9.7 hours for DE-8, and 19.4 hours for DE-16. Note that GPR requires more training time than DE for both the hyperparameter tuning and final model training.

\clearpage
\subsection{\label{sec:pred_results} Evaluation of regression performance}

In this section, the regression performances of all selected models are presented using a test dataset that is not used in model training. Before going into details, DE-2 (which required the least training time among the DE models) is compared with GPR to highlight the efficiency of the DE models. Fig. \ref{fig:kde_plot} shows the results of kernel density estimation (KDE), which demonstrates the generalization performance of the models by visualizing the distributions of the test data in terms of NLL and RMSE (those of all six QoIs are averaged to be shown in this figure). For both criteria, the obvious superiority of DE-2 can be identified: most of the test data is concentrated in the lower error region in DE-2. More specifically, the KDE of NLL shows that the density peak of DE-2 represented by a star with long dashed line is located at NLL of -4.6, while that of GPR is located at -3.1. When it comes to RMSE, the peak of DE-2 is at RMSE of 0.003 while GPR is at 0.09. The medians of the error metrics are also shown as circles with dotted lines. For both metrics, those of DE-2 are much lower than those of GPR, indicating that DE-2 performs better than GPR overall. The most interesting point here is that although DE-2 requires only 22\% of the training time of GPR, it achieves superior regression accuracy.

\begin{figure}[htb!]
    \centering
    
        \includegraphics[width=.45\columnwidth]{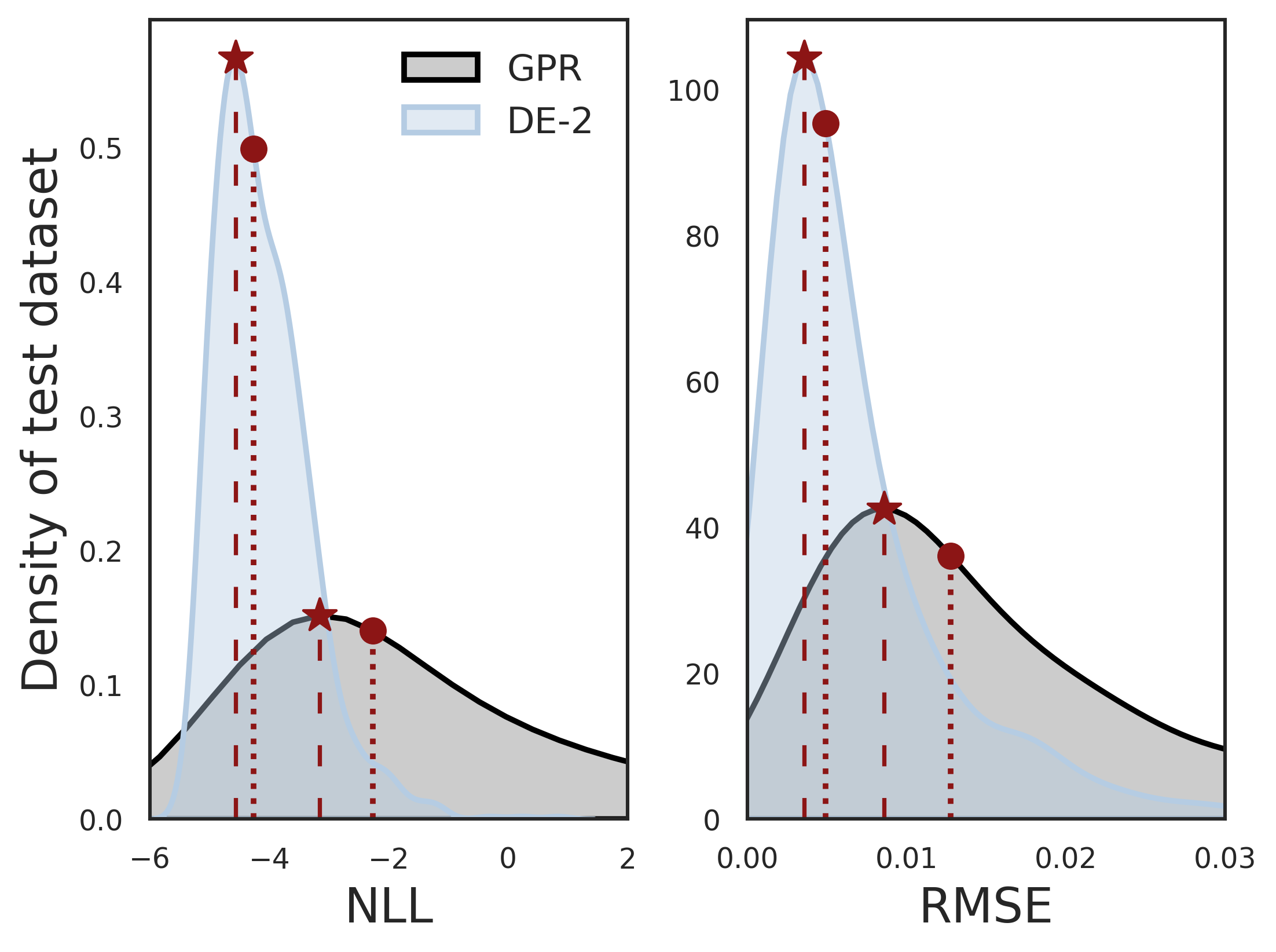}
        
    \caption{Comparison of regression accuracy between GPR and DE-2: kernel density estimation (KDE) of test dataset with respect to NLL and RMSE (averaged values of all six QoIs). The stars and circles represent the maximum and median points of each model, respectively.}
    \label{fig:kde_plot}
\end{figure} 

Fig. \ref{fig:pred_compar} provides the comprehensive results of the regression performance. Fig. \ref{fig:pred_compar_a} shows the NLL results of all models with respect to the six aerodynamic QoIs, and their averaged NLL is also shown at the right end. Throughout all QoIs, GPR shows inferior regression accuracy than all other DE models. The results on NLL could be expected as each NN in the DE model is trained to minimize NLL. However, the results on RMSE in Fig. \ref{fig:pred_compar_b} are highly inspiring: they also achieve higher regression accuracy even in terms of RMSE. Considering that numerous engineers use RMSE to evaluate regression models, the fact that the average RMSE of DE models is less than half that of GPR is quite encouraging. Also, contrary to the claim that DE-5 would be sufficient in the work first proposed DE approach \citep{lakshminarayanan2017simple}, DE-2 seems to be sufficient enough in this study, at least in terms of predictive accuracy: the difference in their values between all DE models is insignificant. However, the conventional belief is that the more models used in the ensemble, the more accurate the prediction will be due to the robustness that comes from averaging multiple predictions. The underlying reason for this counter-intuitive result (that is, insignificant differences in predictive accuracy as $M$ increases) is thought to be the insufficient diversity within individual models due to the strategy adopted by DE: identical dataset and model architecture \citep{lakshminarayanan2017simple}. However, note that blindly ensuring excessive diversity by using different datasets and model architectures should be done with caution, since it can degrade UQ performance (which will be shown in \textbf{Remark 2} of Section \ref{sec:math}). In this regard, a trade-off study between the predictive accuracy and uncertainty quality as the diversity varies within individual NNs may be an interesting future work.

\begin{figure*}[htb!]
    \centering
    \begin{subfigure}[h]{0.45\textwidth}
        \centering
        \includegraphics[width=\linewidth]{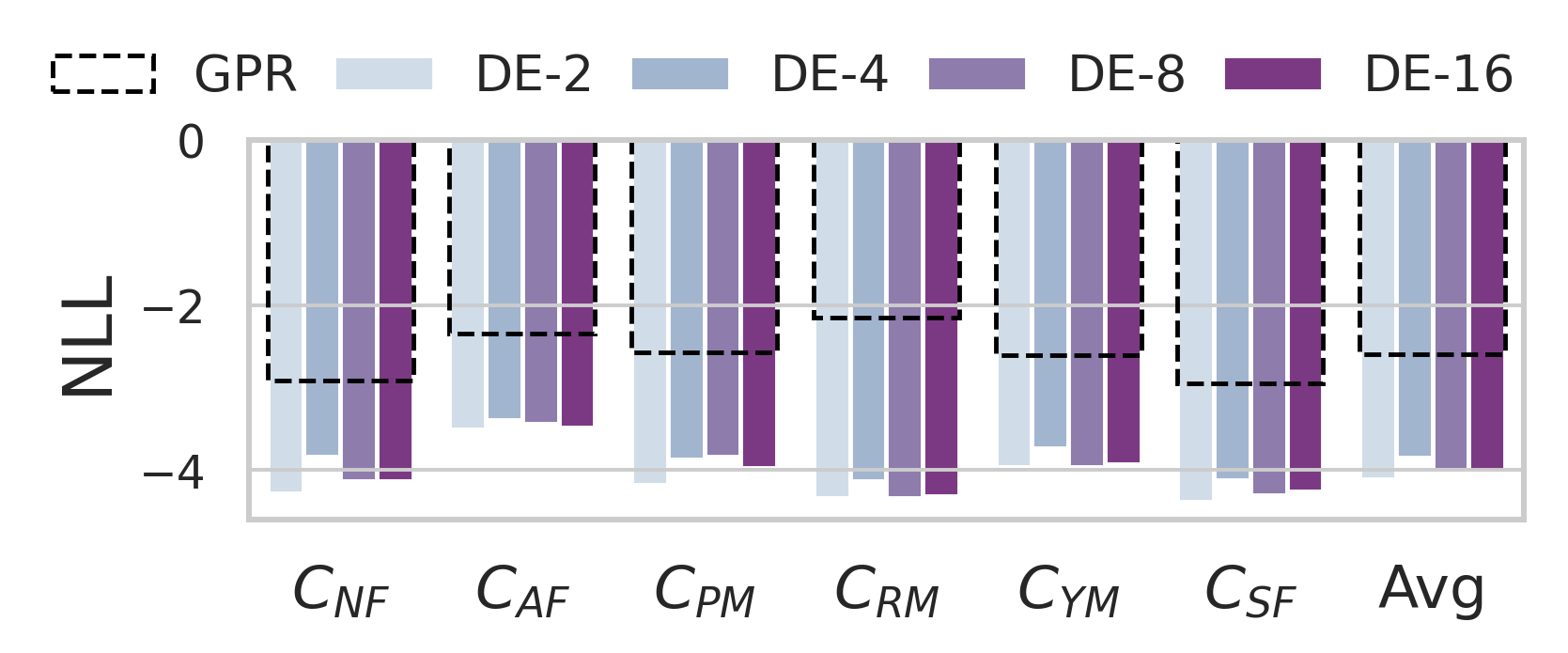}
        \caption{}\label{fig:pred_compar_a}
    \end{subfigure}
    \vfill
    \begin{subfigure}[h]{0.45\textwidth}
        \centering
        \includegraphics[width=\linewidth]{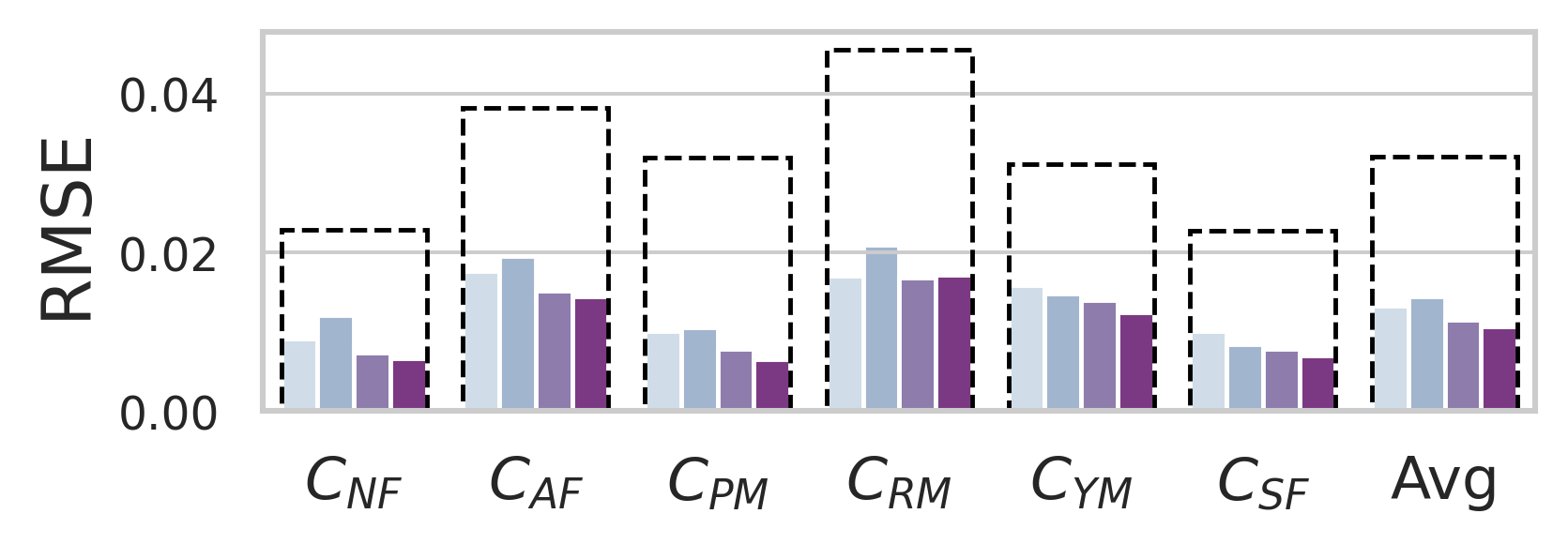}
        \caption{}\label{fig:pred_compar_b}
    \end{subfigure}
    \caption{Comparison of regression accuracy between GPR and all DE models: comprehensive results in terms of all aerodynamic QoIs. (a) NLL, (b) RMSE.}
    \label{fig:pred_compar}
\end{figure*}

\subsection{\label{sec:UQ_results} Evaluation of UQ performance}

This section examines the quality of the predictive uncertainty, using AUCE and ENCE criteria for the quantitative investigation. For this purpose, reliability plots should be drawn first, using the test dataset split in Section \ref{sec:pre_missile} (dataset size of 980). Also, as in Algorithm \ref{alg:CI_relia}, CI-based reliability plots require the set of CI candidates ($P$) and error-based reliability plots in Algorithm \ref{alg:err_relia} need the number of bins ($B$). In this study, $P = \{0.1, 0.2, ..., 0.9\}$ and $B = 20$ are chosen.

Fig. \ref{fig:reliab_GPR} shows the results of GPR, and it appears that GPR has a satisfactory uncertainty quality with respect to the error-based reliability plot (Fig. \ref{fig:reliab_GPR_b}), while the CI-based plot (Fig. \ref{fig:reliab_GPR_a}) shows relatively poor quality. In a CI-based plot, since the predicted CI (x-axis) is underestimated compared to the actual observed CI (y-axis), it can be inferred that GPR is trained to be ``underconfident'': it is underconfident itself, so it overestimates its uncertainty.

\begin{figure}[htb!]
    \centering
    \begin{subfigure}[h]{0.4\textwidth}
        \centering
        \includegraphics[width=\linewidth]{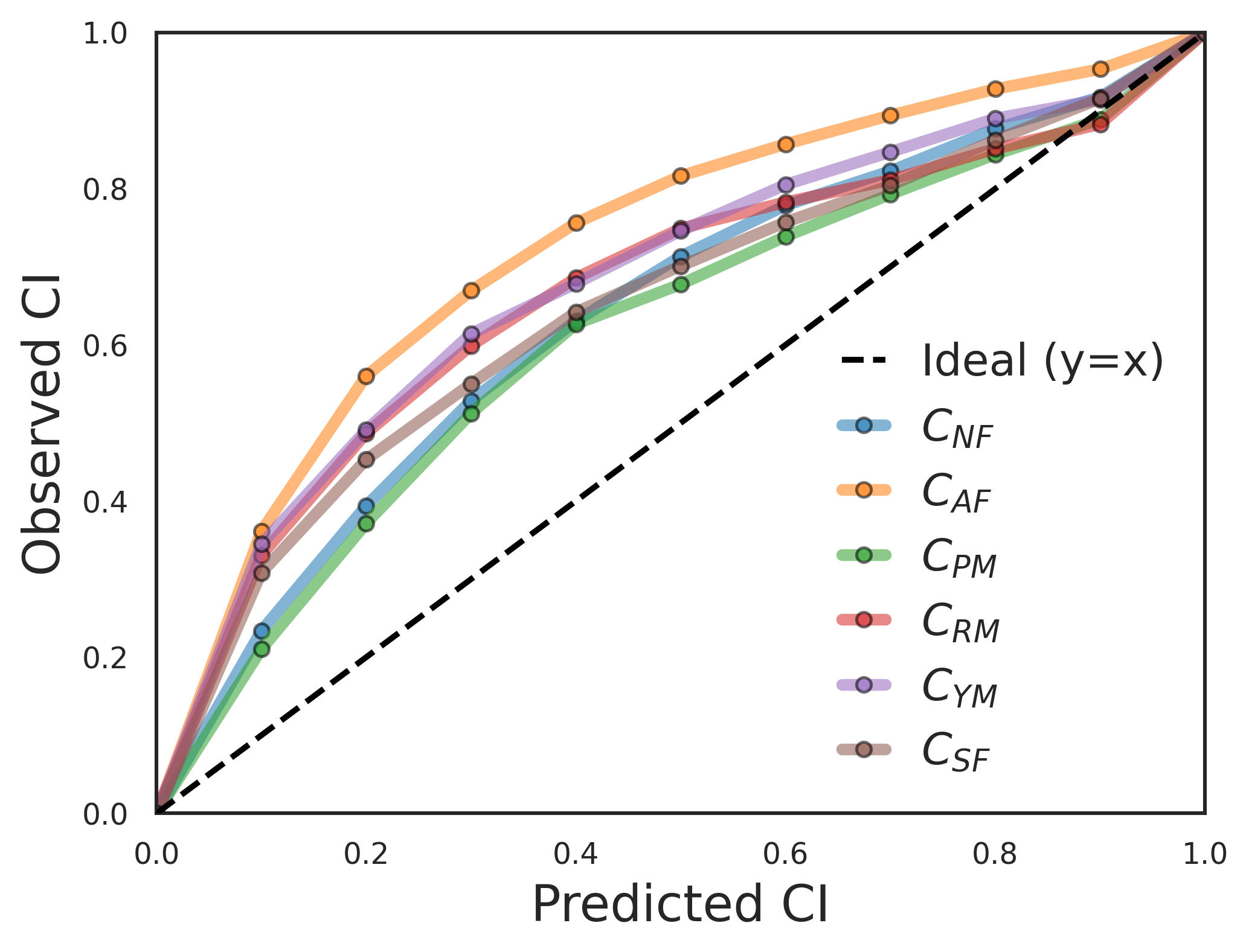}
        \caption{}\label{fig:reliab_GPR_a}
    \end{subfigure}
    \hspace{0.0\columnwidth}
    \begin{subfigure}[h]{0.4\textwidth}
        \centering
        \includegraphics[width=\linewidth]{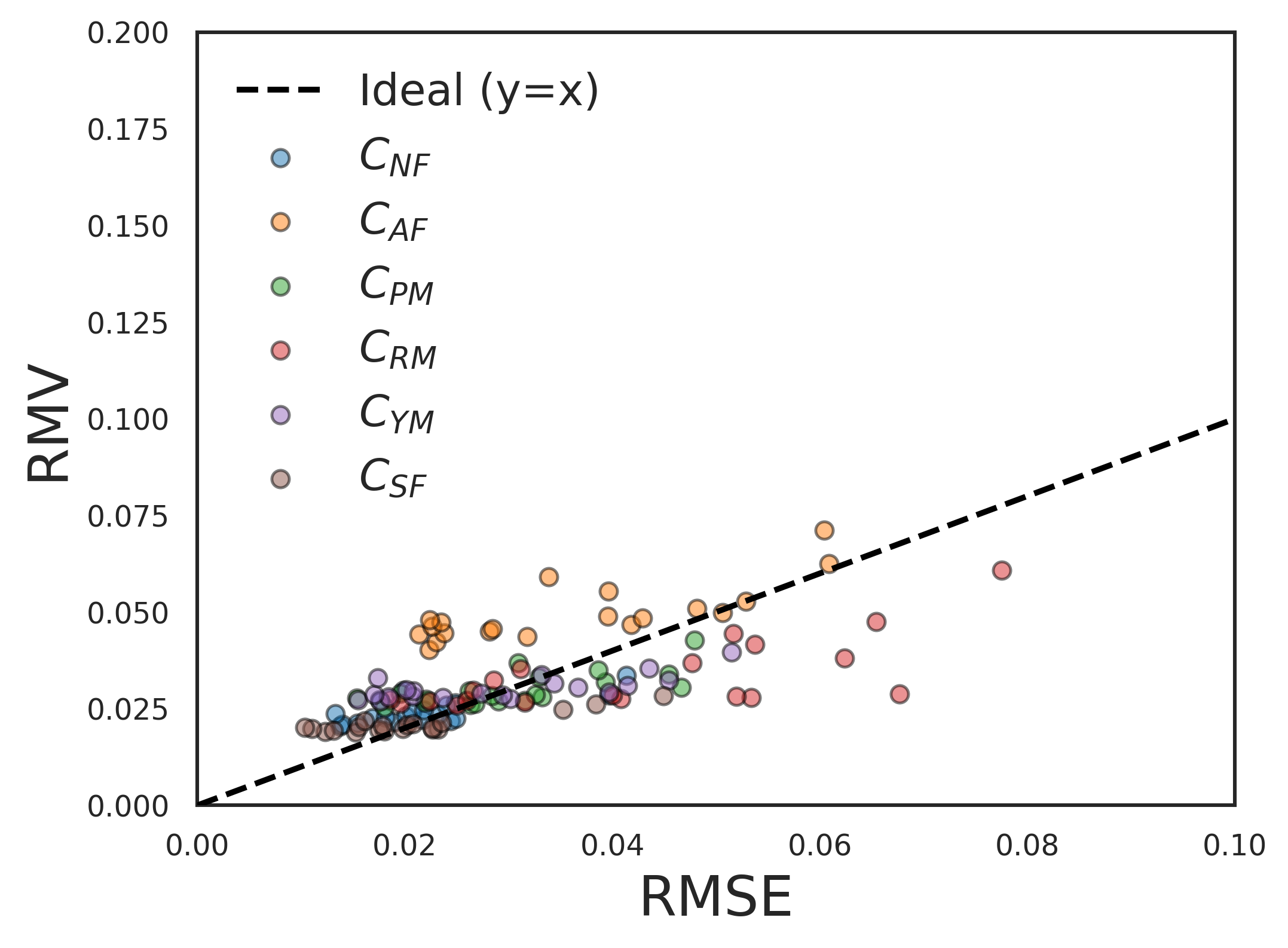}
        \caption{}\label{fig:reliab_GPR_b}
    \end{subfigure}
    \caption{Reliability plots of GPR: (a) CI-based reliability plot, (b) Error-based reliability plot.}\label{fig:reliab_GPR}
\end{figure} 

The results of the DE models are then shown in Fig. \ref{fig:reliab_DE_bef}. Note that unlike GPR in Fig. \ref{fig:reliab_GPR}, only the results of output $C_{SF}$ are visualized to highlight the differences between DE models: comprehensive results can be found in Fig. \ref{fig:reliab_DE_comprehensive} in \ref{sec:app_UQ_bef}. For DE-2, the CI-based plot (Fig. \ref{fig:reliab_DE_bef_a}) shows a similar trend to that of GPR, while the error-based plot (Fig. \ref{fig:reliab_DE_bef_b}) shows slightly better quality. Meanwhile, a notable trend is observed along the increase of $M$: as it increases, the uncertainty quality with respect to both reliability plots apparently degrades. More specifically, both types of plots move upward away from the $y=x$ ideal line as $M$ increases, indicating that DE models tend to become ``underconfident''. Considering that DE-16 requires about 8 times as much training time as DE-2, it can be confirmed that using large $M$ values for the ensemble does not necessarily lead to better results, but rather the opposite in terms of uncertainty quality. In this context, assuming that the performance of DE-5 will be between DE-4 and DE-8, it can be inferred that using $M=5$ as suggested by \citet{lakshminarayanan2017simple} does not guarantee sufficient UQ quality in this case. In fact, the insight behind this underconfident tendency when ensembling networks in classification tasks can be found in \citet{rahaman2021uncertainty}, while the corresponding tendency in regression has not been proven. Accordingly, in the next section, we provide the mathematical explanation for this underconfident tendency in regression tasks.

\begin{figure}[htb!]
    \centering
    \begin{subfigure}[h]{0.45\textwidth}
        \centering
        \includegraphics[width=\linewidth]{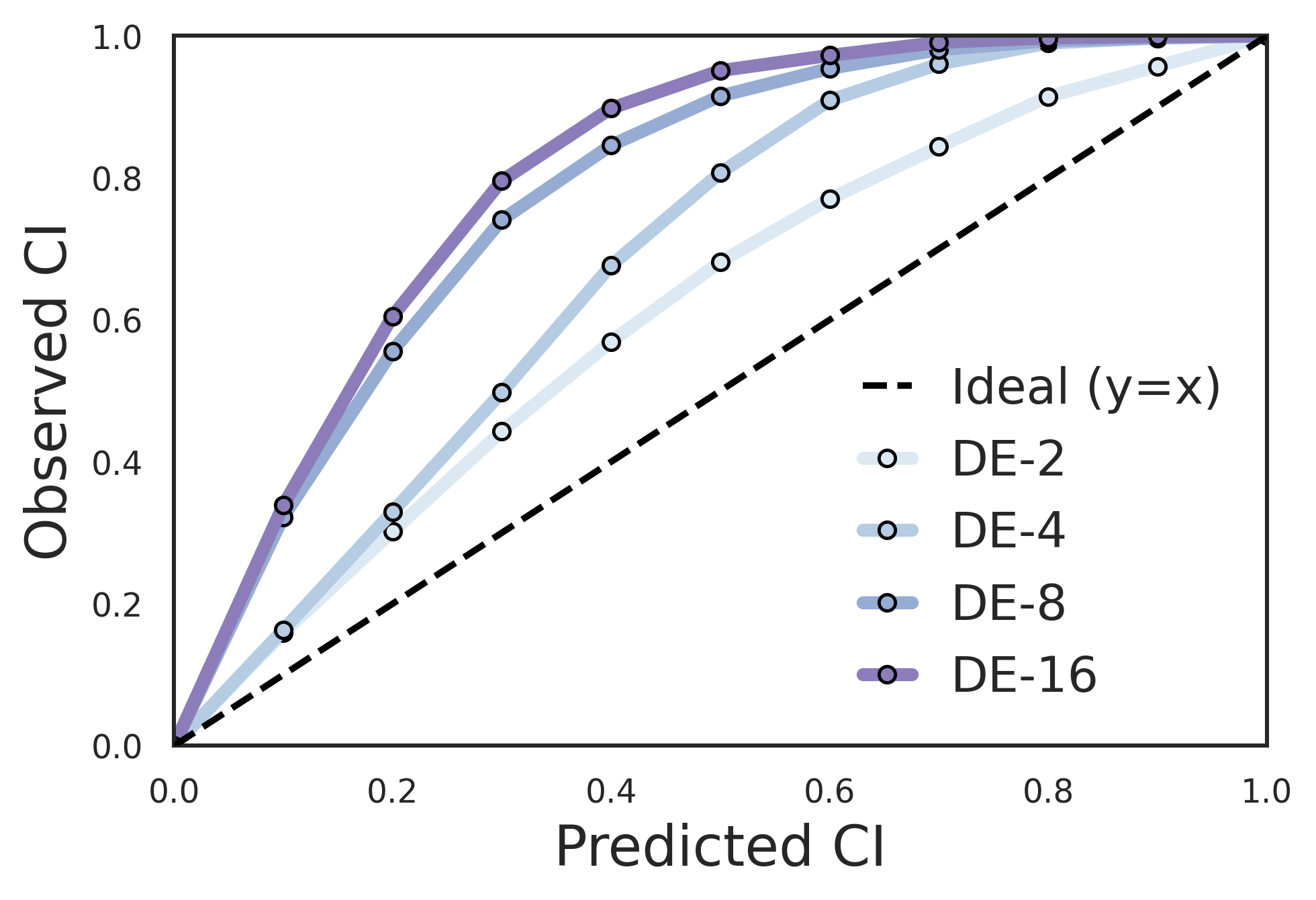}
        \caption{}\label{fig:reliab_DE_bef_a}
    \end{subfigure}
    \hspace{0.0\columnwidth}
    \begin{subfigure}[h]{0.45\textwidth}
        \centering
        \includegraphics[width=\linewidth]{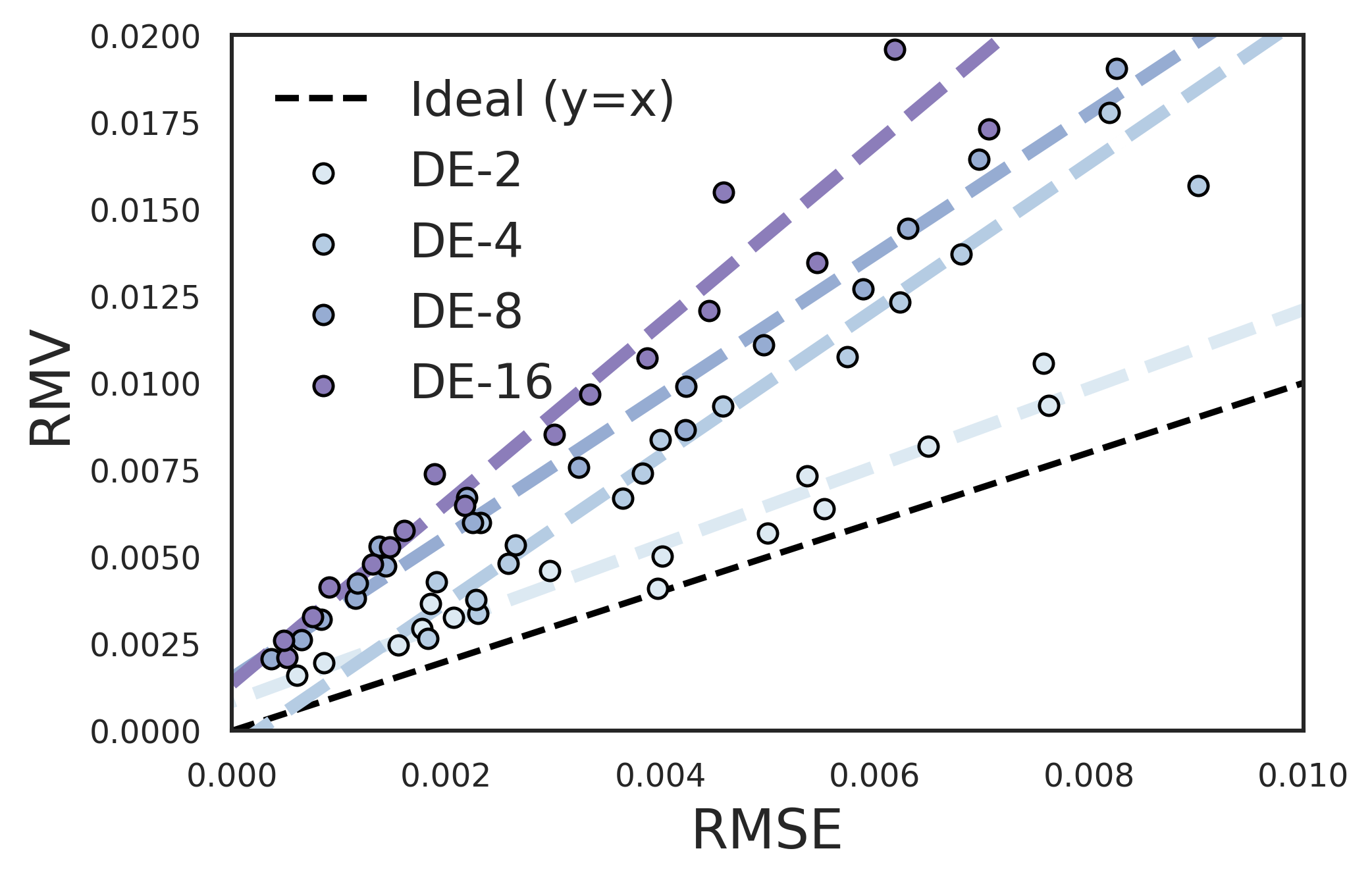}
        \caption{}\label{fig:reliab_DE_bef_b}
    \end{subfigure}
    \caption{Reliability plots of DE: for simplicity, only the $C_{SF}$ results of different DE models are shown. (a) CI-based reliability plot, (b) Error-based reliability plot. In (b), to clearly show the decreasing tendency of UQ quality with increasing $M$, the linear regression model of the scatter points of each DE model is shown as a dashed line with the corresponding color.}\label{fig:reliab_DE_bef}
\end{figure}

\clearpage
\subsection{\label{sec:math} Theoretical derivation: underconfidence of DE in regression tasks}

This section is for the mathematical derivation of why the ensemble of NNs becomes underconfident, as discovered in the previous section. For this purpose, the deviation from calibration (DC) score is introduced as in \citet{rahaman2021uncertainty}; their work focused only on classification tasks, so their DC score consisted of the Brier score and the entropic term. Meanwhile, since our work focuses on the regression task, we adopted the different DC score consisting of the MSE and predictive variance. In this context, the following proposition and its proof can be considered as one of the contributions of this paper.

\begin{prop}
When DC score is defined as follows,

\begin{equation}
DC(\mu, \sigma) \equiv (y - {\mu})^2 - {\sigma}^2
\end{equation}

DC score of the ensemble becomes less than or equal to the averaged DC score of the individual NNs.

\begin{equation}
\label{eq:proposition}
DC(\hat{\mu}, \hat{\sigma}) \leq \cfrac{1}{M} \sum\limits_{i=1}^MDC(\mu_i, \sigma_i)
\end{equation}

\end{prop}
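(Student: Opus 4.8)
The plan is to substitute the mixture formulas for $\hat{\mu}$ and $\hat{\sigma}^2$ from Eq. \ref{eq:mixture_m} and Eq. \ref{eq:mixture_s} directly into both sides of the claimed inequality and then simplify until only an elementary inequality remains. First I would write the left-hand side explicitly as
\[
DC(\hat{\mu}, \hat{\sigma}) = (y - \hat{\mu})^2 - \left( \frac{1}{M}\sum_{i=1}^M \sigma_i^2 + \frac{1}{M}\sum_{i=1}^M \mu_i^2 - \hat{\mu}^2 \right),
\]
while the right-hand side decomposes linearly into
\[
\frac{1}{M}\sum_{i=1}^M DC(\mu_i, \sigma_i) = \frac{1}{M}\sum_{i=1}^M (y - \mu_i)^2 - \frac{1}{M}\sum_{i=1}^M \sigma_i^2 .
\]

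The key observation I would then exploit is that the averaged aleatory term $\frac{1}{M}\sum_{i=1}^M \sigma_i^2$ appears identically on both sides and cancels, so the predictive variances play no role and the entire claim reduces to an inequality in the means $\mu_i$ alone. Expanding $(y-\mu_i)^2$ and using $\frac{1}{M}\sum_i \mu_i = \hat{\mu}$, the linear-in-$y$ contributions $y^2 - 2y\hat{\mu}$ match on both sides and cancel as well, collapsing the claim to
\[
\hat{\mu}^2 \leq \frac{1}{M}\sum_{i=1}^M \mu_i^2 ,
\]
which is exactly Jensen's inequality for the convex map $t \mapsto t^2$, equivalently the non-negativity of the sample variance of the ensemble means.

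To finish, I would make the gap explicit by computing that the difference between the right- and left-hand sides equals $2\bigl(\frac{1}{M}\sum_i \mu_i^2 - \hat{\mu}^2\bigr) = 2\,\mathrm{Var}(\mu_i) \geq 0$, which is twice the epistemic uncertainty term already identified in Eq. \ref{eq:mixture_s}. This yields both the inequality and its interpretation: the ensemble's DC score falls below the individual average by an amount proportional to the spread of the members' predictions, and since a larger DC score signifies overconfidence (error exceeding claimed variance), this reduction drives the ensemble toward underconfidence, precisely the empirical tendency this section sets out to explain.

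I anticipate no genuine obstacle here: the result is elementary once the mixture definitions are inserted, and the only care required is the bookkeeping in the two cancellations, in particular tracking the $\hat{\mu}^2$ contributions, which arise both from expanding $(y-\hat{\mu})^2$ and from the epistemic part of $\hat{\sigma}^2$. The real conceptual point, worth stating as a one-line remark, is that the entire gap is governed by $\mathrm{Var}(\mu_i)$ and is therefore nonnegative by convexity, independent of the data point $y$ and of the individual uncertainties $\sigma_i$.
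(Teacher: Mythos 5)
Your proposal is correct and follows essentially the same route as the paper's proof: both substitute the mixture formulas, exploit the cancellation structure coming from $\hat{\sigma}^2 = \frac{1}{M}\sum_i \sigma_i^2 + \mathrm{Var}(\mu_i)$, and arrive at the exact identity that the right-hand side exceeds the left-hand side by $2\,\mathrm{Var}(\mu_i) \geq 0$. The only cosmetic difference is organizational---you subtract the two sides and reduce to Jensen's inequality $\hat{\mu}^2 \leq \frac{1}{M}\sum_i \mu_i^2$, whereas the paper expands the averaged DC score and peels off $DC(\hat{\mu},\hat{\sigma})$ plus the variance remainder---but the underlying algebra and the key fact (nonnegativity of the sample variance of the ensemble means) are identical.
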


\begin{proof}
The averaged DC score of the individual NNs (right-hand side of the Eq. \ref{eq:proposition}) can be expressed as:

\begin{equation}
\begin{aligned}[b]
\cfrac{1}{M} \sum\limits_{i=1}^M {DC(\mu_i, \sigma_i)}
& = \cfrac{1}{M} \sum\limits_{i=1}^M (y^2 - 2y{\mu_i} + {\mu_i}^2 - {\sigma_i}^2) \\
& = y^2 - 2y\hat{\mu} + \cfrac{1}{M} \sum\limits_{i=1}^M {{\mu_i}^2} - \cfrac{1}{M} \sum\limits_{i=1}^M {{\sigma_i}^2} \\
& = (y^2 - 2y\hat{\mu} + \hat{\mu}^2 - \hat{\sigma}^2) + (\cfrac{1}{M} \sum\limits_{i=1}^M {{\mu_i}^2} - \hat{\mu}^2) + (\hat{\sigma}^2 - \cfrac{1}{M} \sum\limits_{i=1}^M {{\sigma_i}^2}) \\
& = \underbrace{(y^2 - 2y\hat{\mu} + \hat{\mu}^2 - \hat{\sigma}^2)}_{= DC(\hat{\mu}, \hat{\sigma})} + 2 \underbrace{(\cfrac{1}{M} \sum\limits_{i=1}^M {{\mu_i}^2} - \hat{\mu}^2)}_{= Var(\mu_i)} \quad (\because \text{Eq. \ref{eq:mixture_s}})
\end{aligned}
\end{equation}
Hence,

\begin{equation} \label{eq:proof_last}
\begin{aligned}
DC(\hat{\mu}, \hat{\sigma}) 
= \cfrac{1}{M} \sum\limits_{i=1}^MDC(\mu_i, \sigma_i) - \underbrace{2 \cdot Var(\mu_i)}_{\geq 0} \\
\end{aligned}
\end{equation}

\end{proof}

\begin{remark1}\label{remark1}
The DC used in the above proposition indicates the degree of calibration. When DC equals 0, it means that the estimated uncertainty $\sigma^2$ exactly matches the MSE, $(y - {\mu})^2$. If $DC<0$, the uncertainty is overestimated compared to the MSE, which is an underconfident case. Therefore, the proposition that the DC score decreases after ensembling has mathematically explained the underconfidence of DE models observed in Section \ref{sec:UQ_results}.  
\end{remark1}

\begin{remark2}\label{remar2}
In Section \ref{sec:pred_results}, it was mentioned that introducing excessive diversity to individual NNs can lead to degraded UQ performance. This can be easily inferred by the term $Var(\mu_i)$ in Eq. \ref{eq:proof_last}: the more variance NNs have, the more underconfidence their ensemble shows.
\end{remark2}

\clearpage
\section{\label{sec:calib_results} DE models with STD calibration}

The underconfidence tendency of DE models in regression tasks is observed and explained in the previous section. This section suggests the use of post-hoc STD calibration to mitigate this undesirable tendency and examines its effects.

\subsection{\label{sec:UQ_calib} STD calibration of DE models}

The findings presented in Section \ref{sec:UQ_results} suggest that, despite the prevailing view that DE models are well-calibrated, this is not always the case, as illustrated in this straightforward multi-output regression task within an engineering domain. To address this issue, we propose using the STD calibration method on the trained DE models. This technique, as described in Algorithm \ref{alg:calib}, is straightforward and practical, as it requires only a single for-loop and leverages the existing models without additional training. This makes it a feasible option for our study, which focuses on the application of DE in engineering, where practicality is crucial.

Algorithm \ref{alg:calib} first requires a set of candidates for scaling factors, $S$. Since the scaling factor of 1 corresponds to the case without calibration, the candidates $s$ are set around 1. Accordingly, $s=10^x$ are chosen as candidates, where $x$ are 100 uniformly distributed points from -2 to 0.18, so that the resulting range of scaling factors to explore is from 0.01 to 1.5. Note that with $s$ less than 1, underconfident models that overestimate the standard deviations (uncertainty) can be calibrated. Finally, the STD calibration is performed using validation dataset split in Section \ref{sec:pre_missile} (dataset size of 980) and the optimized scaling factors for each DE model with respect to each output (QoI) are summarized in Table \ref{tab:cal_results}. The STD calibration for all models is performed within 60 seconds, which is negligible compared to their training time.

\renewcommand{\arraystretch}{1.1}
\begin{table}[htb!]
\caption{Optimized scaling factors for STD calibration.} \label{tab:cal_results}
    \begin{tabular*}{0.95\columnwidth}{@{\extracolsep{\fill}}lccccccc}
        \hline 
        \multirow{2}{*}{Methods} & \multicolumn{7}{c}{Optimized scaling factors} \\ 
        \cline{2-8}
        & $C_{NF}$ & $C_{AF}$ & $C_{PM}$ & $C_{RM}$ & $C_{YM}$ & $C_{SF}$ & \textbf{Avg} \\ \hline
        DE-2 & 0.549 & 1.061 & 0.608 & 1.009 & 0.824 & 0.578 & \textbf{0.771}\\
        DE-4 & 0.385 & 0.405 & 0.284 & 0.472 & 0.257 & 0.270 & \textbf{0.345}\\
        DE-8 & 0.147 & 0.270 & 0.133 & 0.270 & 0.155 & 0.140 & \textbf{0.186}\\
        DE-16 & 0.103 & 0.199 & 0.088 & 0.189 & 0.120  & 0.108 & \textbf{0.135}\\ \hline
    \end{tabular*}

\end{table}

Herein, the scaling factors for all six aerodynamic coefficients and their average value in each model are presented. The most notable point is that almost all $s$ values are less than 1 and they decrease as $M$ increases: see the bold values in Table \ref{tab:cal_results} to confirm their average trend. Taken together with the results from Section \ref{sec:UQ_results} that DE models overestimate their $\sigma^2$ (become underconfident) as $M$ increases, one might expect optimized $s<1$ to mitigate this underconfident tendency. And Fig. \ref{fig:reliab_DE_aft} proves that this actually happens: reliability plots of the DE models after STD calibration are drawn with the test dataset. Note that the validation dataset used during the STD calibration should not be reused in this process for generalization purposes. When compared to the previous plots in Fig. \ref{fig:reliab_DE_bef}, the obvious improvement due to the calibration technique can be observed. See \ref{sec:app_UQ_aft} for comprehensive results on the calibration effects with respect to all six QoIs.

\begin{figure}[htb!]
    \centering
    \begin{subfigure}[h]{0.45\textwidth}
        \centering
        \includegraphics[width=\linewidth]{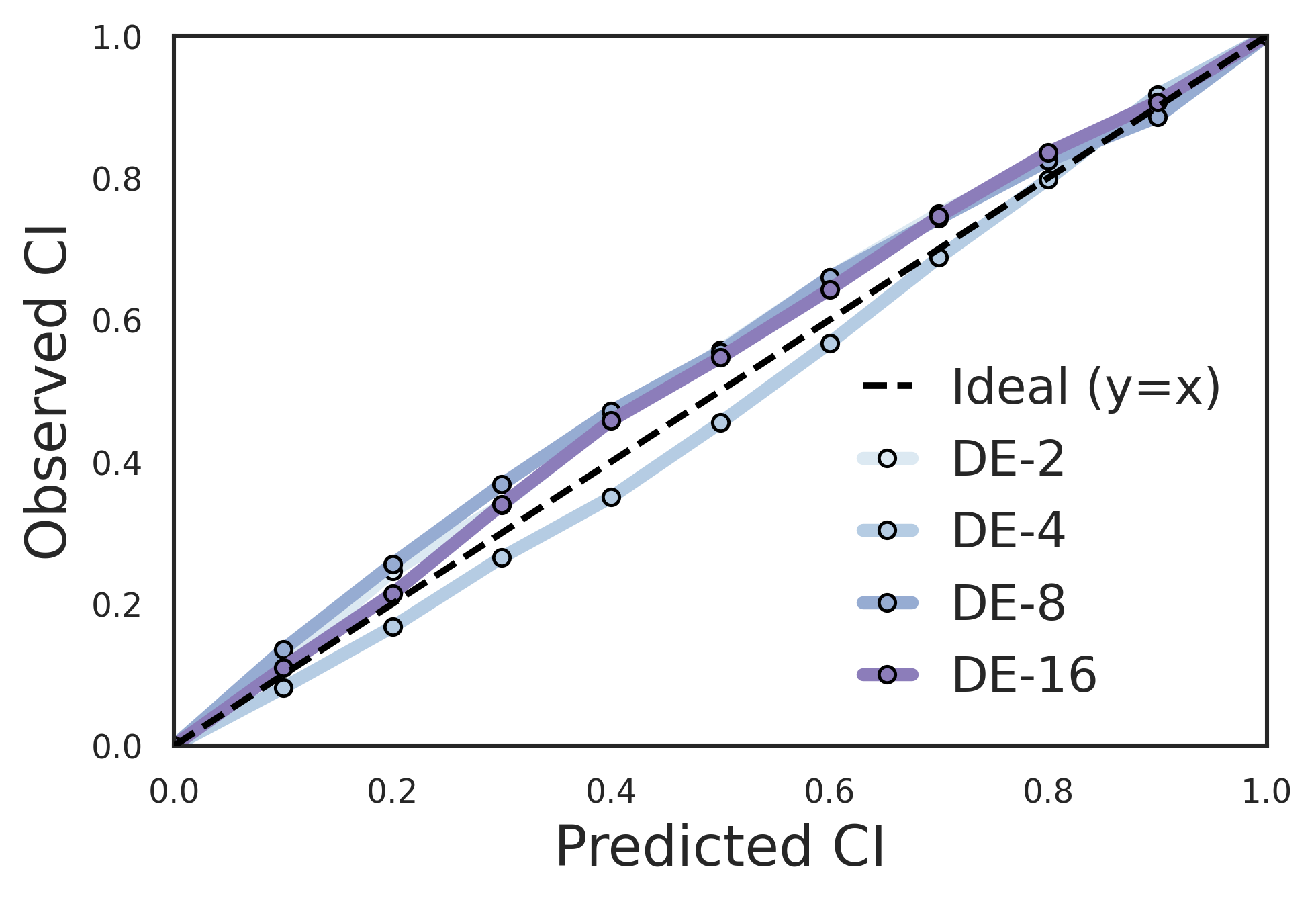}
        \caption{}\label{fig:reliab_DE_aft_a}
    \end{subfigure}
    \hspace{0.0\columnwidth}
    \begin{subfigure}[h]{0.45\textwidth}
        \centering
        \includegraphics[width=\linewidth]{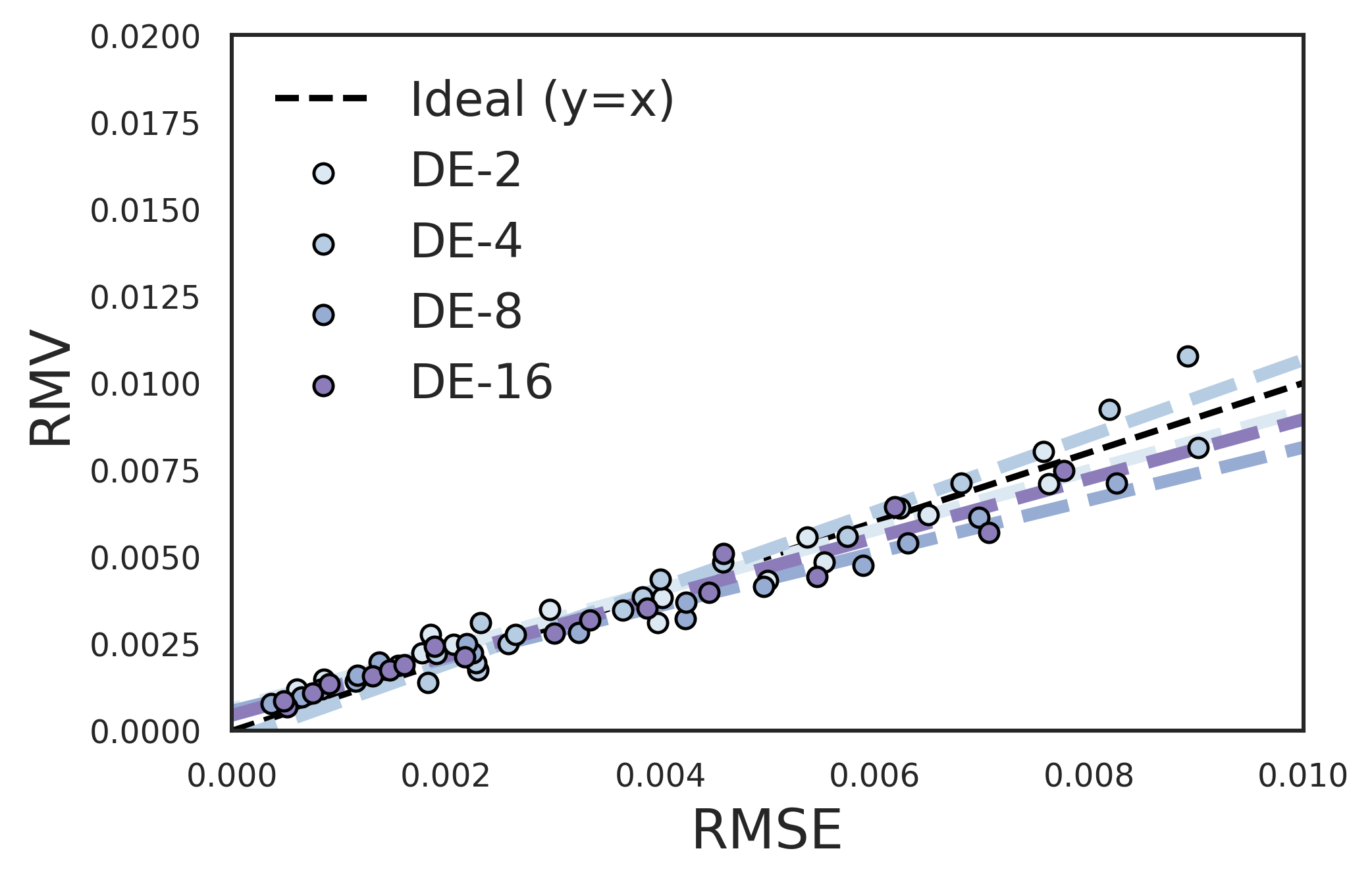}
        \caption{}\label{fig:reliab_DE_aft_b}
    \end{subfigure}
    \caption{Reliability plots of DE after STD calibration: (a) CI-based reliability plot, (b) Error-based reliability plot. The noticeable effects of STD calibration can be found when compared with the corresponding figure before STD calibration, Fig. \ref{fig:reliab_DE_bef}.}
    \label{fig:reliab_DE_aft}
\end{figure} 

Then, the quantitative effects of the calibration in terms of AUCE and ENCE will be analyzed, and from now on DE before and after calibration will be referred to as DE-bef and DE-aft, respectively. The AUCE and ENCE of the GPR will also be presented for the comparison, but please note that the GPR can be considered inherently STD-calibrated since its training algorithm already aims to minimize NLL as in the STD calibration process. This means that the GPR does not require additional STD calibration for a fair comparison with DE-aft because it can be seen as having already undergone STD calibration. Finally, the results are summarized in Fig. \ref{fig:slope_plot}. It consists of the sub-figures, where the row indicates each UQ metric, the column indicates each QoI, and the x-axis in each sub-figure indicates whether the DE undergoes STD calibration (as explained, GPR metrics have a constant value along the x-axis regardless of the STD calibration). Before the calibration, the AUCE (upper row) of GPR is between DE models: DE-2 is better than GPR, DE-4 is similar, and DE-8 and DE-16 are worse. However, the STD calibration completely changes this situation: AUCE of all DE models for all aerodynamic QoIs decreases dramatically. For all outputs, DE-aft clearly outperforms GPR. The significant improvement of ENCE (lower row) due to the calibration of DE can also be verified. DE models show worse performance than GPR without calibration, but this gap narrows and even reverses, as seen in the rightmost subplot ``Avg'', which shows the average performance of all outputs. At least in our study, DE-2 with STD calibration can be regarded as the best model, since DE models show negligible differences in predictive accuracy, UQ quality after calibration, while DE-2 requires the shortest training time (note that this does not mean that the lower $M$ is better; careful consideration of $M$ in each situation is a prerequisite, as in this study). In summary, vanilla DE outperformed GPR in terms of training efficiency and regression accuracy, but not in terms of quality of estimated uncertainty. However, when used with a simple post-hoc STD calibration (which requires negligible additional post-processing time), DE demonstrated its strong potential as an alternative to GPR in terms of training time, prediction accuracy, and also UQ quality.

\begin{figure*}[htb!]
    \centering
        \includegraphics[width=0.95\textwidth]{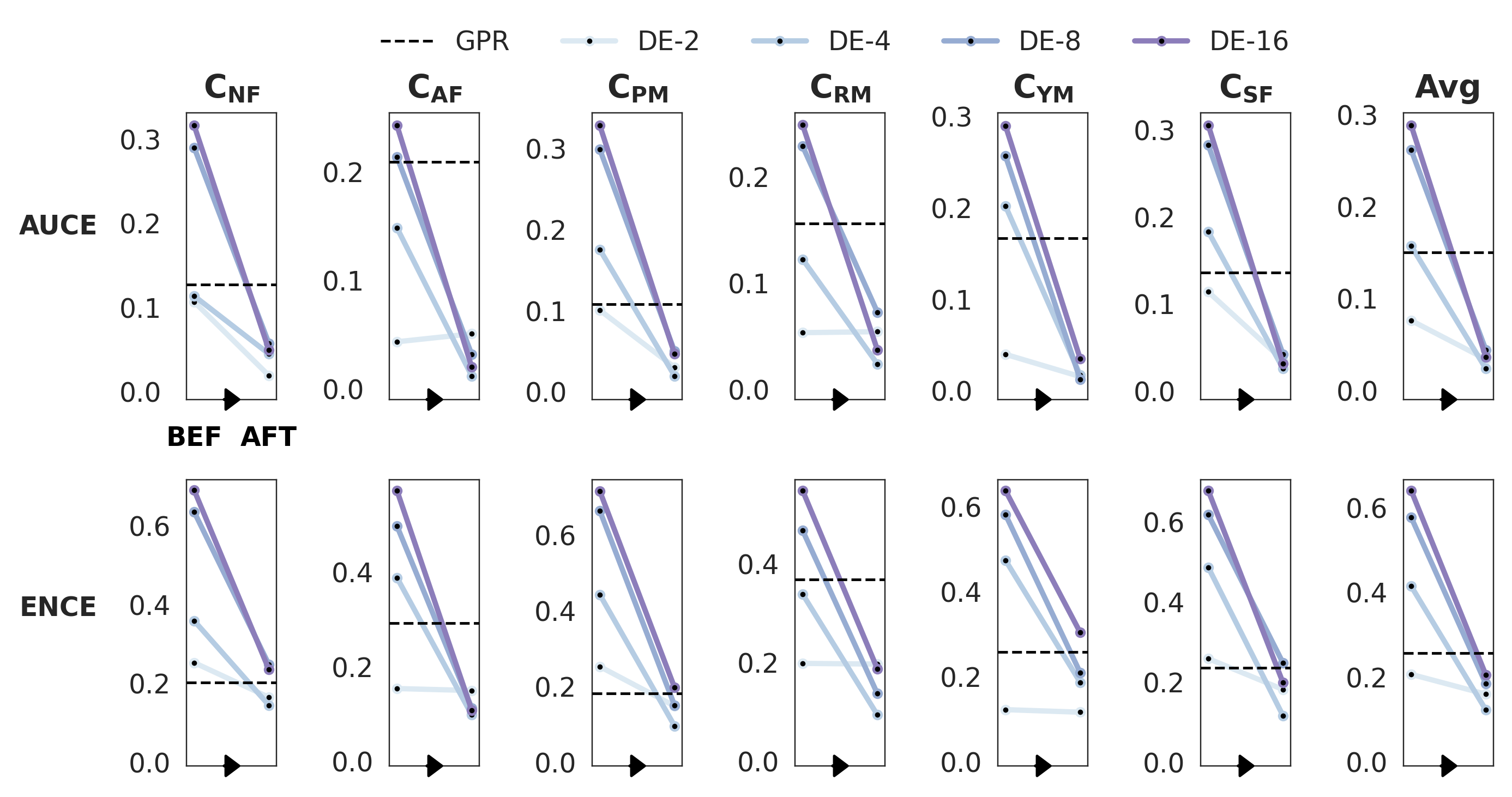}
    \caption{AUCE and ENCE of DE models before and after STD calibration. Those of GPR are also shown for comparison.}
    \label{fig:slope_plot}
\end{figure*} 

\clearpage
\subsection{\label{sec:UQ_BO} Effects of STD calibration on Exploratory Behavior in Bayesian optimization}

Since the scaling factors are optimized to have values less than 1 during the STD calibration process (Table \ref{tab:cal_results}), it is obvious that the overall predictive uncertainty of DE models would decrease. To provide a more intuitive understanding of the practical implications of calibration, this section aims to briefly point out that applying STD calibration to DE can lead to different exploratory behavior during Bayesian optimization. Specifically, the importance of calibration is highlighted by comparing the next query candidates before and after STD calibration obtained in the first iteration of Bayesian optimization. Note that only the first iteration is implemented in this paper for the following two reasons. First, the goal of this section is simply to show the impact of the calibration from a practical point of view. Second, the purpose of this section is not to claim that the final converged results of Bayesian optimization can be different depending on the calibration. Rather, it is to point out that the intended balance between exploitation and exploration may not be realized due to the miscalibrated uncertainty of the vanilla DE, which may affect the convergence behavior of the Bayesian optimization (by ``intended balance'' we mean the balance between exploration/exploitation in EI when the exact uncertainty is quantified).

Before moving on to Bayesian optimization, CIs of the 68\% confidence level predicted by DE-16 model are shown in Fig. \ref{fig:OOD} to visually understand the impact of calibration. Only one input variable, $AoA$, is used for the illustration. And its value is standardized to distinguish between its ID (in-distribution) region and the OOD (out-of-distribution) region: in Fig. \ref{fig:OOD}, the ID region is defined as the area containing 95$\%$ of the train data, while the OOD region is the remaining area. Overall, both results---those obtained before and after STD calibration---show diverging CIs in OOD and relatively narrow CIs in ID for all six QoIs. However, as expected from the scaling factors less than 1, the CIs from the DE-aft models become significantly narrower than those from the DE-bef models, indicating that these discrepancies will lead to differences in the subsequent Bayesian optimization process.

\begin{figure*}[htb!]
    \centering
        \includegraphics[width=0.8\textwidth]{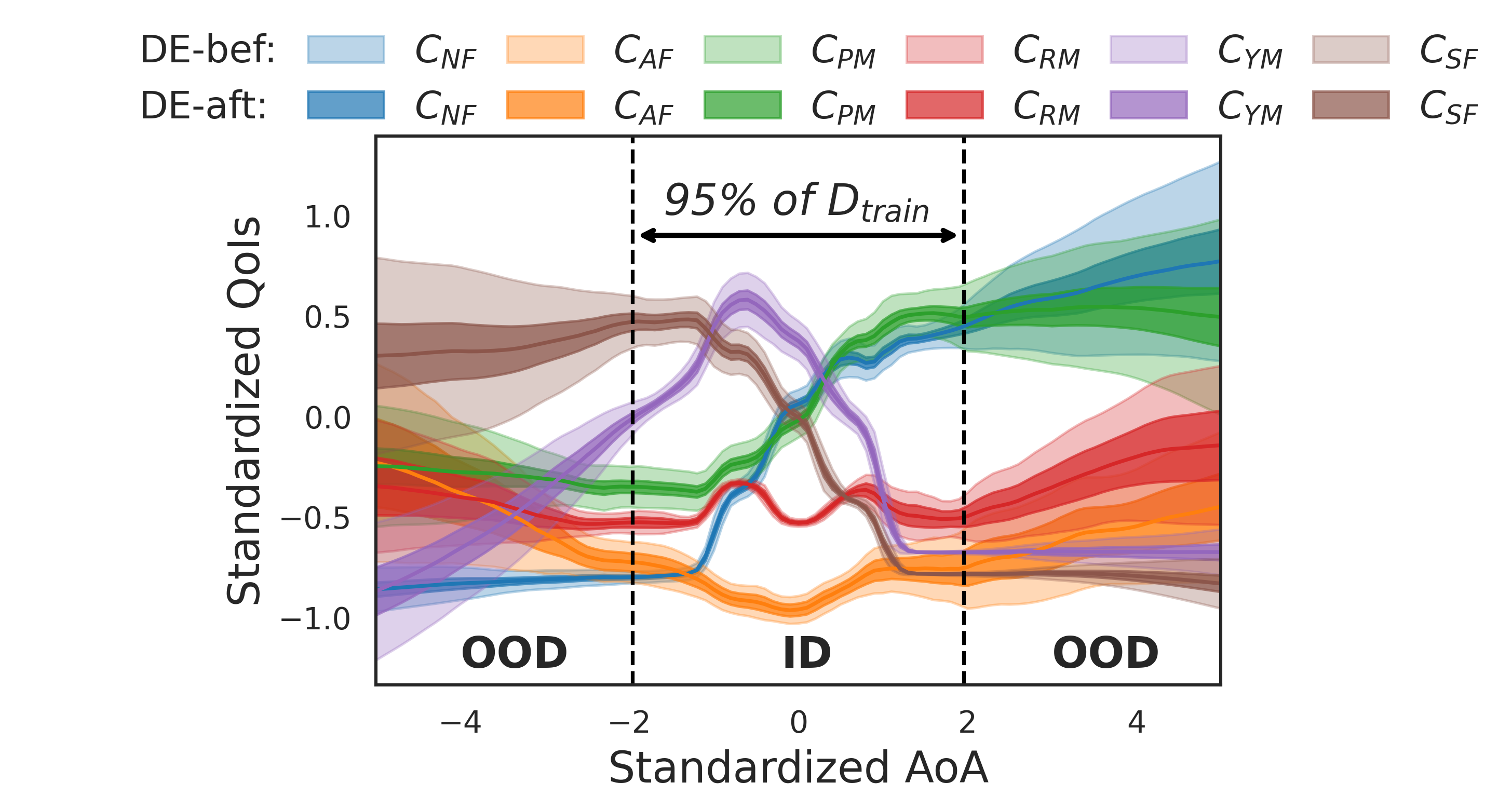}
    \caption{CIs of $68\%$ confidence level predicted by DE-16: comparison between before and after STD calibration.}
    \label{fig:OOD}
\end{figure*} 

Then, the multi-objective Bayesian optimization problem is defined is adopted to practically investigate their effects on Bayesian optimization: maximization of both $C_{NF}$ and $C_{AF}$ within five varying input parameters ($Ma$, $\phi$, $\delta{p}$, $\delta{r}$, and $AoA$). These optimizations, coupled with the expected improvement (EI) acquisition function, are performed separately for DE-bef and DE-aft models. The former searches for the maximum EI point where EI is calculated from the uncertainty quantified by the DE-bef model, while the latter does so using the uncertainty quantified by DE-aft. To find the Pareto solutions of $EI(C_{NF})$ and $EI(C_{AF})$, the non-dominated sorting genetic algorithm-\uppercase\expandafter{\romannumeral2} (NSGA-\uppercase\expandafter{\romannumeral2}) in the Python package pymoo is utilized \citep{blank2020pymoo, yang2022inverse, ozturk2006neuro}. Finally, the obtained Pareto solutions from the first iteration are shown in Fig. \ref{fig:EIopt_a}. Since the uncertainty estimated by DE-bef and DE-aft are different as shown in Fig. \ref{fig:OOD}, the Pareto solutions of $EI(C_{NF})$ and $EI(C_{AF})$ are also different: EI values of both QoIs after calibration are much smaller than those before calibration.

In Bayesian optimization, however, the most valuable information to the user is not the EI value itself (Fig. \ref{fig:EIopt_a}). More important are the values of the input variable sets (Fig. \ref{fig:EIopt_b}) obtained from the EI Pareto solutions: they are the next query candidates, the main purpose of implementing Bayesian optimization. Additional experiments/simulations will be performed on these candidates, indicating that their selection has a significant impact on the convergence behavior of the iterative Bayesian optimization process. If unintended candidates are obtained due to inaccurate UQ and therefore inaccurate EI calculation, the exploratory behavior of Bayesian optimization can be much different from the intention of the user. That is, the intended balance between exploitation and exploration during Bayesian optimization may differ due to unintentionally overestimated/underestimated uncertainty. To inspect its unintended exploratory behavior more intuitively, the parallel coordinates plot (PCP) in Fig. \ref{fig:EIopt_b} shows how the first query candidates in Bayesian optimization can vary due to the STD calibration in the DE model. This PCP has five vertical lines corresponding to each input variable, and the y-axis indicates their standardized values. Each red/blue line represents each point of the Pareto solutions in Fig. \ref{fig:EIopt_a}. Comparing them, large variations are found especially in the input variable $\delta{r}$: Bayesian optimization coupled with DE-bef discourages exploration of the variable $\delta{r}$ (which was not intended by the user), while DE-aft encourages exploration within $\delta{r}$ (which was the original intention). In conclusion, whether the DE is calibrated by STD calibration or not can result in exploration characteristics during Bayesian optimization that differ from the user's intent, which shows the impact of the calibration on DE from a practical point of view.

\begin{figure*}[htb!]
    \centering
    \begin{subfigure}[h]{0.45\textwidth}
        \centering
            \includegraphics[height=0.55\linewidth]{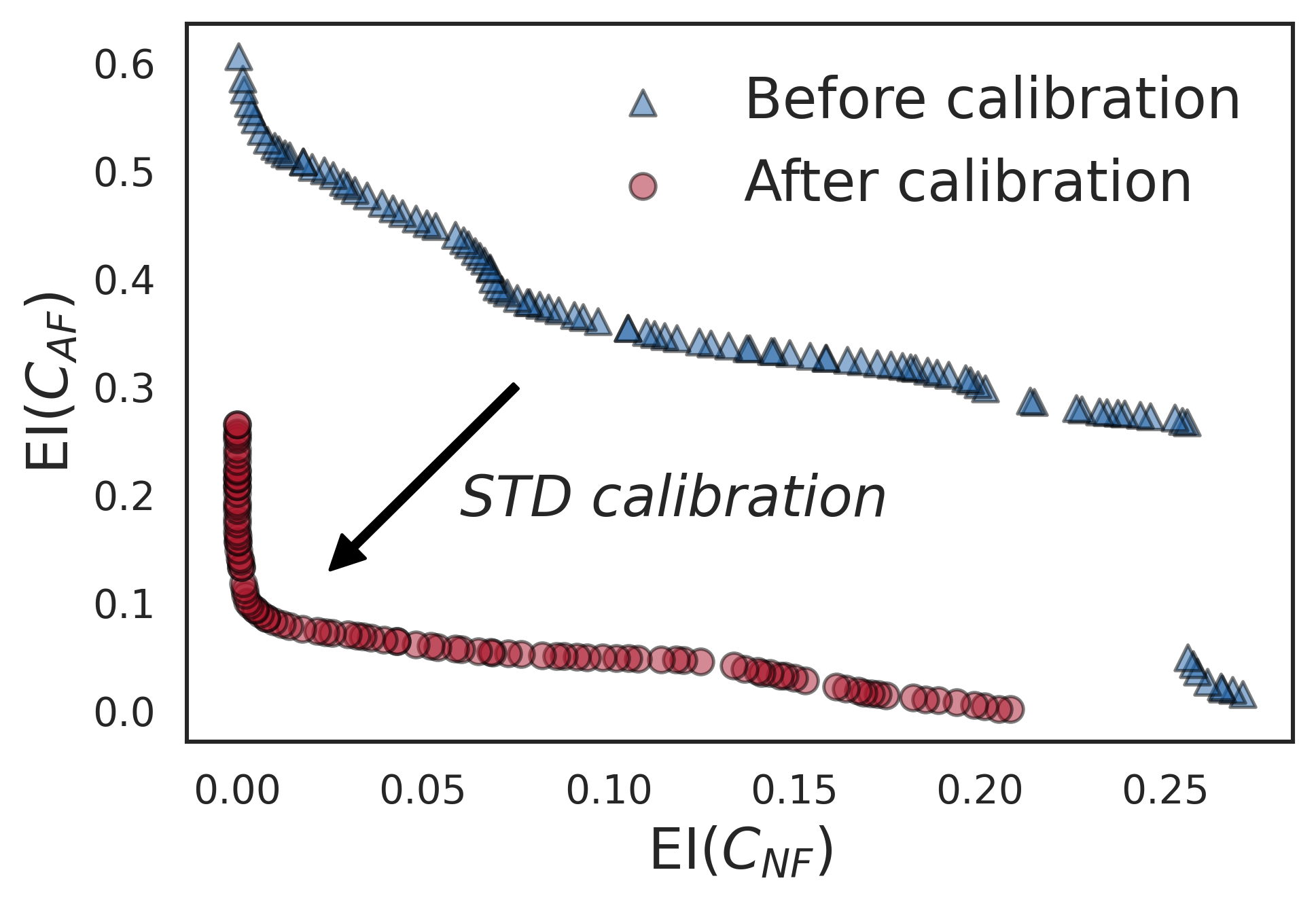}
        \subcaption{Pareto solutions obtained from multi-objective EI optimization}
        \label{fig:EIopt_a}
    \end{subfigure}
    \hspace{0.05\columnwidth}
    \begin{subfigure}[h]{0.45\textwidth}
        \centering
            \includegraphics[height=0.55\linewidth]{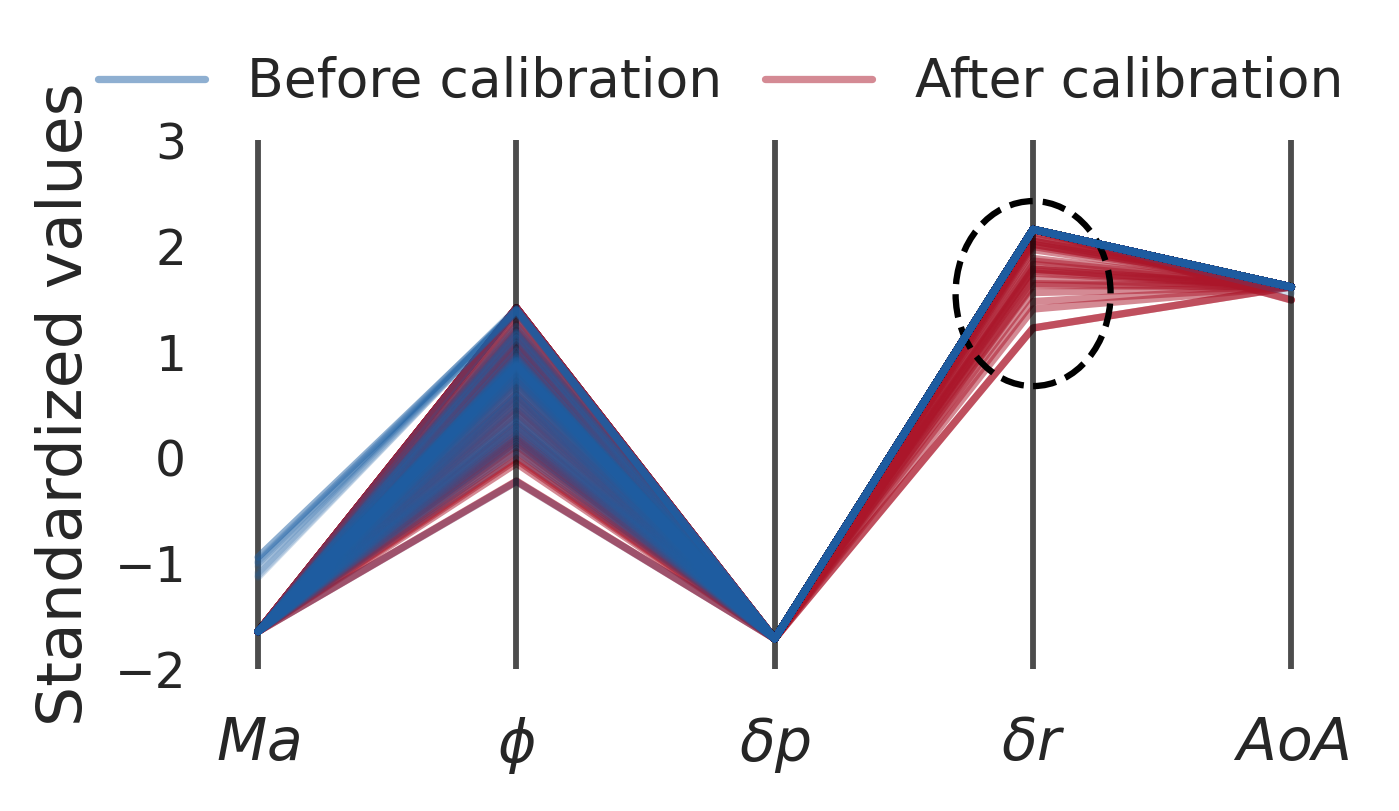}
        \subcaption{PCP of design variables in Pareto solutions}
        \label{fig:EIopt_b}
    \end{subfigure}
    \caption{Effects of STD calibration for DE models on Bayesian optimization results.}
    \label{fig:EIopt}
\end{figure*} 

\section{Conclusion}
\label{sec:conclusion}
This study comprehensively investigated the state-of-the-art approximate Bayesian inference approach, DE. It is applied to the multi-output regression task, which is the most common task in the engineering fields: a simple test case is adopted where aerodynamic QoIs of the specific missile configuration are predicted under varying flow conditions. DE models with different numbers of NNs are trained and then examined in the following order. First, their regression performance and the quality of estimated uncertainty are scrutinized while being compared with GPR. Then, a simple post-hoc STD calibration method is proposed to be applied to miscalibrated DE models. Finally, the effectiveness of the calibration on DE is highlighted by the improvement of two UQ quality criteria and the different exploratory behavior in Bayesian optimization before and after calibration. The key findings of our study can be summarized as follows:

\begin{enumerate}

    \item The effect of the number of NNs used in ensemble, $M$, is comprehensively investigated in the simple multi-output regression task. For regression accuracy, DE models show superior performance to GPR in terms of RMSE and NLL, while showing indistinguishable differences among themselves. For UQ quality, however, they show the obvious trend toward underconfidence as $M$ increases, both in terms of AUCE and ENCE criteria. The mathematical proof of why DE tends to be miscalibrated in regression tasks is also derived.
    
    \item The post-hoc STD calibration method, which simply modifies the estimated uncertainty from DE, is proposed to be applied to miscalibrated DE models. Finally, the reliability of the UQ performance after calibration is dramatically improved for both AUCE and ENCE, also surpassing that of GPR.
    
    \item The impact of the calibration approach on the exploratory behavior in Bayesian optimization is examined. Finally, whether or not the DE is calibrated via STD calibration can result in completely different exploration characteristics when extended to Bayesian optimization, which cautions against blindly applying vanilla DE models to Bayesian optimization in regression tasks.
    
    \item We have demonstrated that by applying a simple post-hoc STD calibration technique that requires negligible additional post-processing time, DE models can have enormous potential compared to GPR, which is the most commonly used regression model for UQ in engineering. These results are summarized in Table \ref{tab:app_public}, where the DE-2 model after STD calibration outperforms GPR in terms of regression performance ($-56\%$ NLL $\&$ $-55\%$ RMSE), reliability of UQ ($-77\%$ AUCE $\&$ $-38\%$ ENCE), and training efficiency ($-78\%$ training time).
    
\end{enumerate}

\renewcommand{\arraystretch}{1.3}
\begin{table}[htb!]
\caption{Comprehensive comparison between GPR and DE-2}\label{tab:app_public}
    \begin{NiceTabular*}{0.8\columnwidth}{@{\extracolsep{\fill}}lc!{\qquad}ccc}
        \cline{1-5}
        \Block[c]{2-2}{Metrics} && \Block{2-1}{GPR} & \Block{1-2}{DE-2} & \\ \cline{4-5}
        & & & Before calibration & After calibration \\ \cline{1-5}
        \Block{2-1}{Regression} & NLL & -2.653 ($-\%$) & -4.145 ($\downarrow\textbf{56}\%$) & -4.145 ($\downarrow\textbf{56}\%$)\\ \cline{2-5}
        & RMSE & 0.029 ($-\%$) & 0.013 ($\downarrow\textbf{55}\%$) & 0.013 ($\downarrow\textbf{55}\%$) \\ \cline{1-5}
        \Block{2-1}{UQ} & AUCE & 0.150 ($-\%$) & 0.076 ($\downarrow49\%$) & 0.034 ($\downarrow\textbf{77}\%$)\\ \cline{2-5}
        & ENCE & 0.256 ($-\%$) & 0.206 ($\downarrow20\%$) & 0.159 ($\downarrow\textbf{38}\%$)\\ \cline{1-5}
        \Block[c]{1-2}{Training time [s]} && 39081 ($-\%$) & 8640 ($\downarrow\textbf{78}\%$) & 8640+30 ($\downarrow\textbf{78}\%$) \\ \cline{1-5}                
\end{NiceTabular*}
\end{table}

The presented DE framework has great promise in two engineering applications. First, DE with STD calibration has the potential to replace the most common regression model, GPR, owing to its following advantages: more scalable to large datasets, higher regression accuracy, and last but not least, more reliable uncertainty estimation. Second, DE with STD calibration can be leveraged in Bayesian optimization by ensuring a reliable balance between exploitation and exploration due to its trustworthy UQ performance. Although the application of this framework has been demystified using the simple multi-output regression task, it can be easily applied and extended to high-dimensional input/output problems since it is based on the deep neural network structures and no special assumptions have been made for this specific problem. For future work, a more practical investigation of DE models will be conducted, such as their scalability to high-dimensional engineering regression problems, not as simple regression task with 6 outputs in this study. Also, since the DE algorithm by its nature requires multiple network training, the way to reduce its longer training time than the conventional NNs can also be a future study. Finally, since our purpose was not to claim that the final converged results of Bayesian optimization can be different depending on the calibration, the extension of DE to the whole Bayesian optimization was outside the focus of our paper. In this respect, a comprehensive comparison of the convergence behavior between DE-bef and DE-aft over entire iterations and their final converged results can be invaluable future work.

\clearpage
\section*{CRediT authorship contribution statement}
\textbf{S. Yang}: Conceptualization, Methodology, Software, Validation, Formal analysis, Investigation, Data Curation, Writing – Original Draft, Writing – Review \& Editing, Visualization.
\textbf{K. Yee}: Supervision, Funding acquisition.

\section*{Declaration of competing interest}
The authors declare that they have no known competing financial interests or personal relationships that could have appeared to influence the work reported in this paper.

\section*{Data availability}
Data will be made available on request.

\section*{Acknowledgments}
This work was supported by Grant UE191109CD from Agency for Defense Development and Defense Acquisition Program Administration \& Hyundai Motor Group. Also, the authors especially thank Shinkyu Jeong and Seungmin Yoo at Kyunghee University for providing the training dataset of the current study.

\clearpage
\appendix

\section{Controversial issues on MC-dropout}
\label{sec:app_MCD}

\citet{osband2016risk} pointed out that what MC-dropout (MCD) estimates is a risk, not an uncertainty, and also emphasized the pitfalls of MCD when used as a naive tool for estimating uncertainty. Moreover, its algorithm does not perform adequately even in very simple examples \citep{osband2016deep, pearce2018bayesian}, and its posterior samples are often too spiky to provide a reliable predictive uncertainty trend \citep{gal2016dropout, gal2016uncertainty, osband2016deep, riquelme2018deep, zhang2019quantifying}, which makes it unattractive to be exploited for Bayesian optimization.

\section{Hyperparameter tuning results in Section \ref{sec:pre_missile}}
\label{sec:app_hyp}

\subsection{Results of DE models}
\label{sec:app_hyp_DE}

The results of the hyperparameter tuning for DE models performed in Section \ref{sec:pre_missile} are shown in Table \ref{tab:hyp_results}. Three hyperparameters are used: the number of hidden layers ($N_{layer}\in\{3, 5, 7\}$), the number of nodes in each hidden layer ($N_{node}\in\{32, 64, 128\}$), and the size of the mini-batch ($N_{batch}\in\{512, 1024, 2048\}$). The corresponding regression performance in terms of NLL and RMSE for all hyperparameter combinations is shown (total training time of 6944 seconds). Since $N_{layer}=7, N_{node}=128, N_{batch}=512$ shows the best RMSE performance, it is selected as the best hyperparameter combination.

\renewcommand{\arraystretch}{1.1}
\begin{table}[H]
\caption{Results of hyperparameter tuning: several structures of probabilistic NN used in the DE model are tested.}\label{tab:hyp_results}
    \begin{tabular*}{0.45\columnwidth}{@{\extracolsep{\fill}}cccccc}
        \cline{1-6}
        \begin{tabular}[c]{@{}c@{}}$N_{layer}$\end{tabular} & \begin{tabular}[c]{@{}c@{}}$N_{node}$\end{tabular} & \begin{tabular}{@{}c@{}}$N_{batch}$\end{tabular} & \begin{tabular}[c]{@{}c@{}}NLL\end{tabular} & \begin{tabular}{@{}c@{}}RMSE\end{tabular} & \begin{tabular}{@{}c@{}}Time [s]\end{tabular}\\ \cline{1-6}
        \multirow{9}{*}{3} & \multirow{3}{*}{32} & 512 & -2.50 & 0.125 & 183 \\ \cline{3-6}
         &  & 1024 & -2.22 & 0.200 & \textbf{174}\\ \cline{3-6}
         &  & 2048 & -2.13 & 0.183 & 183\\ \cline{2-6}
         & \multirow{3}{*}{64} & 512 & -2.98 & 0.076 & 202 \\ \cline{3-6}
         &  & 1024 & -2.69 & 0.087 & 179\\ \cline{3-6}
         &  & 2048 & -2.52 & 0.100 & 213\\ \cline{2-6}
         & \multirow{3}{*}{128} & 512 & -3.37 & 0.036 & 256 \\ \cline{3-6}
         &  & 1024 & -3.34 & 0.040 & 257\\ \cline{3-6}
         &  & 2048 & -2.55 & 0.052 & 271\\ \cline{1-6}
        \multirow{9}{*}{5} & \multirow{3}{*}{32} & 512 & -2.62 & 0.137 & 187\\ \cline{3-6}
         &  & 1024 & -2.23 & 0.173 & 214\\ \cline{3-6}
         &  & 2048 & -2.25 & 0.137 & 202\\ \cline{2-6}
         & \multirow{3}{*}{64} & 512 & -3.21 & 0.039 & 236\\ \cline{3-6}
         &  & 1024 & -2.79 & 0.055 & 210\\ \cline{3-6}
         &  & 2048 & -2.01 & 0.085 & 255\\ \cline{2-6}
         & \multirow{3}{*}{128} & 512 & \textbf{-3.83} & 0.017 & 342\\ \cline{3-6}
         &  & 1024 & -3.53 & 0.019 & 327\\ \cline{3-6}
         &  & 2048 & -3.16 & 0.035 & 349\\ \cline{1-6}
        \multirow{9}{*}{\textbf{7}} & \multirow{3}{*}{32} & 512 & -2.49 & 0.088 & 212\\ \cline{3-6}
         &  & 1024 & -2.36 & 0.119 & 224\\ \cline{3-6}
         &  & 2048 & -2.17 & 0.104 & 244\\ \cline{2-6}
         & \multirow{3}{*}{64} & 512 & -3.38 & 0.031 & 286\\ \cline{3-6}
         &  & 1024 & -3.05 & 0.042 & 247\\ \cline{3-6}
         &  & 2048 & -2.45 & 0.062 & 262\\ \cline{2-6}
         & \multirow{3}{*}{\textbf{128}} & \textbf{512} & -3.82 & \textbf{0.016} & 428\\ \cline{3-6}
         &  & 1024 & -3.56 & 0.024 & 399\\ \cline{3-6}
         &  & 2048 & -3.13 & 0.035 & 402\\ \cline{1-6}
    \end{tabular*}
\end{table}

\clearpage

\subsection{Results of GPR models}
\label{sec:app_hyp_GPR}

The results of the hyperparameter tuning for GPR models performed in Section \ref{sec:pre_missile} are shown in Table \ref{tab:hyp_results_GPR}. For conventional single-output GPR (SOGPR), Mat\'ern 5/2, radial basis function, rational quadratic, and dot-product kernels are explored. Additionally, multi-output GPR (MOGPR) with radial basis function is also tested. The corresponding regression performance in terms of NLL and RMSE of all GPR models are summarized in Table \ref{tab:hyp_results_GPR} (total training time is 205493 seconds, which is significantly longer than DE in \ref{sec:app_hyp_DE}). MOGPR requires the least training time, but single-output GPR with Mat\'ern 5/2 is selected since it shows the best performance with respect to NLL and RMSE.

\begin{table}[htb!]
\caption{Results of hyperparameter tuning: several GPR models are tested.}\label{tab:hyp_results_GPR}
    \begin{tabular*}{0.7\columnwidth}{@{\extracolsep{\fill}}cccc}
    \hline
    Kernels & NLL & RMSE & Time [s] \\ \hline
    \textbf{Mat\'ern 5/2} & \textbf{-2.653} & \textbf{0.029} & 39081\\
    Radial basis function (SOGPR) & -1.657 & 0.039 & 64250\\
    Radial basis function (MOGPR) & -1.4236 & 0.044 & \textbf{8136}\\
    Rational quadratic & -0.747 & 0.061 & 66363\\
    Dot-product & 0.440 & 0.547 & 27663\\ \hline
    \end{tabular*}
\end{table}

\clearpage

\section{Additional results in Section \ref{sec:UQ_results}}
\label{sec:app_UQ_bef}

Fig. \ref{fig:reliab_DE_bef} in Section \ref{sec:UQ_results} shows the reliability plots with respect to only one QoI, $C_{SF}$. In this section, more comprehensive results are provided. For each vanilla DE model, all six QoIs are shown with different colors. Again, the trend of underconfidence as $M$ increases can be seen from DE-2 to DE-16.

\begin{figure*}[ht!]
    \centering
    \begin{subfigure}[h]{0.8\textwidth}
        \centering
        \includegraphics[height=0.18\textheight]{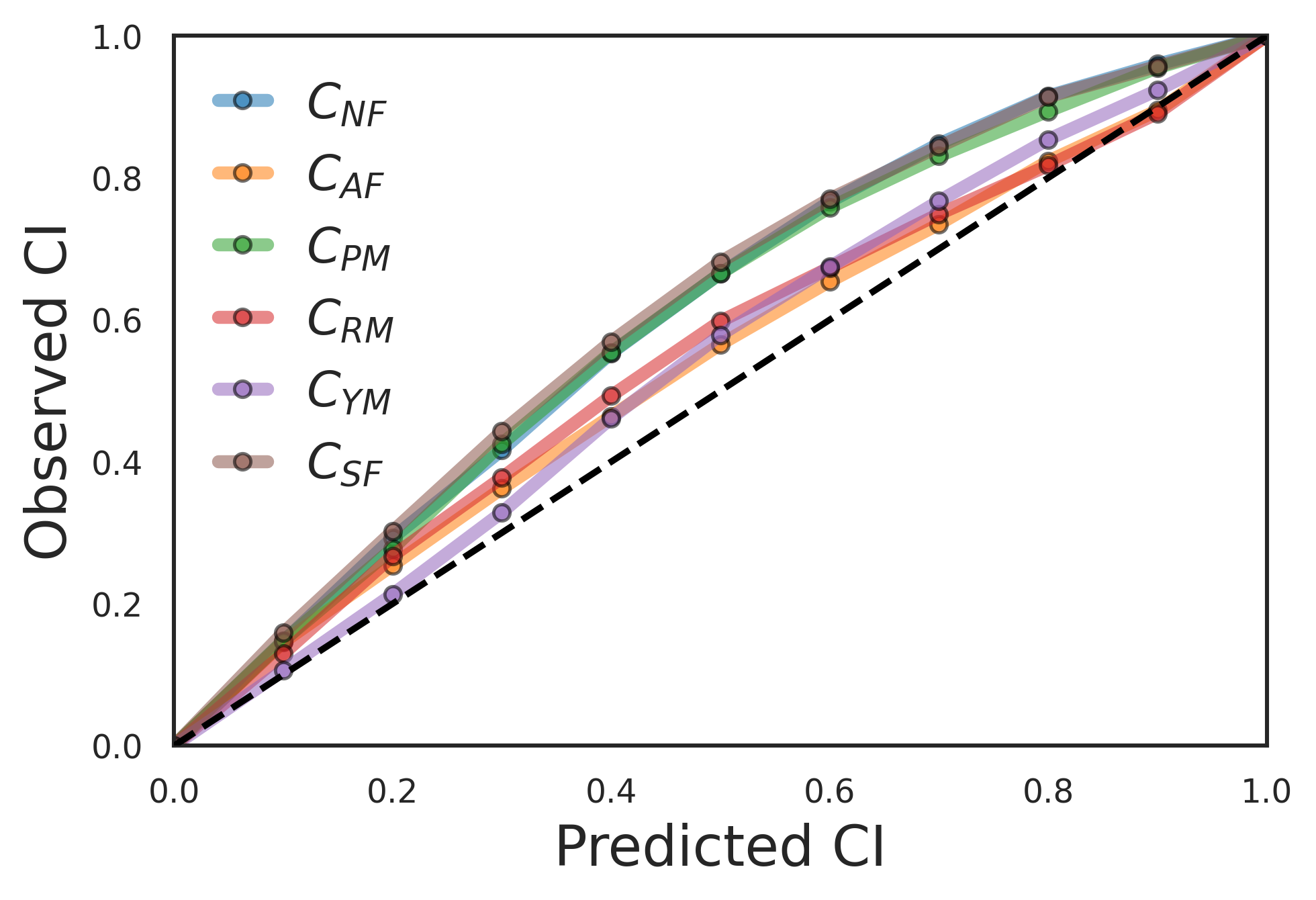}
        \hspace{0.0\columnwidth}
        \includegraphics[height=0.18\textheight]{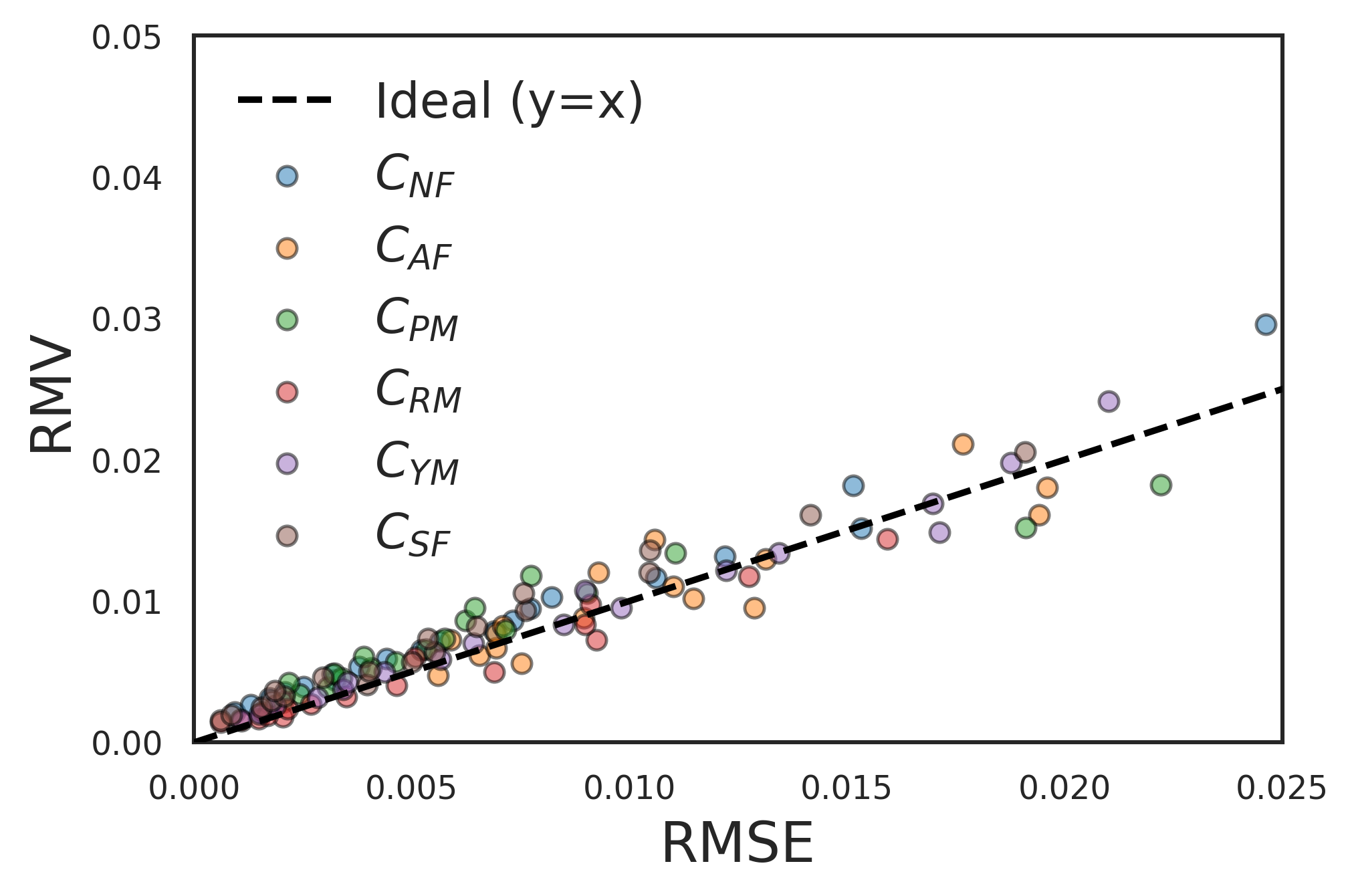}
        \caption{DE-2}\label{fig:comp_NN_a}
    \end{subfigure}
    \vfill
    \begin{subfigure}[h]{0.8\textwidth}
        \centering
        \includegraphics[height=0.18\textheight]{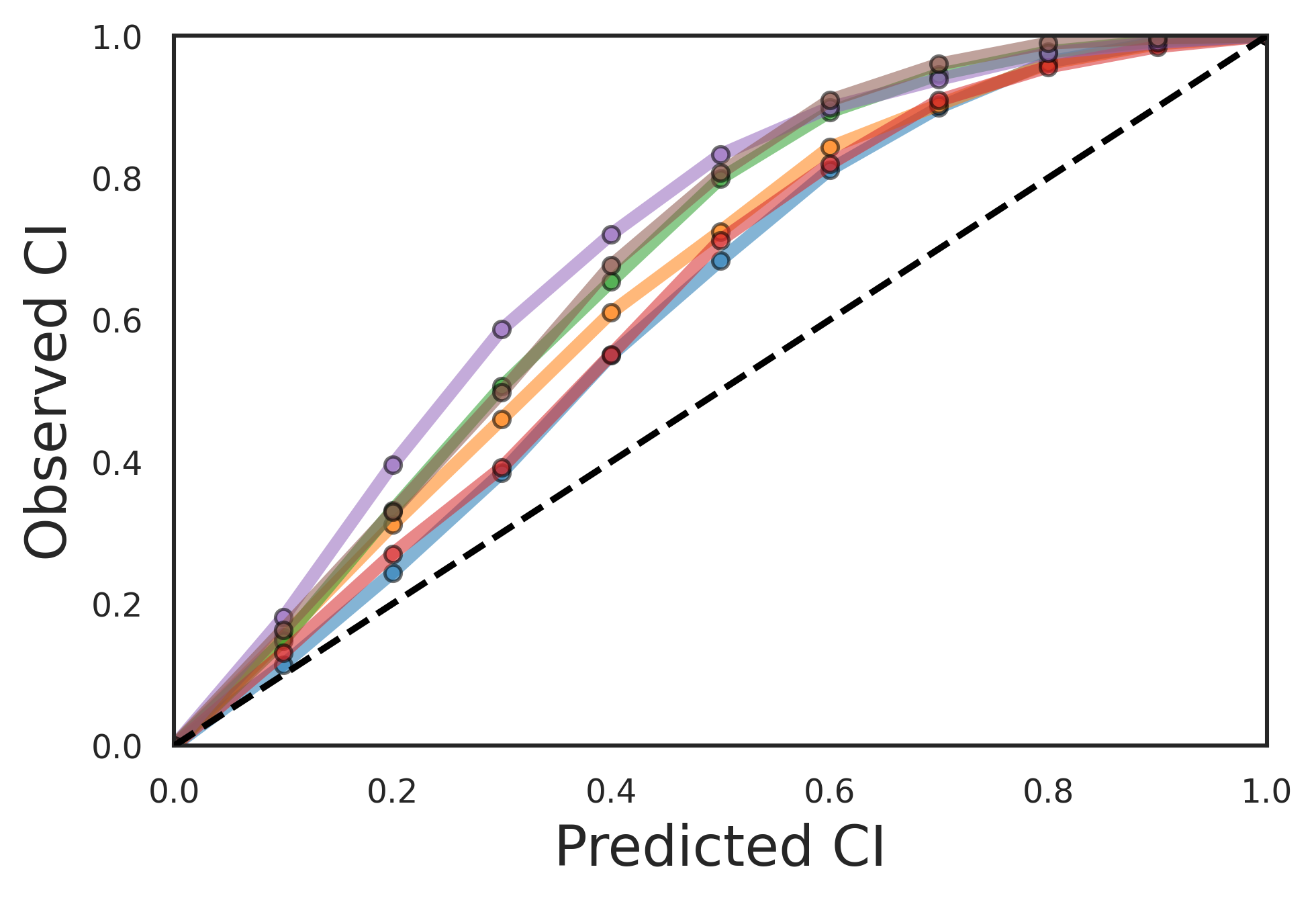}
        \hspace{0.0\columnwidth}
        \includegraphics[height=0.18\textheight]{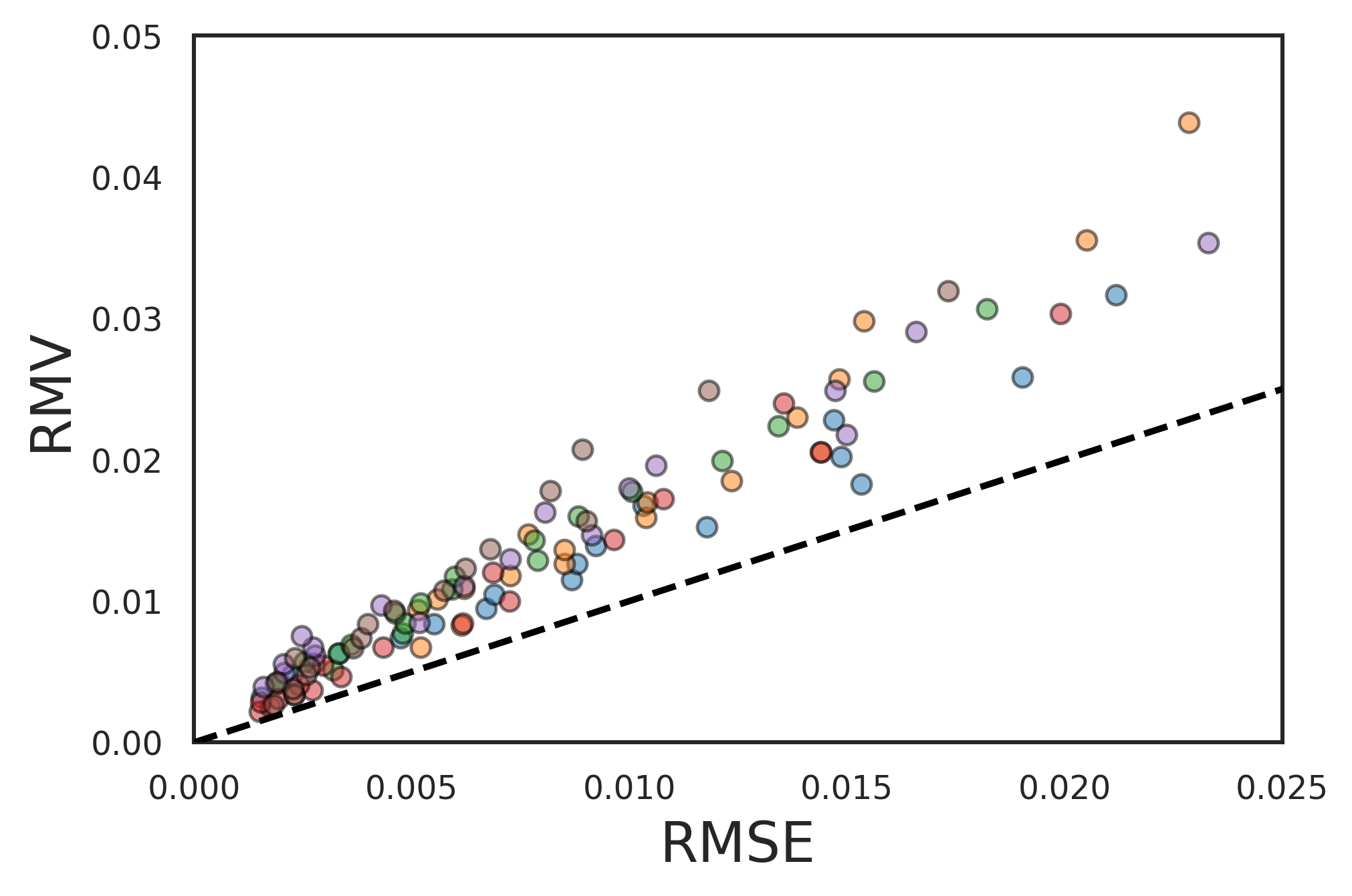}
        \caption{DE-4}\label{fig:comp_NN_b}
    \end{subfigure}
    \vfill
    \begin{subfigure}[h]{0.8\textwidth}
        \centering
        \includegraphics[height=0.18\textheight]{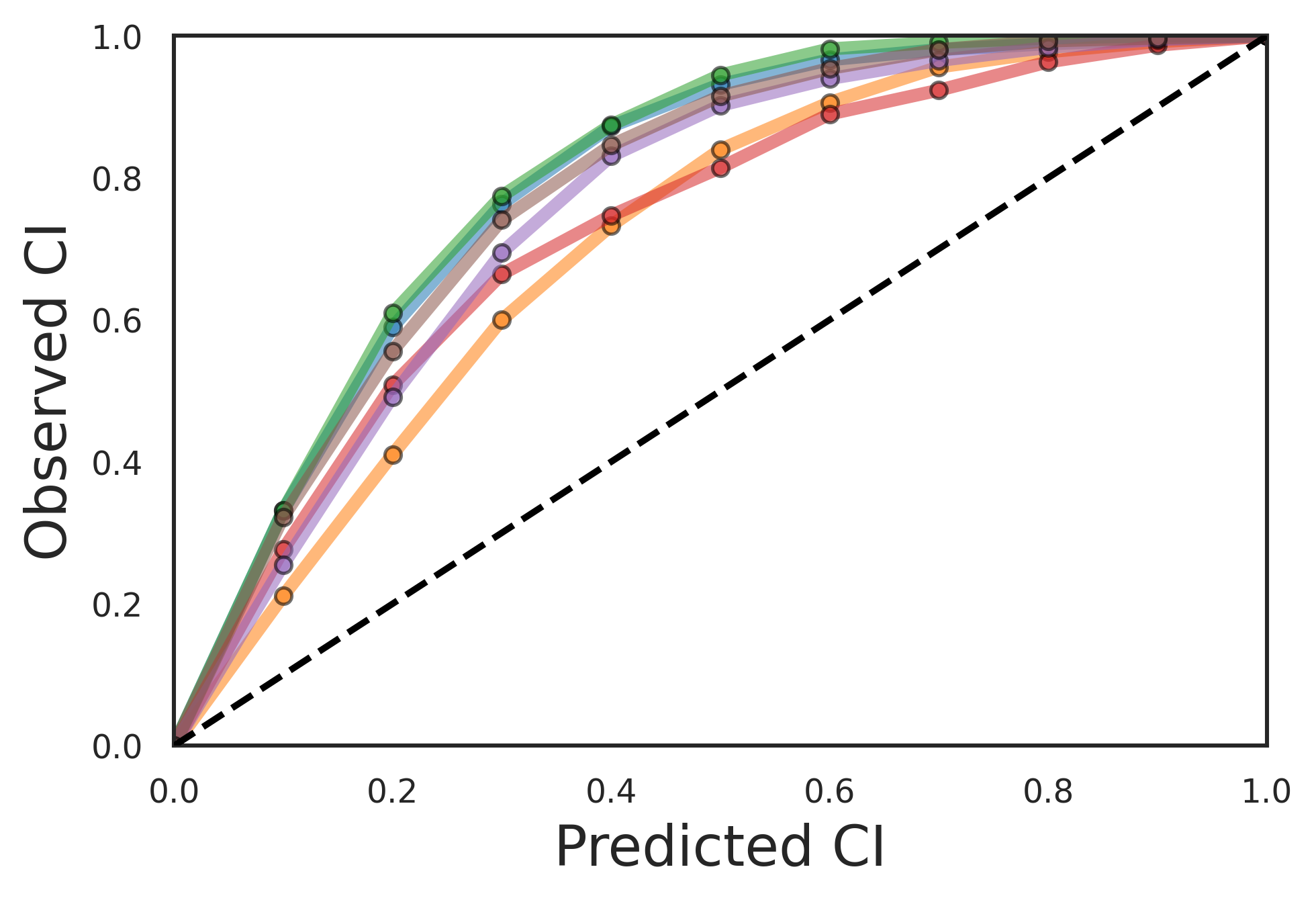}
        \hspace{0.0\columnwidth}
        \includegraphics[height=0.18\textheight]{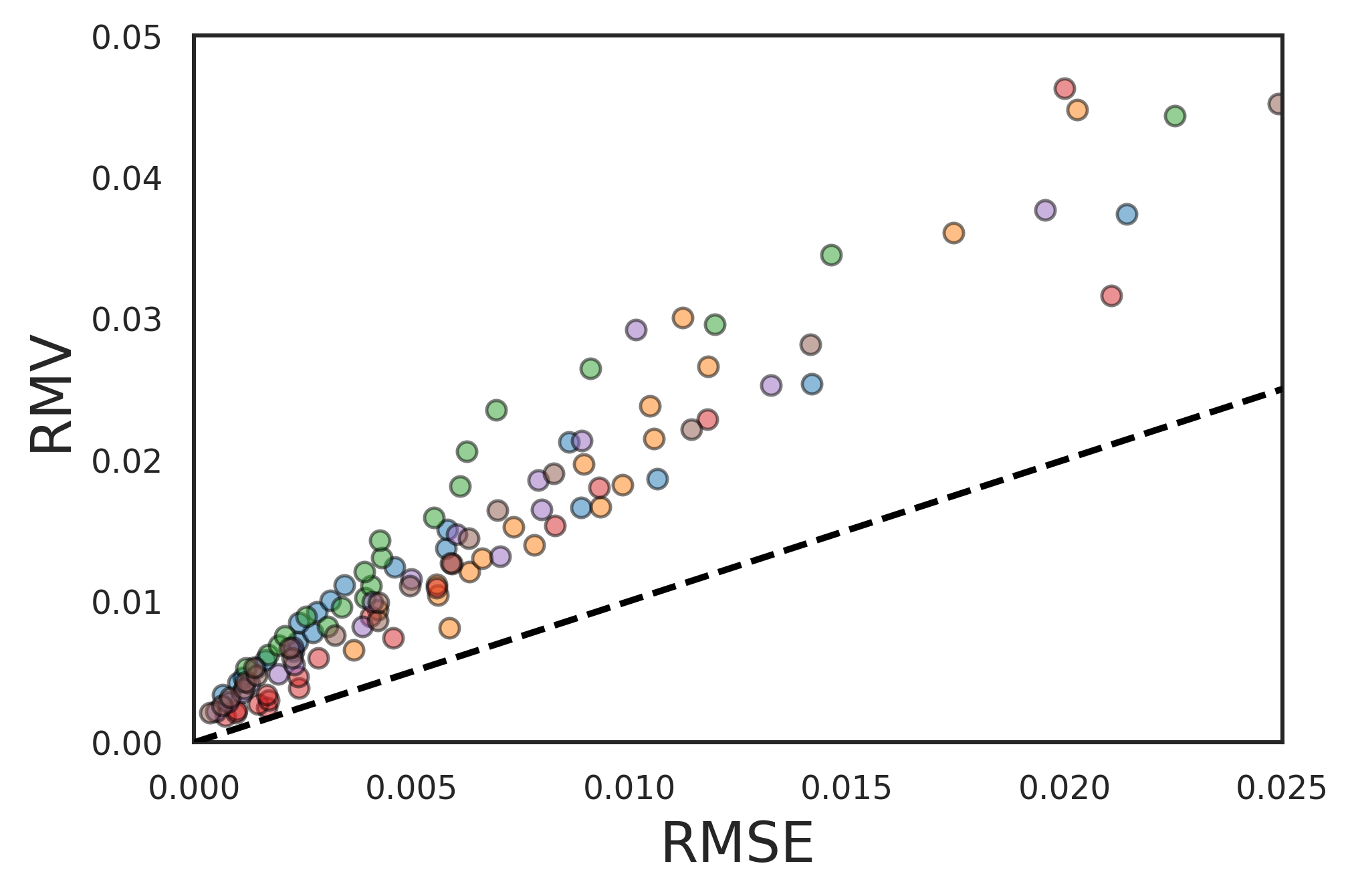}
        \caption{DE-8}\label{fig:comp_NN_c}
    \end{subfigure}
    \vfill
    \begin{subfigure}[h]{0.8\textwidth}
        \centering
        \includegraphics[height=0.18\textheight]{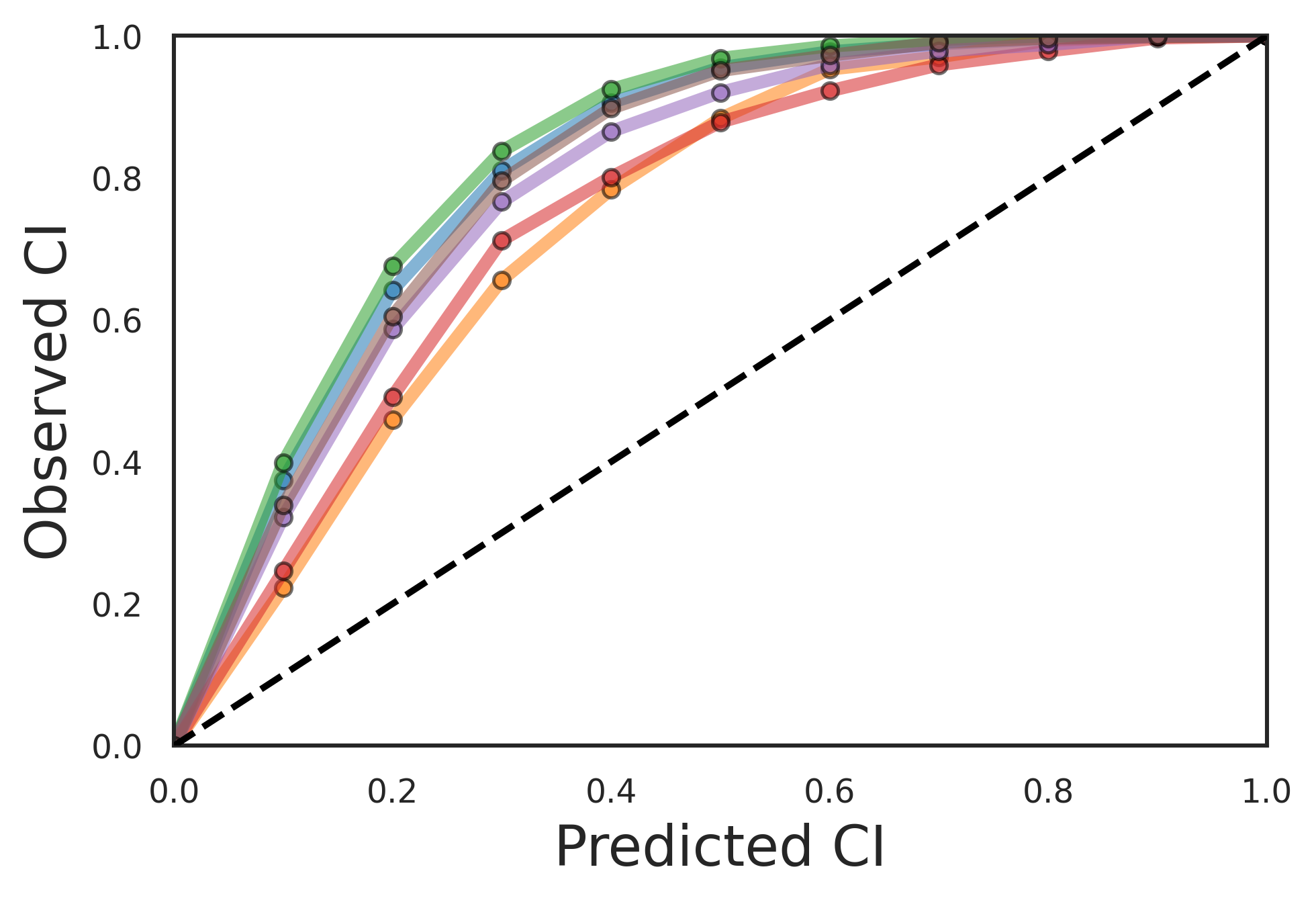}
        \hspace{0.0\columnwidth}
        \includegraphics[height=0.18\textheight]{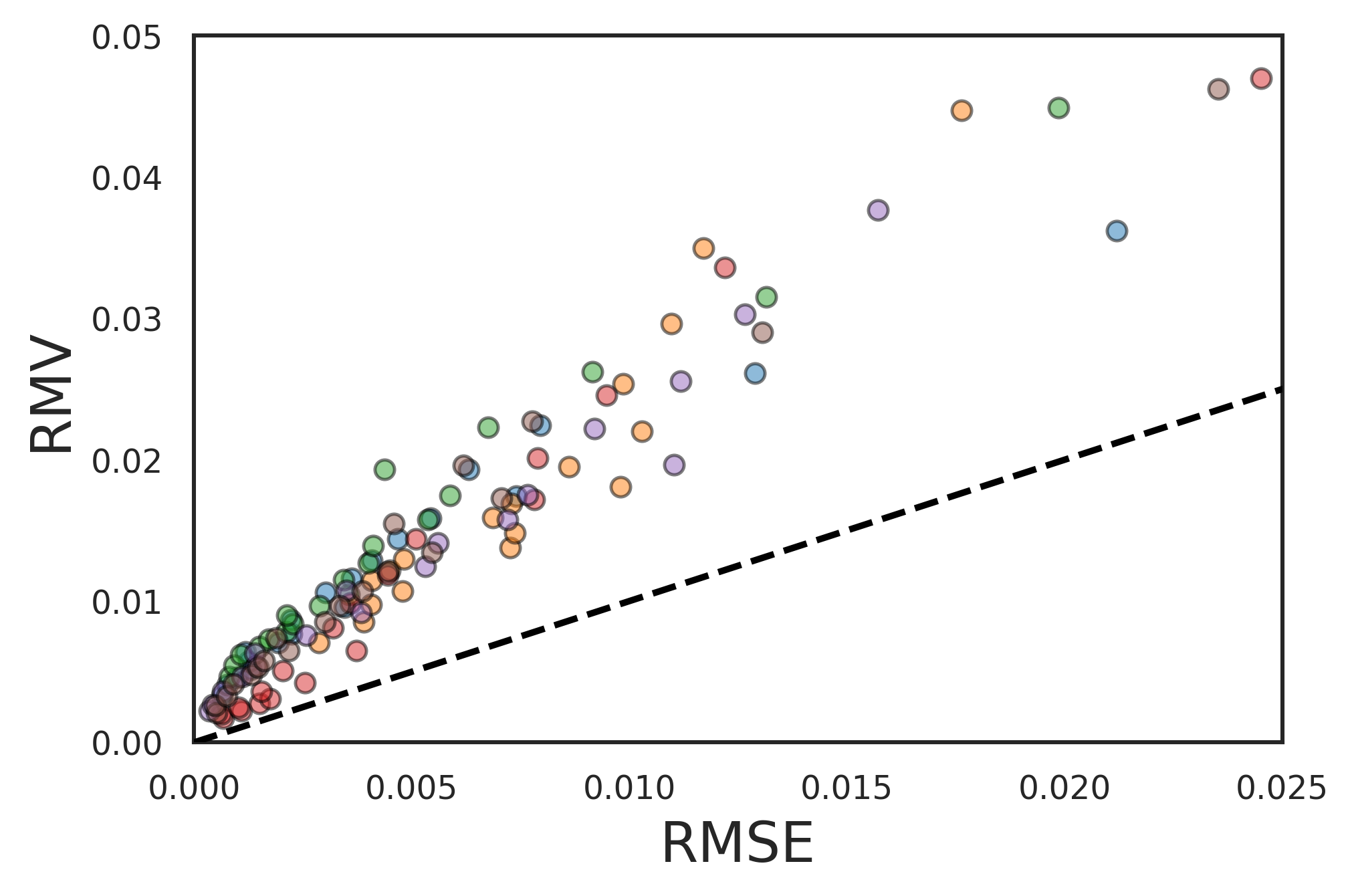}
        \caption{DE-16}\label{fig:comp_NN_d}
    \end{subfigure}
    \caption{Reliability plots of vanilla DE models: (left) CI-based reliability plots, (right) error-based reliability plots.}
    \label{fig:reliab_DE_comprehensive}
\end{figure*}

\clearpage
\section{Additional results in Section \ref{sec:UQ_calib}}
\label{sec:app_UQ_aft}

In \ref{sec:app_UQ_bef}, the reliability plots of DE models before STD calibration (vanilla DE models) are shown; this section shows the results after STD calibration. It is shown that all DE models become well-calibrated after calibration, even for the DE-16 model, which was the most miscalibrated DE model.

\begin{figure*}[ht!]
    \centering
    \begin{subfigure}[h]{0.8\textwidth}
        \centering
        \includegraphics[height=0.18\textheight]{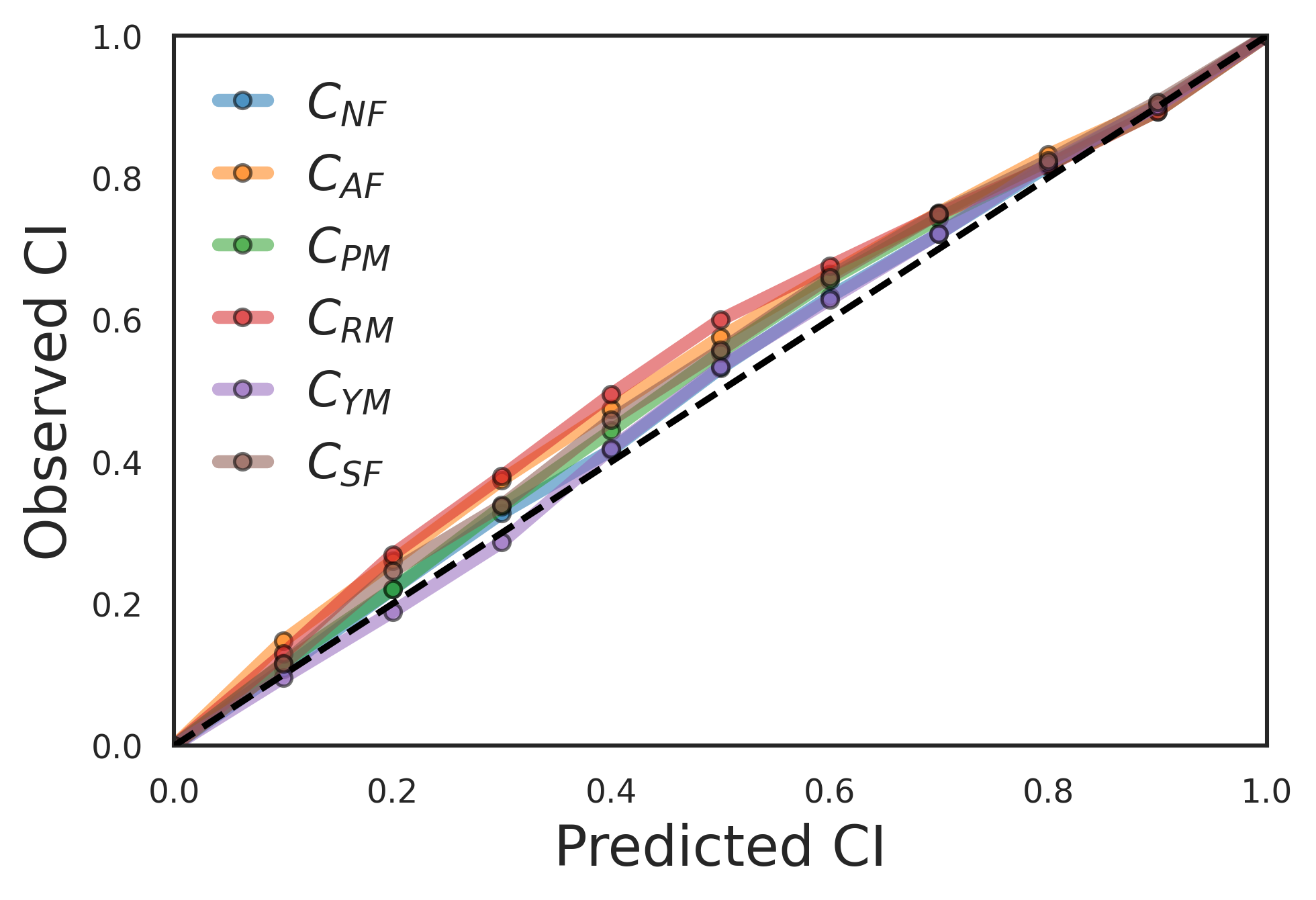}
        \hspace{0.0\columnwidth}
        \includegraphics[height=0.18\textheight]{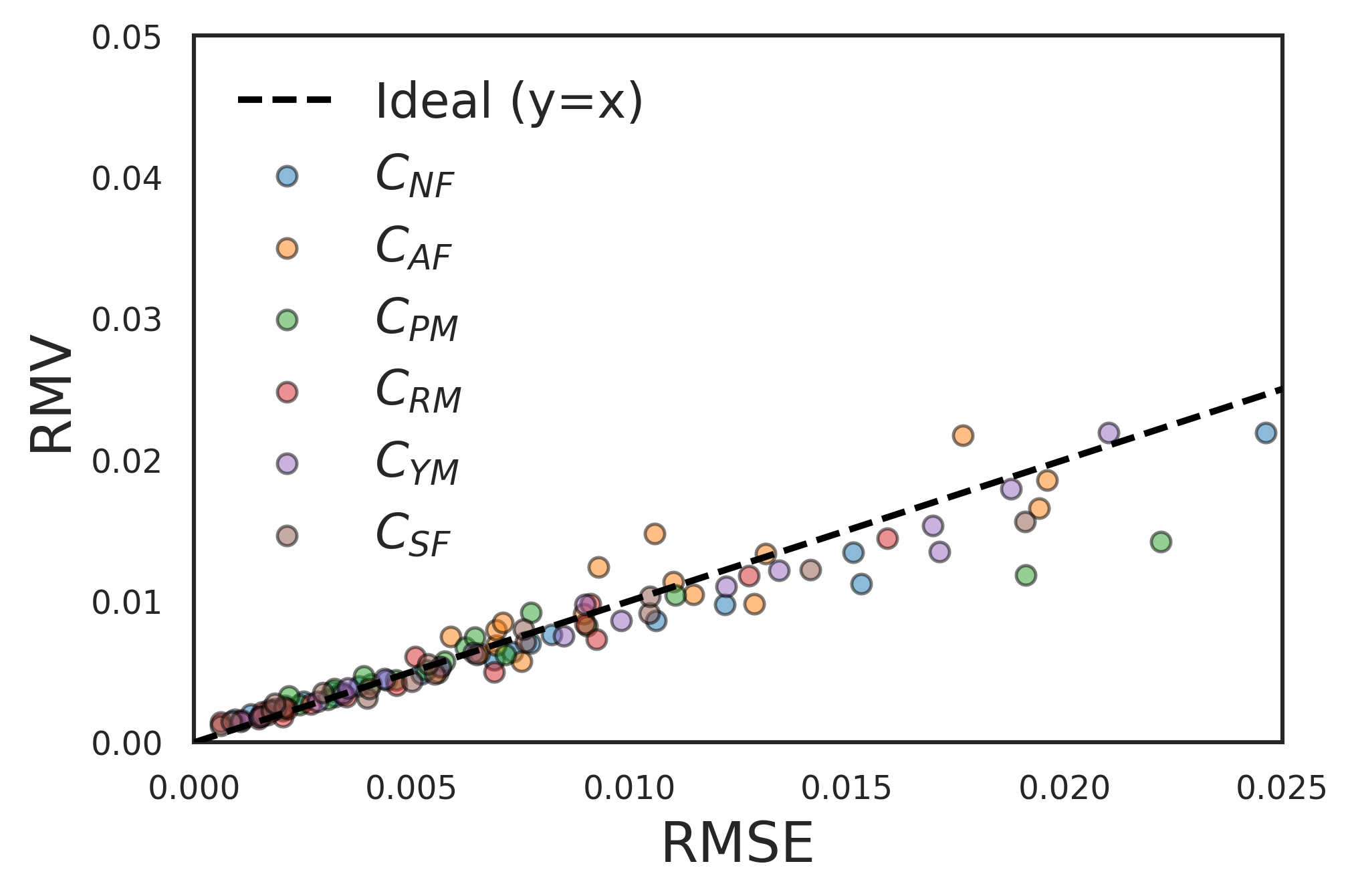}
        \caption{DE-2}\label{fig:calcomp_NN_a}
    \end{subfigure}
    \vfill
    \begin{subfigure}[h]{0.8\textwidth}
        \centering
        \includegraphics[height=0.18\textheight]{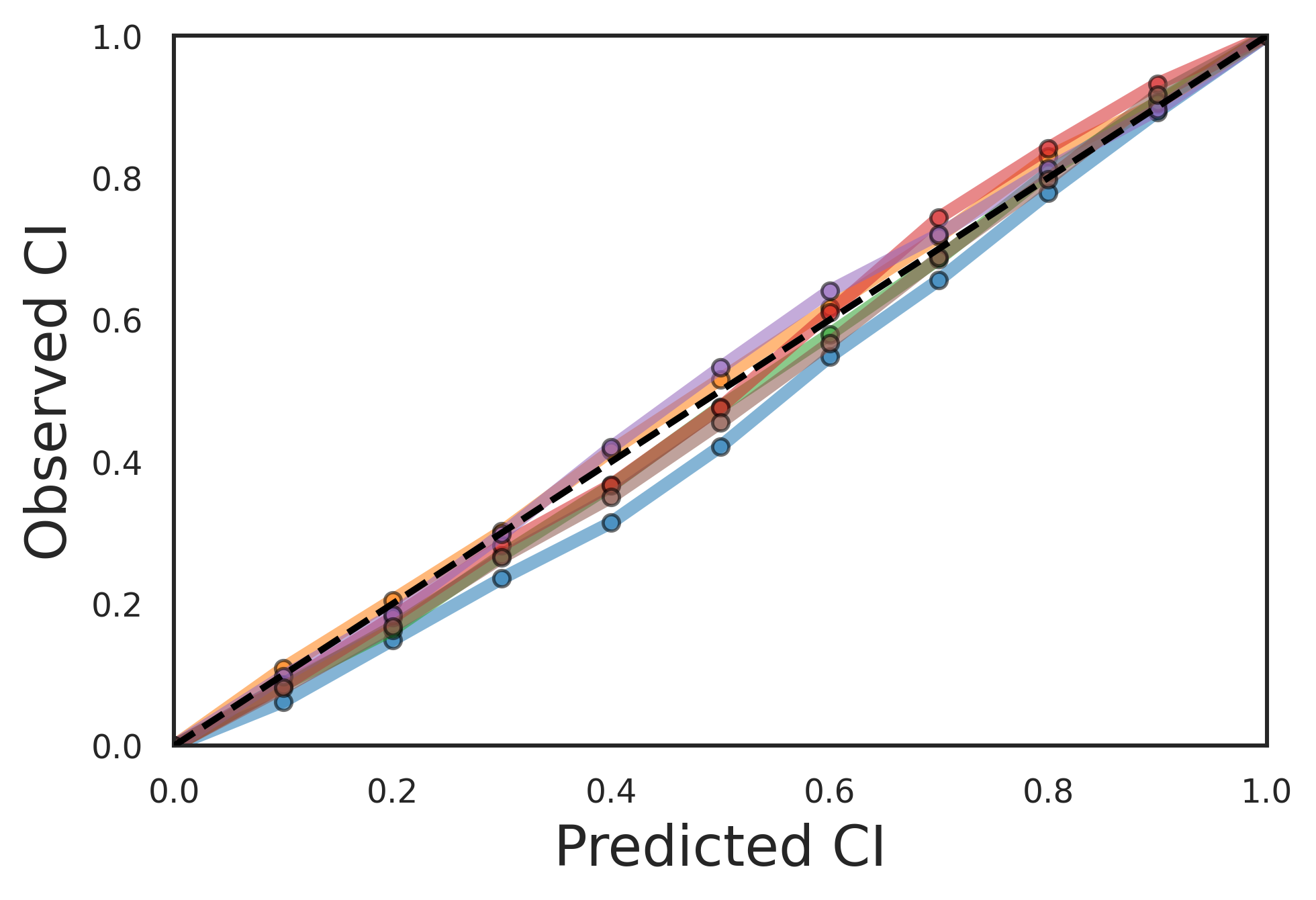}
        \hspace{0.0\columnwidth}
        \includegraphics[height=0.18\textheight]{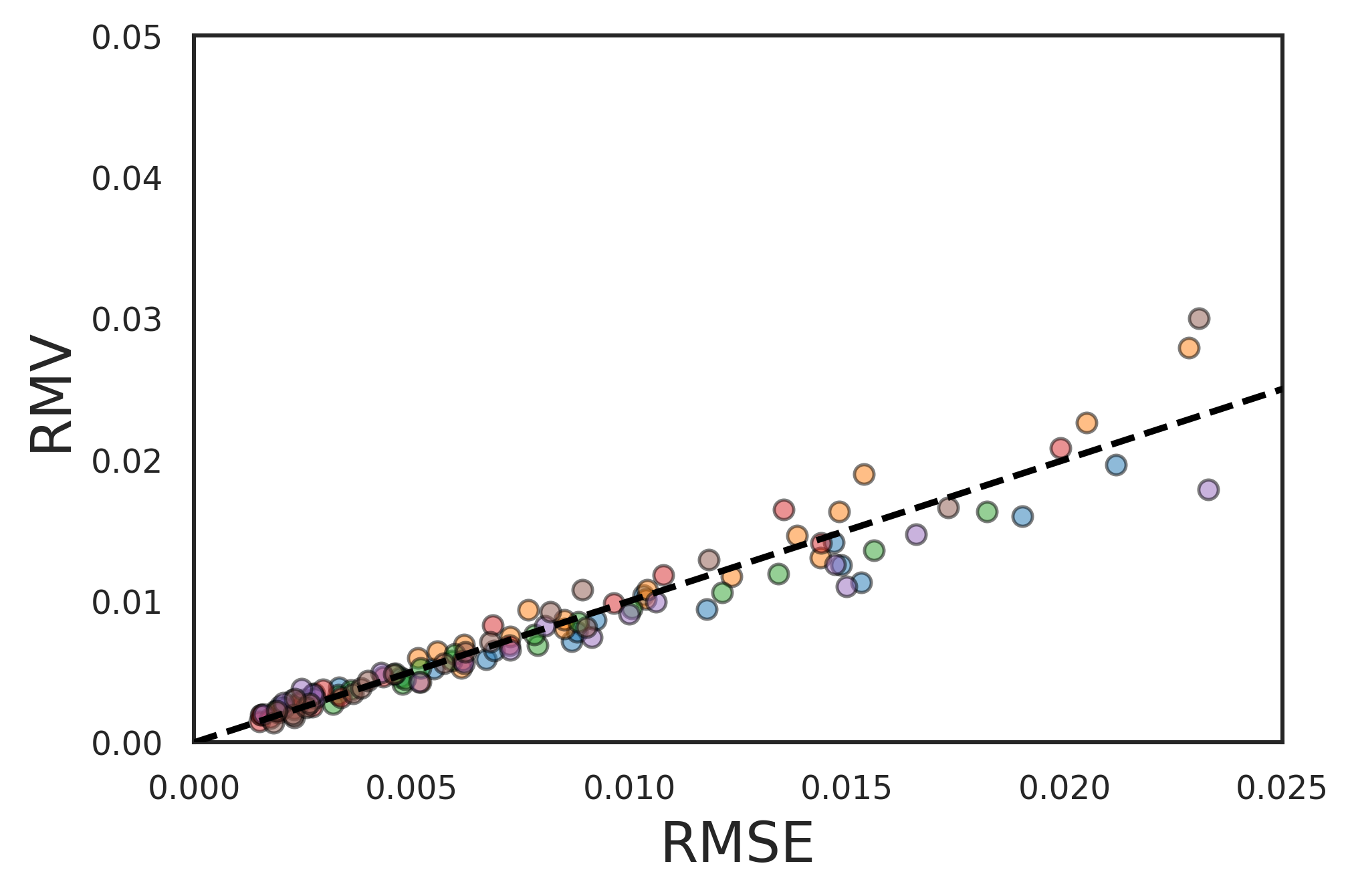}
        \caption{DE-4}\label{fig:calcomp_NN_b}
    \end{subfigure}
    \vfill
    \begin{subfigure}[h]{0.8\textwidth}
        \centering
        \includegraphics[height=0.18\textheight]{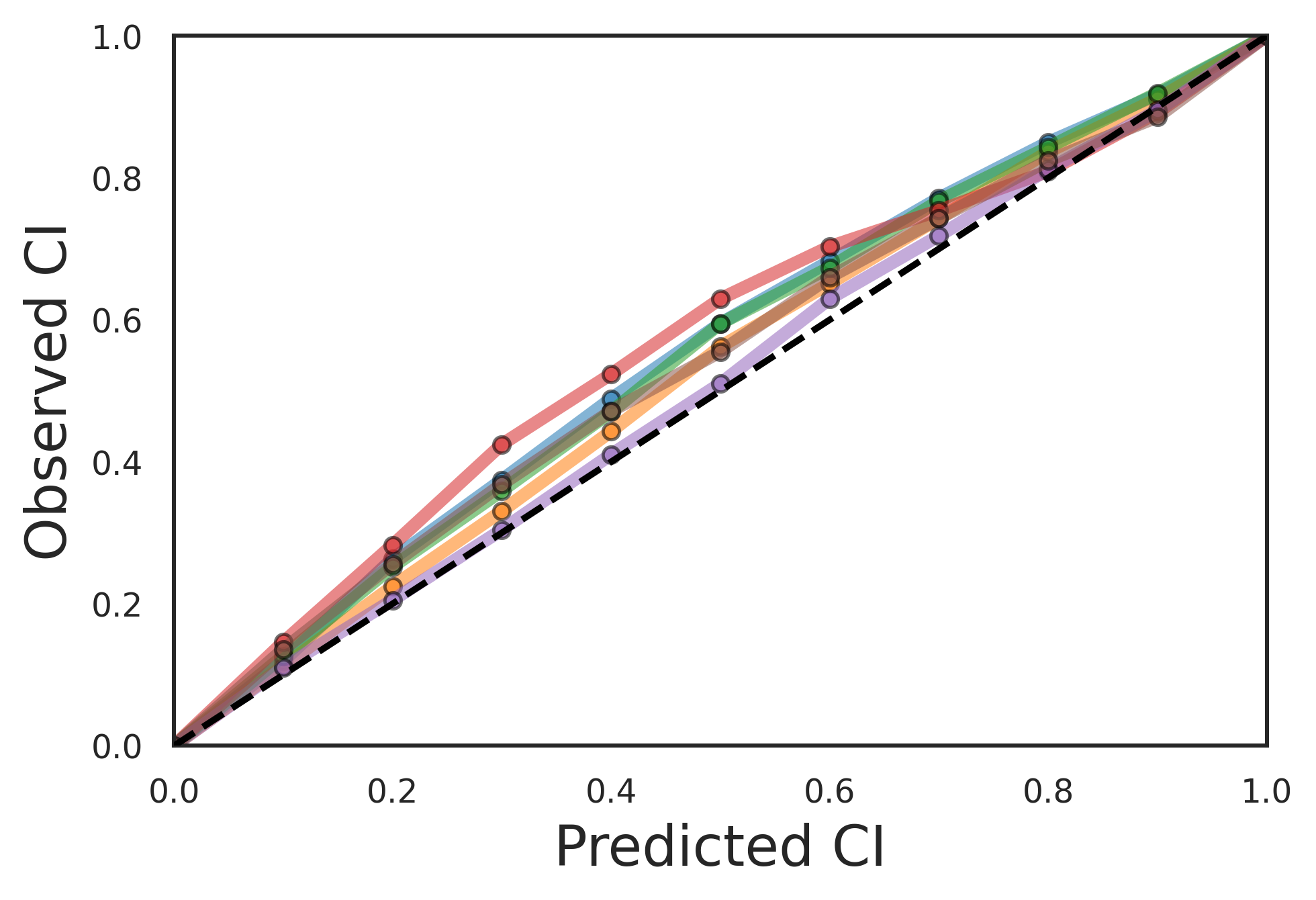}
        \hspace{0.0\columnwidth}
        \includegraphics[height=0.18\textheight]{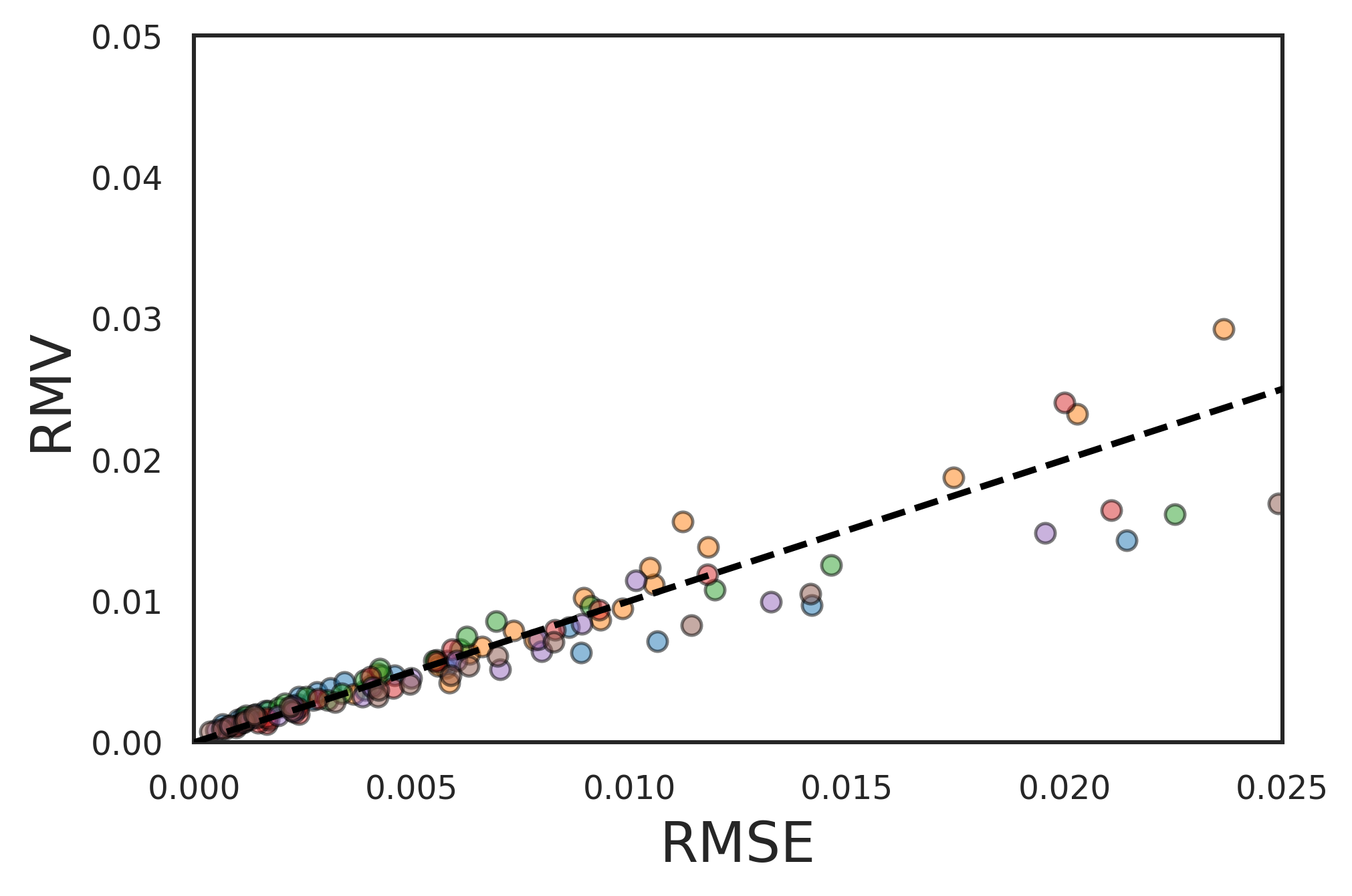}
        \caption{DE-8}\label{fig:calcomp_NN_c}
    \end{subfigure}
    \vfill
    \begin{subfigure}[h]{0.8\textwidth}
        \centering
        \includegraphics[height=0.18\textheight]{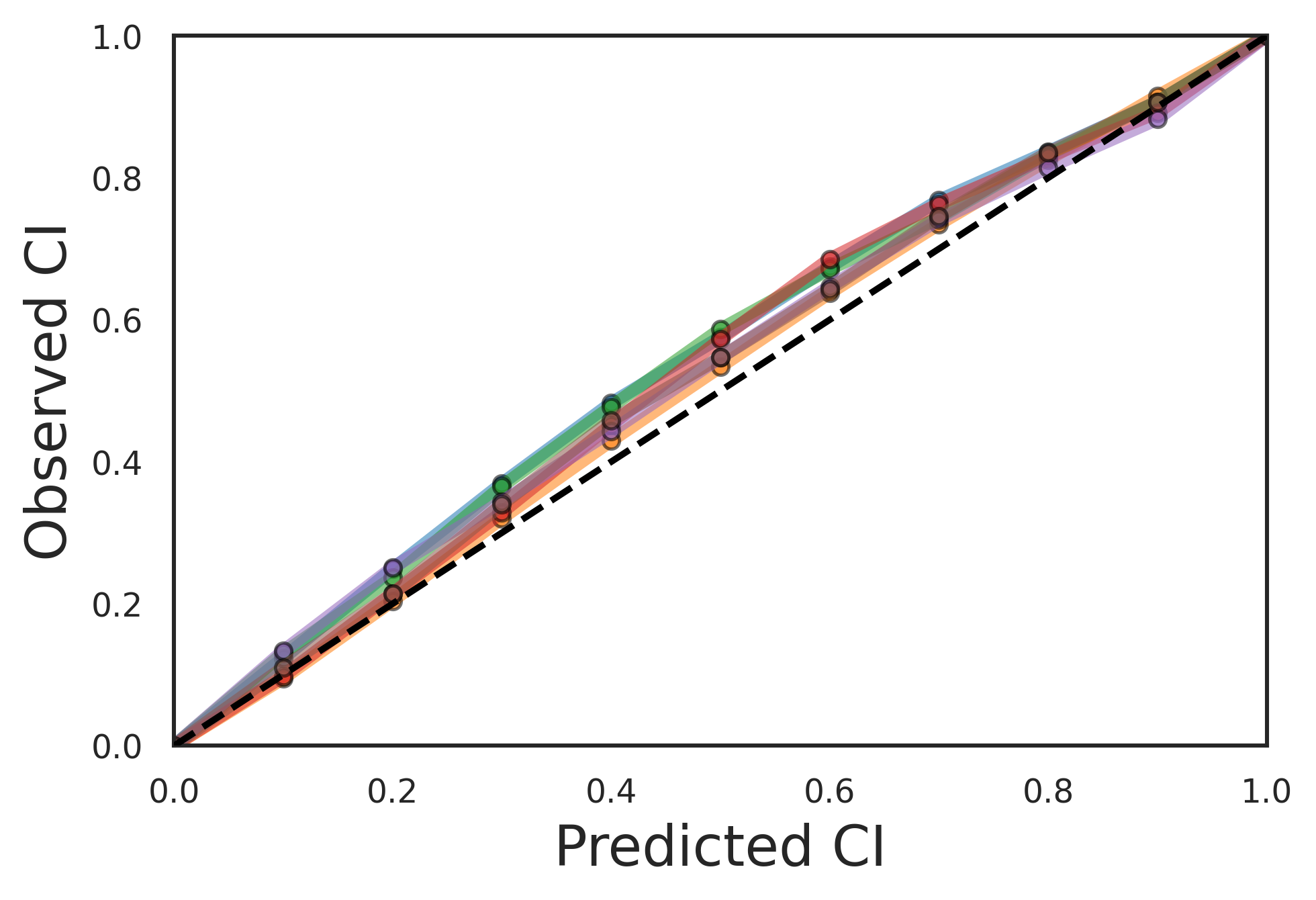}
        \hspace{0.0\columnwidth}
        \includegraphics[height=0.18\textheight]{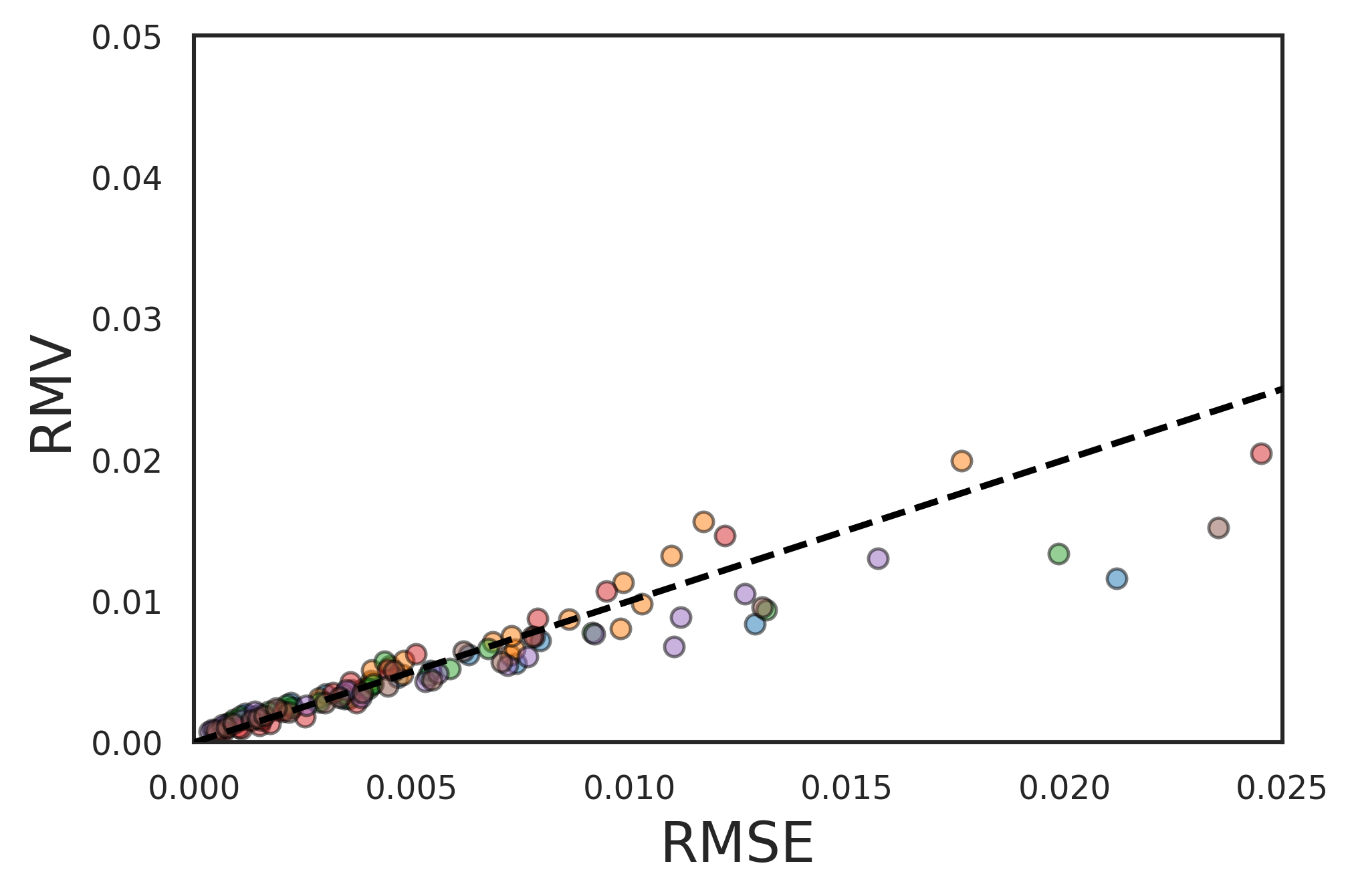}
        \caption{DE-16}\label{fig:calcomp_NN_d}
    \end{subfigure}
    \caption{Reliability plots of DE models after STD calibration: (left) CI-based reliability plots, (right) error-based reliability plots.}
    \label{fig:reliab_DE_cal}
\end{figure*}

\bibliographystyle{elsarticle-harv} 
\bibliography{cas-refs}

\begin{thebibliography}{93}
\expandafter\ifx\csname natexlab\endcsname\relax\def\natexlab#1{#1}\fi
\providecommand{\url}[1]{\texttt{#1}}
\providecommand{\href}[2]{#2}
\providecommand{\path}[1]{#1}
\providecommand{\DOIprefix}{doi:}
\providecommand{\ArXivprefix}{arXiv:}
\providecommand{\URLprefix}{URL: }
\providecommand{\Pubmedprefix}{pmid:}
\providecommand{\doi}[1]{\href{http://dx.doi.org/#1}{\path{#1}}}
\providecommand{\Pubmed}[1]{\href{pmid:#1}{\path{#1}}}
\providecommand{\bibinfo}[2]{#2}
\ifx\xfnm\relax \def\xfnm[#1]{\unskip,\space#1}\fi
\bibitem[{Allen(2005)}]{allen2005aerodynamics}
\bibinfo{author}{Allen, J.M.}, \bibinfo{year}{2005}.
\newblock \bibinfo{title}{Aerodynamics of an axisymmetric missile concept having cruciform strakes and in-line tail fins from Mach 0.60 to 4.63}.
\newblock \bibinfo{type}{Technical Report}.
\bibitem[{Alvarez et~al.(2012)Alvarez, Rosasco, Lawrence et~al.}]{alvarez2012kernels}
\bibinfo{author}{Alvarez, M.A.}, \bibinfo{author}{Rosasco, L.}, \bibinfo{author}{Lawrence, N.D.}, et~al., \bibinfo{year}{2012}.
\newblock \bibinfo{title}{Kernels for vector-valued functions: A review}.
\newblock \bibinfo{journal}{Foundations and Trends{\textregistered} in Machine Learning} \bibinfo{volume}{4}, \bibinfo{pages}{195--266}.
\bibitem[{Ashukha et~al.(2020)Ashukha, Lyzhov, Molchanov and Vetrov}]{ashukha2020pitfalls}
\bibinfo{author}{Ashukha, A.}, \bibinfo{author}{Lyzhov, A.}, \bibinfo{author}{Molchanov, D.}, \bibinfo{author}{Vetrov, D.}, \bibinfo{year}{2020}.
\newblock \bibinfo{title}{Pitfalls of in-domain uncertainty estimation and ensembling in deep learning}.
\newblock \bibinfo{journal}{arXiv preprint arXiv:2002.06470} .
\bibitem[{Barron(1993)}]{barron1993universal}
\bibinfo{author}{Barron, A.R.}, \bibinfo{year}{1993}.
\newblock \bibinfo{title}{Universal approximation bounds for superpositions of a sigmoidal function}.
\newblock \bibinfo{journal}{IEEE Transactions on Information theory} \bibinfo{volume}{39}, \bibinfo{pages}{930--945}.
\bibitem[{de~Becdelievre and Kroo(2021)}]{de2021bayesian}
\bibinfo{author}{de~Becdelievre, J.}, \bibinfo{author}{Kroo, I.}, \bibinfo{year}{2021}.
\newblock \bibinfo{title}{A bayesian approach to collaborative optimization with application to tailless aircraft range maximization}, in: \bibinfo{booktitle}{AIAA AVIATION 2021 FORUM}, p. \bibinfo{pages}{3063}.
\bibitem[{Blake(1998)}]{blake1998missile}
\bibinfo{author}{Blake, W.B.}, \bibinfo{year}{1998}.
\newblock \bibinfo{title}{Missile Datcom: User's Manual-1997 FORTRAN 90 Revision.}
\newblock \bibinfo{type}{Technical Report}. Air Force Research Lab Wright-Patterson AFB OH Air Vehicles Directorate.
\bibitem[{Blank and Deb(2020)}]{blank2020pymoo}
\bibinfo{author}{Blank, J.}, \bibinfo{author}{Deb, K.}, \bibinfo{year}{2020}.
\newblock \bibinfo{title}{Pymoo: Multi-objective optimization in python}.
\newblock \bibinfo{journal}{IEEE Access} \bibinfo{volume}{8}, \bibinfo{pages}{89497--89509}.
\bibitem[{Chae et~al.(2010)Chae, Yee, Yang, Aoyama, Jeong and Obayashi}]{chae2010helicopter}
\bibinfo{author}{Chae, S.}, \bibinfo{author}{Yee, K.}, \bibinfo{author}{Yang, C.}, \bibinfo{author}{Aoyama, T.}, \bibinfo{author}{Jeong, S.}, \bibinfo{author}{Obayashi, S.}, \bibinfo{year}{2010}.
\newblock \bibinfo{title}{Helicopter rotor shape optimization for the improvement of aeroacoustic performance in hover}.
\newblock \bibinfo{journal}{Journal of Aircraft} \bibinfo{volume}{47}, \bibinfo{pages}{1770--1783}.
\bibitem[{Cheng and Boots(2017)}]{cheng2017variational}
\bibinfo{author}{Cheng, C.A.}, \bibinfo{author}{Boots, B.}, \bibinfo{year}{2017}.
\newblock \bibinfo{title}{Variational inference for gaussian process models with linear complexity}.
\newblock \bibinfo{journal}{Advances in Neural Information Processing Systems} \bibinfo{volume}{30}.
\bibitem[{Deruyttere et~al.(2021)Deruyttere, Milewski and Moens}]{deruyttere2021giving}
\bibinfo{author}{Deruyttere, T.}, \bibinfo{author}{Milewski, V.}, \bibinfo{author}{Moens, M.F.}, \bibinfo{year}{2021}.
\newblock \bibinfo{title}{Giving commands to a self-driving car: How to deal with uncertain situations?}
\newblock \bibinfo{journal}{Engineering applications of artificial intelligence} \bibinfo{volume}{103}, \bibinfo{pages}{104257}.
\bibitem[{Duchi et~al.(2011)Duchi, Hazan and Singer}]{duchi2011adaptive}
\bibinfo{author}{Duchi, J.}, \bibinfo{author}{Hazan, E.}, \bibinfo{author}{Singer, Y.}, \bibinfo{year}{2011}.
\newblock \bibinfo{title}{Adaptive subgradient methods for online learning and stochastic optimization.}
\newblock \bibinfo{journal}{Journal of machine learning research} \bibinfo{volume}{12}.
\bibitem[{Egele et~al.(2022)Egele, Maulik, Raghavan, Lusch, Guyon and Balaprakash}]{egele2022autodeuq}
\bibinfo{author}{Egele, R.}, \bibinfo{author}{Maulik, R.}, \bibinfo{author}{Raghavan, K.}, \bibinfo{author}{Lusch, B.}, \bibinfo{author}{Guyon, I.}, \bibinfo{author}{Balaprakash, P.}, \bibinfo{year}{2022}.
\newblock \bibinfo{title}{Autodeuq: Automated deep ensemble with uncertainty quantification}, in: \bibinfo{booktitle}{2022 26th International Conference on Pattern Recognition (ICPR)}, \bibinfo{organization}{IEEE}. pp. \bibinfo{pages}{1908--1914}.
\bibitem[{Fern{\'a}ndez et~al.(2023)Fern{\'a}ndez, Chiach{\'\i}o, Chiach{\'\i}o, Barros and Corbetta}]{fernandez2023physics}
\bibinfo{author}{Fern{\'a}ndez, J.}, \bibinfo{author}{Chiach{\'\i}o, J.}, \bibinfo{author}{Chiach{\'\i}o, M.}, \bibinfo{author}{Barros, J.}, \bibinfo{author}{Corbetta, M.}, \bibinfo{year}{2023}.
\newblock \bibinfo{title}{Physics-guided bayesian neural networks by abc-ss: Application to reinforced concrete columns}.
\newblock \bibinfo{journal}{Engineering Applications of Artificial Intelligence} \bibinfo{volume}{119}, \bibinfo{pages}{105790}.
\bibitem[{Fern{\'a}ndez et~al.(2022)Fern{\'a}ndez, Chiach{\'\i}o, Chiach{\'\i}o, Mu{\~n}oz and Herrera}]{fernandez2022uncertainty}
\bibinfo{author}{Fern{\'a}ndez, J.}, \bibinfo{author}{Chiach{\'\i}o, M.}, \bibinfo{author}{Chiach{\'\i}o, J.}, \bibinfo{author}{Mu{\~n}oz, R.}, \bibinfo{author}{Herrera, F.}, \bibinfo{year}{2022}.
\newblock \bibinfo{title}{Uncertainty quantification in neural networks by approximate bayesian computation: Application to fatigue in composite materials}.
\newblock \bibinfo{journal}{Engineering Applications of Artificial Intelligence} \bibinfo{volume}{107}, \bibinfo{pages}{104511}.
\bibitem[{Folgoc et~al.(2021)Folgoc, Baltatzis, Desai, Devaraj, Ellis, Manzanera, Nair, Qiu, Schnabel and Glocker}]{folgoc2021mc}
\bibinfo{author}{Folgoc, L.L.}, \bibinfo{author}{Baltatzis, V.}, \bibinfo{author}{Desai, S.}, \bibinfo{author}{Devaraj, A.}, \bibinfo{author}{Ellis, S.}, \bibinfo{author}{Manzanera, O.E.M.}, \bibinfo{author}{Nair, A.}, \bibinfo{author}{Qiu, H.}, \bibinfo{author}{Schnabel, J.}, \bibinfo{author}{Glocker, B.}, \bibinfo{year}{2021}.
\newblock \bibinfo{title}{Is mc dropout bayesian?}
\newblock \bibinfo{journal}{arXiv preprint arXiv:2110.04286} .
\bibitem[{Fort et~al.(2019)Fort, Hu and Lakshminarayanan}]{fort2019deep}
\bibinfo{author}{Fort, S.}, \bibinfo{author}{Hu, H.}, \bibinfo{author}{Lakshminarayanan, B.}, \bibinfo{year}{2019}.
\newblock \bibinfo{title}{Deep ensembles: A loss landscape perspective}.
\newblock \bibinfo{journal}{arXiv preprint arXiv:1912.02757} .
\bibitem[{Gal and Ghahramani(2016)}]{gal2016dropout}
\bibinfo{author}{Gal, Y.}, \bibinfo{author}{Ghahramani, Z.}, \bibinfo{year}{2016}.
\newblock \bibinfo{title}{Dropout as a bayesian approximation: Representing model uncertainty in deep learning}, in: \bibinfo{booktitle}{international conference on machine learning}, \bibinfo{organization}{PMLR}. pp. \bibinfo{pages}{1050--1059}.
\bibitem[{Gal et~al.(2016)}]{gal2016uncertainty}
\bibinfo{author}{Gal, Y.}, et~al., \bibinfo{year}{2016}.
\newblock \bibinfo{title}{Uncertainty in deep learning} .
\bibitem[{Gneiting and Raftery(2007)}]{gneiting2007strictly}
\bibinfo{author}{Gneiting, T.}, \bibinfo{author}{Raftery, A.E.}, \bibinfo{year}{2007}.
\newblock \bibinfo{title}{Strictly proper scoring rules, prediction, and estimation}.
\newblock \bibinfo{journal}{Journal of the American statistical Association} \bibinfo{volume}{102}, \bibinfo{pages}{359--378}.
\bibitem[{Goodfellow et~al.(2016)Goodfellow, Bengio and Courville}]{goodfellow2016deep}
\bibinfo{author}{Goodfellow, I.}, \bibinfo{author}{Bengio, Y.}, \bibinfo{author}{Courville, A.}, \bibinfo{year}{2016}.
\newblock \bibinfo{title}{Deep learning}.
\newblock \bibinfo{publisher}{MIT press}.
\bibitem[{Guo et~al.(2017)Guo, Pleiss, Sun and Weinberger}]{guo2017calibration}
\bibinfo{author}{Guo, C.}, \bibinfo{author}{Pleiss, G.}, \bibinfo{author}{Sun, Y.}, \bibinfo{author}{Weinberger, K.Q.}, \bibinfo{year}{2017}.
\newblock \bibinfo{title}{On calibration of modern neural networks}, in: \bibinfo{booktitle}{International conference on machine learning}, \bibinfo{organization}{PMLR}. pp. \bibinfo{pages}{1321--1330}.
\bibitem[{Gustafsson et~al.(2020)Gustafsson, Danelljan and Schon}]{gustafsson2020evaluating}
\bibinfo{author}{Gustafsson, F.K.}, \bibinfo{author}{Danelljan, M.}, \bibinfo{author}{Schon, T.B.}, \bibinfo{year}{2020}.
\newblock \bibinfo{title}{Evaluating scalable bayesian deep learning methods for robust computer vision}, in: \bibinfo{booktitle}{Proceedings of the IEEE/CVF conference on computer vision and pattern recognition workshops}, pp. \bibinfo{pages}{318--319}.
\bibitem[{Hastie et~al.(2009)Hastie, Tibshirani, Friedman and Friedman}]{hastie2009elements}
\bibinfo{author}{Hastie, T.}, \bibinfo{author}{Tibshirani, R.}, \bibinfo{author}{Friedman, J.H.}, \bibinfo{author}{Friedman, J.H.}, \bibinfo{year}{2009}.
\newblock \bibinfo{title}{The elements of statistical learning: data mining, inference, and prediction}. volume~\bibinfo{volume}{2}.
\newblock \bibinfo{publisher}{Springer}.
\bibitem[{Hong et~al.(2023)Hong, Lee, Yang, Kook and Yee}]{hong2023exploration}
\bibinfo{author}{Hong, Y.}, \bibinfo{author}{Lee, D.}, \bibinfo{author}{Yang, S.}, \bibinfo{author}{Kook, H.}, \bibinfo{author}{Yee, K.}, \bibinfo{year}{2023}.
\newblock \bibinfo{title}{Exploration of stacked rotor designs for aerodynamics in hover}.
\newblock \bibinfo{journal}{Aerospace Science and Technology} \bibinfo{volume}{141}, \bibinfo{pages}{108557}.
\bibitem[{Hornik et~al.(1989)Hornik, Stinchcombe and White}]{hornik1989multilayer}
\bibinfo{author}{Hornik, K.}, \bibinfo{author}{Stinchcombe, M.}, \bibinfo{author}{White, H.}, \bibinfo{year}{1989}.
\newblock \bibinfo{title}{Multilayer feedforward networks are universal approximators}.
\newblock \bibinfo{journal}{Neural networks} \bibinfo{volume}{2}, \bibinfo{pages}{359--366}.
\bibitem[{Hron et~al.(2018)Hron, Matthews and Ghahramani}]{hron2018variational}
\bibinfo{author}{Hron, J.}, \bibinfo{author}{Matthews, A.}, \bibinfo{author}{Ghahramani, Z.}, \bibinfo{year}{2018}.
\newblock \bibinfo{title}{Variational bayesian dropout: pitfalls and fixes}, in: \bibinfo{booktitle}{International Conference on Machine Learning}, \bibinfo{organization}{PMLR}. pp. \bibinfo{pages}{2019--2028}.
\bibitem[{Hron et~al.(2017)Hron, Matthews and Ghahramani}]{hron2017variational}
\bibinfo{author}{Hron, J.}, \bibinfo{author}{Matthews, A.G.d.G.}, \bibinfo{author}{Ghahramani, Z.}, \bibinfo{year}{2017}.
\newblock \bibinfo{title}{Variational gaussian dropout is not bayesian}.
\newblock \bibinfo{journal}{arXiv preprint arXiv:1711.02989} .
\bibitem[{Hu et~al.(2021)Hu, Pezzotti and Welling}]{hu2021learning}
\bibinfo{author}{Hu, S.}, \bibinfo{author}{Pezzotti, N.}, \bibinfo{author}{Welling, M.}, \bibinfo{year}{2021}.
\newblock \bibinfo{title}{Learning to predict error for mri reconstruction}, in: \bibinfo{booktitle}{International Conference on Medical Image Computing and Computer-Assisted Intervention}, \bibinfo{organization}{Springer}. pp. \bibinfo{pages}{604--613}.
\bibitem[{Ilg et~al.(2018)Ilg, Cicek, Galesso, Klein, Makansi, Hutter and Brox}]{ilg2018uncertainty}
\bibinfo{author}{Ilg, E.}, \bibinfo{author}{Cicek, O.}, \bibinfo{author}{Galesso, S.}, \bibinfo{author}{Klein, A.}, \bibinfo{author}{Makansi, O.}, \bibinfo{author}{Hutter, F.}, \bibinfo{author}{Brox, T.}, \bibinfo{year}{2018}.
\newblock \bibinfo{title}{Uncertainty estimates and multi-hypotheses networks for optical flow}, in: \bibinfo{booktitle}{Proceedings of the European Conference on Computer Vision (ECCV)}, pp. \bibinfo{pages}{652--667}.
\bibitem[{Jeong et~al.(2005)Jeong, Murayama and Yamamoto}]{jeong2005efficient}
\bibinfo{author}{Jeong, S.}, \bibinfo{author}{Murayama, M.}, \bibinfo{author}{Yamamoto, K.}, \bibinfo{year}{2005}.
\newblock \bibinfo{title}{Efficient optimization design method using kriging model}.
\newblock \bibinfo{journal}{Journal of aircraft} \bibinfo{volume}{42}, \bibinfo{pages}{413--420}.
\bibitem[{Jones et~al.(1998)Jones, Schonlau and Welch}]{jones1998efficient}
\bibinfo{author}{Jones, D.R.}, \bibinfo{author}{Schonlau, M.}, \bibinfo{author}{Welch, W.J.}, \bibinfo{year}{1998}.
\newblock \bibinfo{title}{Efficient global optimization of expensive black-box functions}.
\newblock \bibinfo{journal}{Journal of Global optimization} \bibinfo{volume}{13}, \bibinfo{pages}{455--492}.
\bibitem[{Kanazaki et~al.(2007)Kanazaki, Tanaka, Jeong and Yamamoto}]{kanazaki2007multi}
\bibinfo{author}{Kanazaki, M.}, \bibinfo{author}{Tanaka, K.}, \bibinfo{author}{Jeong, S.}, \bibinfo{author}{Yamamoto, K.}, \bibinfo{year}{2007}.
\newblock \bibinfo{title}{Multi-objective aerodynamic exploration of elements' setting for high-lift airfoil using kriging model}.
\newblock \bibinfo{journal}{Journal of Aircraft} \bibinfo{volume}{44}, \bibinfo{pages}{858--864}.
\bibitem[{Kang et~al.(2022)Kang, Yang and Yee}]{kang2022pof}
\bibinfo{author}{Kang, Y.E.}, \bibinfo{author}{Yang, S.}, \bibinfo{author}{Yee, K.}, \bibinfo{year}{2022}.
\newblock \bibinfo{title}{Physics-aware reduced-order modeling of transonic flow via β-variational autoencoder}.
\newblock \bibinfo{journal}{Physics of Fluids} \bibinfo{volume}{34}, \bibinfo{pages}{076103}.
\bibitem[{Kendall and Gal(2017)}]{kendall2017uncertainties}
\bibinfo{author}{Kendall, A.}, \bibinfo{author}{Gal, Y.}, \bibinfo{year}{2017}.
\newblock \bibinfo{title}{What uncertainties do we need in bayesian deep learning for computer vision?}
\newblock \bibinfo{journal}{Advances in neural information processing systems} \bibinfo{volume}{30}.
\bibitem[{Ke{\ss}ler et~al.(2019)Ke{\ss}ler, Kunde, McBride, Mertens, Michaels, Sundmacher and Kienle}]{kessler2019global}
\bibinfo{author}{Ke{\ss}ler, T.}, \bibinfo{author}{Kunde, C.}, \bibinfo{author}{McBride, K.}, \bibinfo{author}{Mertens, N.}, \bibinfo{author}{Michaels, D.}, \bibinfo{author}{Sundmacher, K.}, \bibinfo{author}{Kienle, A.}, \bibinfo{year}{2019}.
\newblock \bibinfo{title}{Global optimization of distillation columns using explicit and implicit surrogate models}.
\newblock \bibinfo{journal}{Chemical Engineering Science} \bibinfo{volume}{197}, \bibinfo{pages}{235--245}.
\bibitem[{Kingma and Ba(2014)}]{kingma2014adam}
\bibinfo{author}{Kingma, D.P.}, \bibinfo{author}{Ba, J.}, \bibinfo{year}{2014}.
\newblock \bibinfo{title}{Adam: A method for stochastic optimization}.
\newblock \bibinfo{journal}{arXiv preprint arXiv:1412.6980} .
\bibitem[{Kuleshov et~al.(2018)Kuleshov, Fenner and Ermon}]{kuleshov2018accurate}
\bibinfo{author}{Kuleshov, V.}, \bibinfo{author}{Fenner, N.}, \bibinfo{author}{Ermon, S.}, \bibinfo{year}{2018}.
\newblock \bibinfo{title}{Accurate uncertainties for deep learning using calibrated regression}, in: \bibinfo{booktitle}{International conference on machine learning}, \bibinfo{organization}{PMLR}. pp. \bibinfo{pages}{2796--2804}.
\bibitem[{Lakshminarayanan et~al.(2017)Lakshminarayanan, Pritzel and Blundell}]{lakshminarayanan2017simple}
\bibinfo{author}{Lakshminarayanan, B.}, \bibinfo{author}{Pritzel, A.}, \bibinfo{author}{Blundell, C.}, \bibinfo{year}{2017}.
\newblock \bibinfo{title}{Simple and scalable predictive uncertainty estimation using deep ensembles}.
\newblock \bibinfo{journal}{Advances in neural information processing systems} \bibinfo{volume}{30}.
\bibitem[{Laves et~al.(2021)Laves, Ihler, Fast, Kahrs and Ortmaier}]{laves2021recalibration}
\bibinfo{author}{Laves, M.H.}, \bibinfo{author}{Ihler, S.}, \bibinfo{author}{Fast, J.F.}, \bibinfo{author}{Kahrs, L.A.}, \bibinfo{author}{Ortmaier, T.}, \bibinfo{year}{2021}.
\newblock \bibinfo{title}{Recalibration of aleatoric and epistemic regression uncertainty in medical imaging}.
\newblock \bibinfo{journal}{arXiv preprint arXiv:2104.12376} .
\bibitem[{Levi et~al.(2022)Levi, Gispan, Giladi and Fetaya}]{levi2022evaluating}
\bibinfo{author}{Levi, D.}, \bibinfo{author}{Gispan, L.}, \bibinfo{author}{Giladi, N.}, \bibinfo{author}{Fetaya, E.}, \bibinfo{year}{2022}.
\newblock \bibinfo{title}{Evaluating and calibrating uncertainty prediction in regression tasks}.
\newblock \bibinfo{journal}{Sensors} \bibinfo{volume}{22}, \bibinfo{pages}{5540}.
\bibitem[{Lin et~al.(2022)Lin, Hu, Zhang, Jin, Cheng and Zhou}]{lin2022gradient}
\bibinfo{author}{Lin, Q.}, \bibinfo{author}{Hu, J.}, \bibinfo{author}{Zhang, L.}, \bibinfo{author}{Jin, P.}, \bibinfo{author}{Cheng, Y.}, \bibinfo{author}{Zhou, Q.}, \bibinfo{year}{2022}.
\newblock \bibinfo{title}{Gradient-enhanced multi-output gaussian process model for simulation-based engineering design}.
\newblock \bibinfo{journal}{AIAA Journal} \bibinfo{volume}{60}, \bibinfo{pages}{76--91}.
\bibitem[{Lin et~al.(2021)Lin, Hu, Zhou, Cheng, Hu, Couckuyt and Dhaene}]{lin2021multi}
\bibinfo{author}{Lin, Q.}, \bibinfo{author}{Hu, J.}, \bibinfo{author}{Zhou, Q.}, \bibinfo{author}{Cheng, Y.}, \bibinfo{author}{Hu, Z.}, \bibinfo{author}{Couckuyt, I.}, \bibinfo{author}{Dhaene, T.}, \bibinfo{year}{2021}.
\newblock \bibinfo{title}{Multi-output gaussian process prediction for computationally expensive problems with multiple levels of fidelity}.
\newblock \bibinfo{journal}{Knowledge-Based Systems} \bibinfo{volume}{227}, \bibinfo{pages}{107151}.
\bibitem[{Linmans et~al.(2020)Linmans, van~der Laak and Litjens}]{linmans2020efficient}
\bibinfo{author}{Linmans, J.}, \bibinfo{author}{van~der Laak, J.}, \bibinfo{author}{Litjens, G.}, \bibinfo{year}{2020}.
\newblock \bibinfo{title}{Efficient out-of-distribution detection in digital pathology using multi-head convolutional neural networks.}, in: \bibinfo{booktitle}{MIDL}, pp. \bibinfo{pages}{465--478}.
\bibitem[{Maas et~al.(2013)Maas, Hannun, Ng et~al.}]{maas2013rectifier}
\bibinfo{author}{Maas, A.L.}, \bibinfo{author}{Hannun, A.Y.}, \bibinfo{author}{Ng, A.Y.}, et~al., \bibinfo{year}{2013}.
\newblock \bibinfo{title}{Rectifier nonlinearities improve neural network acoustic models}, in: \bibinfo{booktitle}{Proc. icml}, \bibinfo{organization}{Atlanta, Georgia, USA}. p.~\bibinfo{pages}{3}.
\bibitem[{MacKay(1992)}]{mackay1992information}
\bibinfo{author}{MacKay, D.J.}, \bibinfo{year}{1992}.
\newblock \bibinfo{title}{Information-based objective functions for active data selection}.
\newblock \bibinfo{journal}{Neural computation} \bibinfo{volume}{4}, \bibinfo{pages}{590--604}.
\bibitem[{MacKay(1995)}]{mackay1995probable}
\bibinfo{author}{MacKay, D.J.}, \bibinfo{year}{1995}.
\newblock \bibinfo{title}{Probable networks and plausible predictions-a review of practical bayesian methods for supervised neural networks}.
\newblock \bibinfo{journal}{Network: computation in neural systems} \bibinfo{volume}{6}, \bibinfo{pages}{469}.
\bibitem[{Maulik et~al.(2023)Maulik, Egele, Raghavan and Balaprakash}]{maulik2023quantifying}
\bibinfo{author}{Maulik, R.}, \bibinfo{author}{Egele, R.}, \bibinfo{author}{Raghavan, K.}, \bibinfo{author}{Balaprakash, P.}, \bibinfo{year}{2023}.
\newblock \bibinfo{title}{Quantifying uncertainty for deep learning based forecasting and flow-reconstruction using neural architecture search ensembles}.
\newblock \bibinfo{journal}{arXiv preprint arXiv:2302.09748} .
\bibitem[{Meng et~al.(2021)Meng, Babaee and Karniadakis}]{meng2021multi}
\bibinfo{author}{Meng, X.}, \bibinfo{author}{Babaee, H.}, \bibinfo{author}{Karniadakis, G.E.}, \bibinfo{year}{2021}.
\newblock \bibinfo{title}{Multi-fidelity bayesian neural networks: Algorithms and applications}.
\newblock \bibinfo{journal}{Journal of Computational Physics} \bibinfo{volume}{438}, \bibinfo{pages}{110361}.
\bibitem[{Mu{\~n}oz et~al.(2023)Mu{\~n}oz, L{\'o}pez, Quevedo, Garrido, Monje and Moreno}]{munoz2023gaussian}
\bibinfo{author}{Mu{\~n}oz, J.}, \bibinfo{author}{L{\'o}pez, B.}, \bibinfo{author}{Quevedo, F.}, \bibinfo{author}{Garrido, S.}, \bibinfo{author}{Monje, C.A.}, \bibinfo{author}{Moreno, L.E.}, \bibinfo{year}{2023}.
\newblock \bibinfo{title}{Gaussian processes and fast marching square based informative path planning}.
\newblock \bibinfo{journal}{Engineering Applications of Artificial Intelligence} \bibinfo{volume}{121}, \bibinfo{pages}{106054}.
\bibitem[{Naeini et~al.(2015)Naeini, Cooper and Hauskrecht}]{naeini2015obtaining}
\bibinfo{author}{Naeini, M.P.}, \bibinfo{author}{Cooper, G.}, \bibinfo{author}{Hauskrecht, M.}, \bibinfo{year}{2015}.
\newblock \bibinfo{title}{Obtaining well calibrated probabilities using bayesian binning}, in: \bibinfo{booktitle}{Twenty-Ninth AAAI Conference on Artificial Intelligence}.
\bibitem[{Namura et~al.(2016)Namura, Obayashi and Jeong}]{namura2016efficient}
\bibinfo{author}{Namura, N.}, \bibinfo{author}{Obayashi, S.}, \bibinfo{author}{Jeong, S.}, \bibinfo{year}{2016}.
\newblock \bibinfo{title}{Efficient global optimization of vortex generators on a supercritical infinite wing}.
\newblock \bibinfo{journal}{Journal of Aircraft} \bibinfo{volume}{53}, \bibinfo{pages}{1670--1679}.
\bibitem[{Nikolopoulos et~al.(2022)Nikolopoulos, Kalogeris and Papadopoulos}]{nikolopoulos2022non}
\bibinfo{author}{Nikolopoulos, S.}, \bibinfo{author}{Kalogeris, I.}, \bibinfo{author}{Papadopoulos, V.}, \bibinfo{year}{2022}.
\newblock \bibinfo{title}{Non-intrusive surrogate modeling for parametrized time-dependent partial differential equations using convolutional autoencoders}.
\newblock \bibinfo{journal}{Engineering Applications of Artificial Intelligence} \bibinfo{volume}{109}, \bibinfo{pages}{104652}.
\bibitem[{Osband(2016)}]{osband2016risk}
\bibinfo{author}{Osband, I.}, \bibinfo{year}{2016}.
\newblock \bibinfo{title}{Risk versus uncertainty in deep learning: Bayes, bootstrap and the dangers of dropout}, in: \bibinfo{booktitle}{NIPS workshop on bayesian deep learning}.
\bibitem[{Osband et~al.(2016)Osband, Blundell, Pritzel and Van~Roy}]{osband2016deep}
\bibinfo{author}{Osband, I.}, \bibinfo{author}{Blundell, C.}, \bibinfo{author}{Pritzel, A.}, \bibinfo{author}{Van~Roy, B.}, \bibinfo{year}{2016}.
\newblock \bibinfo{title}{Deep exploration via bootstrapped dqn}.
\newblock \bibinfo{journal}{Advances in neural information processing systems} \bibinfo{volume}{29}.
\bibitem[{Ovadia et~al.(2019)Ovadia, Fertig, Ren, Nado, Sculley, Nowozin, Dillon, Lakshminarayanan and Snoek}]{ovadia2019can}
\bibinfo{author}{Ovadia, Y.}, \bibinfo{author}{Fertig, E.}, \bibinfo{author}{Ren, J.}, \bibinfo{author}{Nado, Z.}, \bibinfo{author}{Sculley, D.}, \bibinfo{author}{Nowozin, S.}, \bibinfo{author}{Dillon, J.}, \bibinfo{author}{Lakshminarayanan, B.}, \bibinfo{author}{Snoek, J.}, \bibinfo{year}{2019}.
\newblock \bibinfo{title}{Can you trust your model's uncertainty? evaluating predictive uncertainty under dataset shift}.
\newblock \bibinfo{journal}{Advances in neural information processing systems} \bibinfo{volume}{32}.
\bibitem[{{\"O}zt{\"u}rk et~al.(2006){\"O}zt{\"u}rk, Y{\i}ld{\i}z, Kaya and {\"O}zt{\"u}rk}]{ozturk2006neuro}
\bibinfo{author}{{\"O}zt{\"u}rk, N.}, \bibinfo{author}{Y{\i}ld{\i}z, A.R.}, \bibinfo{author}{Kaya, N.}, \bibinfo{author}{{\"O}zt{\"u}rk, F.}, \bibinfo{year}{2006}.
\newblock \bibinfo{title}{Neuro-genetic design optimization framework to support the integrated robust design optimization process in ce}.
\newblock \bibinfo{journal}{Concurrent Engineering} \bibinfo{volume}{14}, \bibinfo{pages}{5--16}.
\bibitem[{Park et~al.(2022)Park, Jung and Jeong}]{park2022multi}
\bibinfo{author}{Park, K.}, \bibinfo{author}{Jung, J.}, \bibinfo{author}{Jeong, S.}, \bibinfo{year}{2022}.
\newblock \bibinfo{title}{Multi-objective shape optimization of airfoils for mars exploration aircraft propellers}.
\newblock \bibinfo{journal}{International Journal of Aeronautical and Space Sciences} , \bibinfo{pages}{1--15}.
\bibitem[{Pawar et~al.(2022)Pawar, San, Vedula, Rasheed and Kvamsdal}]{pawar2022multi}
\bibinfo{author}{Pawar, S.}, \bibinfo{author}{San, O.}, \bibinfo{author}{Vedula, P.}, \bibinfo{author}{Rasheed, A.}, \bibinfo{author}{Kvamsdal, T.}, \bibinfo{year}{2022}.
\newblock \bibinfo{title}{Multi-fidelity information fusion with concatenated neural networks}.
\newblock \bibinfo{journal}{Scientific Reports} \bibinfo{volume}{12}, \bibinfo{pages}{5900}.
\bibitem[{Pearce et~al.(2018)Pearce, Anastassacos, Zaki and Neely}]{pearce2018bayesian}
\bibinfo{author}{Pearce, T.}, \bibinfo{author}{Anastassacos, N.}, \bibinfo{author}{Zaki, M.}, \bibinfo{author}{Neely, A.}, \bibinfo{year}{2018}.
\newblock \bibinfo{title}{Bayesian inference with anchored ensembles of neural networks, and application to exploration in reinforcement learning}.
\newblock \bibinfo{journal}{arXiv preprint arXiv:1805.11324} .
\bibitem[{Phan et~al.(2018)Phan, Salay, Czarnecki, Abdelzad, Denouden and Vernekar}]{phan2018calibrating}
\bibinfo{author}{Phan, B.}, \bibinfo{author}{Salay, R.}, \bibinfo{author}{Czarnecki, K.}, \bibinfo{author}{Abdelzad, V.}, \bibinfo{author}{Denouden, T.}, \bibinfo{author}{Vernekar, S.}, \bibinfo{year}{2018}.
\newblock \bibinfo{title}{Calibrating uncertainties in object localization task}.
\newblock \bibinfo{journal}{arXiv preprint arXiv:1811.11210} .
\bibitem[{Pocevi{\v{c}}i{\=u}t{\.e} et~al.(2022)Pocevi{\v{c}}i{\=u}t{\.e}, Eilertsen, Jarkman and Lundstr{\"o}m}]{pocevivciute2022generalisation}
\bibinfo{author}{Pocevi{\v{c}}i{\=u}t{\.e}, M.}, \bibinfo{author}{Eilertsen, G.}, \bibinfo{author}{Jarkman, S.}, \bibinfo{author}{Lundstr{\"o}m, C.}, \bibinfo{year}{2022}.
\newblock \bibinfo{title}{Generalisation effects of predictive uncertainty estimation in deep learning for digital pathology}.
\newblock \bibinfo{journal}{Scientific Reports} \bibinfo{volume}{12}, \bibinfo{pages}{1--15}.
\bibitem[{Quirante and Caballero(2016)}]{quirante2016optimization}
\bibinfo{author}{Quirante, N.}, \bibinfo{author}{Caballero, J.A.}, \bibinfo{year}{2016}.
\newblock \bibinfo{title}{Optimization of a sour water stripping plant using surrogate models}, in: \bibinfo{booktitle}{Computer Aided Chemical Engineering}. \bibinfo{publisher}{Elsevier}. volume~\bibinfo{volume}{38}, pp. \bibinfo{pages}{31--36}.
\bibitem[{Quirante et~al.(2018)Quirante, Javaloyes-Ant{\'o}n and Caballero}]{quirante2018hybrid}
\bibinfo{author}{Quirante, N.}, \bibinfo{author}{Javaloyes-Ant{\'o}n, J.}, \bibinfo{author}{Caballero, J.A.}, \bibinfo{year}{2018}.
\newblock \bibinfo{title}{Hybrid simulation-equation based synthesis of chemical processes}.
\newblock \bibinfo{journal}{Chemical Engineering Research and Design} \bibinfo{volume}{132}, \bibinfo{pages}{766--784}.
\bibitem[{Rahaman et~al.(2021)}]{rahaman2021uncertainty}
\bibinfo{author}{Rahaman, R.}, et~al., \bibinfo{year}{2021}.
\newblock \bibinfo{title}{Uncertainty quantification and deep ensembles}.
\newblock \bibinfo{journal}{Advances in Neural Information Processing Systems} \bibinfo{volume}{34}, \bibinfo{pages}{20063--20075}.
\bibitem[{Rhode(2020)}]{rhode2020non}
\bibinfo{author}{Rhode, S.}, \bibinfo{year}{2020}.
\newblock \bibinfo{title}{Non-stationary gaussian process regression applied in validation of vehicle dynamics models}.
\newblock \bibinfo{journal}{Engineering Applications of Artificial Intelligence} \bibinfo{volume}{93}, \bibinfo{pages}{103716}.
\bibitem[{Riquelme et~al.(2018)Riquelme, Tucker and Snoek}]{riquelme2018deep}
\bibinfo{author}{Riquelme, C.}, \bibinfo{author}{Tucker, G.}, \bibinfo{author}{Snoek, J.}, \bibinfo{year}{2018}.
\newblock \bibinfo{title}{Deep bayesian bandits showdown: An empirical comparison of bayesian deep networks for thompson sampling}.
\newblock \bibinfo{journal}{arXiv preprint arXiv:1802.09127} .
\bibitem[{Rumelhart et~al.(1986)Rumelhart, Hinton and Williams}]{rumelhart1986learning}
\bibinfo{author}{Rumelhart, D.E.}, \bibinfo{author}{Hinton, G.E.}, \bibinfo{author}{Williams, R.J.}, \bibinfo{year}{1986}.
\newblock \bibinfo{title}{Learning representations by back-propagating errors}.
\newblock \bibinfo{journal}{nature} \bibinfo{volume}{323}, \bibinfo{pages}{533--536}.
\bibitem[{Scalia et~al.(2020)Scalia, Grambow, Pernici, Li and Green}]{scalia2020evaluating}
\bibinfo{author}{Scalia, G.}, \bibinfo{author}{Grambow, C.A.}, \bibinfo{author}{Pernici, B.}, \bibinfo{author}{Li, Y.P.}, \bibinfo{author}{Green, W.H.}, \bibinfo{year}{2020}.
\newblock \bibinfo{title}{Evaluating scalable uncertainty estimation methods for deep learning-based molecular property prediction}.
\newblock \bibinfo{journal}{Journal of chemical information and modeling} \bibinfo{volume}{60}, \bibinfo{pages}{2697--2717}.
\bibitem[{Shimoyama et~al.(2013)Shimoyama, Sato, Jeong and Obayashi}]{shimoyama2013updating}
\bibinfo{author}{Shimoyama, K.}, \bibinfo{author}{Sato, K.}, \bibinfo{author}{Jeong, S.}, \bibinfo{author}{Obayashi, S.}, \bibinfo{year}{2013}.
\newblock \bibinfo{title}{Updating kriging surrogate models based on the hypervolume indicator in multi-objective optimization}.
\newblock \bibinfo{journal}{Journal of Mechanical Design} \bibinfo{volume}{135}, \bibinfo{pages}{094503}.
\bibitem[{Shin et~al.(2020)Shin, Lee, Kim, Park, Lee and Min}]{shin2020deep}
\bibinfo{author}{Shin, S.}, \bibinfo{author}{Lee, Y.}, \bibinfo{author}{Kim, M.}, \bibinfo{author}{Park, J.}, \bibinfo{author}{Lee, S.}, \bibinfo{author}{Min, K.}, \bibinfo{year}{2020}.
\newblock \bibinfo{title}{Deep neural network model with bayesian hyperparameter optimization for prediction of nox at transient conditions in a diesel engine}.
\newblock \bibinfo{journal}{Engineering Applications of Artificial Intelligence} \bibinfo{volume}{94}, \bibinfo{pages}{103761}.
\bibitem[{Snoek et~al.(2012)Snoek, Larochelle and Adams}]{snoek2012practical}
\bibinfo{author}{Snoek, J.}, \bibinfo{author}{Larochelle, H.}, \bibinfo{author}{Adams, R.P.}, \bibinfo{year}{2012}.
\newblock \bibinfo{title}{Practical bayesian optimization of machine learning algorithms}.
\newblock \bibinfo{journal}{Advances in neural information processing systems} \bibinfo{volume}{25}.
\bibitem[{Solopchuk and Z{\'e}non(2021)}]{solopchuk2021active}
\bibinfo{author}{Solopchuk, O.}, \bibinfo{author}{Z{\'e}non, A.}, \bibinfo{year}{2021}.
\newblock \bibinfo{title}{Active sensing with artificial neural networks}.
\newblock \bibinfo{journal}{Neural Networks} \bibinfo{volume}{143}, \bibinfo{pages}{751--758}.
\bibitem[{Sugimura et~al.(2009)Sugimura, Jeong, Obayashi and Kimura}]{sugimura2009kriging}
\bibinfo{author}{Sugimura, K.}, \bibinfo{author}{Jeong, S.}, \bibinfo{author}{Obayashi, S.}, \bibinfo{author}{Kimura, T.}, \bibinfo{year}{2009}.
\newblock \bibinfo{title}{Kriging-model-based multi-objective robust optimization and trade-off rule mining of a centrifugal fan with dimensional uncertainty}.
\newblock \bibinfo{journal}{Journal of computational science and technology} \bibinfo{volume}{3}, \bibinfo{pages}{196--211}.
\bibitem[{Tieleman et~al.(2012)Tieleman, Hinton et~al.}]{tieleman2012lecture}
\bibinfo{author}{Tieleman, T.}, \bibinfo{author}{Hinton, G.}, et~al., \bibinfo{year}{2012}.
\newblock \bibinfo{title}{Lecture 6.5-rmsprop: Divide the gradient by a running average of its recent magnitude}.
\newblock \bibinfo{journal}{COURSERA: Neural networks for machine learning} \bibinfo{volume}{4}, \bibinfo{pages}{26--31}.
\bibitem[{VanDerHorn and Mahadevan(2021)}]{vanderhorn2021digital}
\bibinfo{author}{VanDerHorn, E.}, \bibinfo{author}{Mahadevan, S.}, \bibinfo{year}{2021}.
\newblock \bibinfo{title}{Digital twin: Generalization, characterization and implementation}.
\newblock \bibinfo{journal}{Decision Support Systems} \bibinfo{volume}{145}, \bibinfo{pages}{113524}.
\bibitem[{Wang and Chen(2015)}]{wang2015gaussian}
\bibinfo{author}{Wang, B.}, \bibinfo{author}{Chen, T.}, \bibinfo{year}{2015}.
\newblock \bibinfo{title}{Gaussian process regression with multiple response variables}.
\newblock \bibinfo{journal}{Chemometrics and Intelligent Laboratory Systems} \bibinfo{volume}{142}, \bibinfo{pages}{159--165}.
\bibitem[{Wang et~al.(2022)Wang, Wang, Qiao, Meng and Mao}]{wang2022dynamic}
\bibinfo{author}{Wang, B.}, \bibinfo{author}{Wang, W.}, \bibinfo{author}{Qiao, Z.}, \bibinfo{author}{Meng, G.}, \bibinfo{author}{Mao, Z.}, \bibinfo{year}{2022}.
\newblock \bibinfo{title}{Dynamic selective gaussian process regression for forecasting temperature of molten steel in ladle furnace}.
\newblock \bibinfo{journal}{Engineering Applications of Artificial Intelligence} \bibinfo{volume}{112}, \bibinfo{pages}{104892}.
\bibitem[{Wang et~al.(2017)Wang, van Stein, Emmerich and B{\"a}ck}]{wang2017time}
\bibinfo{author}{Wang, H.}, \bibinfo{author}{van Stein, B.}, \bibinfo{author}{Emmerich, M.}, \bibinfo{author}{B{\"a}ck, T.}, \bibinfo{year}{2017}.
\newblock \bibinfo{title}{Time complexity reduction in efficient global optimization using cluster kriging}, in: \bibinfo{booktitle}{Proceedings of the Genetic and Evolutionary Computation Conference}, pp. \bibinfo{pages}{889--896}.
\bibitem[{Williams and Rasmussen(2006)}]{williams2006gaussian}
\bibinfo{author}{Williams, C.K.}, \bibinfo{author}{Rasmussen, C.E.}, \bibinfo{year}{2006}.
\newblock \bibinfo{title}{Gaussian processes for machine learning}. volume~\bibinfo{volume}{2}.
\newblock \bibinfo{publisher}{MIT press Cambridge, MA}.
\bibitem[{Wilson and Izmailov(2020)}]{wilson2020bayesian}
\bibinfo{author}{Wilson, A.G.}, \bibinfo{author}{Izmailov, P.}, \bibinfo{year}{2020}.
\newblock \bibinfo{title}{Bayesian deep learning and a probabilistic perspective of generalization}.
\newblock \bibinfo{journal}{Advances in neural information processing systems} \bibinfo{volume}{33}, \bibinfo{pages}{4697--4708}.
\bibitem[{Wu and Gales(2021)}]{wu2021should}
\bibinfo{author}{Wu, X.}, \bibinfo{author}{Gales, M.}, \bibinfo{year}{2021}.
\newblock \bibinfo{title}{Should ensemble members be calibrated?}
\newblock \bibinfo{journal}{arXiv preprint arXiv:2101.05397} .
\bibitem[{Yang et~al.(2020)Yang, Hong, ZhG and Wang}]{yang2020surrogate}
\bibinfo{author}{Yang, H.}, \bibinfo{author}{Hong, S.H.}, \bibinfo{author}{ZhG, R.}, \bibinfo{author}{Wang, Y.}, \bibinfo{year}{2020}.
\newblock \bibinfo{title}{Surrogate-based optimization with adaptive sampling for microfluidic concentration gradient generator design}.
\newblock \bibinfo{journal}{RSC advances} \bibinfo{volume}{10}, \bibinfo{pages}{13799--13814}.
\bibitem[{Yang et~al.(2022)Yang, Lee and Yee}]{yang2022inverse}
\bibinfo{author}{Yang, S.}, \bibinfo{author}{Lee, S.}, \bibinfo{author}{Yee, K.}, \bibinfo{year}{2022}.
\newblock \bibinfo{title}{Inverse design optimization framework via a two-step deep learning approach: application to a wind turbine airfoil}.
\newblock \bibinfo{journal}{Engineering with Computers} , \bibinfo{pages}{1--17}.
\bibitem[{Yang and Yee(2022a)}]{yang2022comment}
\bibinfo{author}{Yang, S.}, \bibinfo{author}{Yee, K.}, \bibinfo{year}{2022}a.
\newblock \bibinfo{title}{Comment on “novel approach for selecting low-fidelity scale factor in multifidelity metamodeling”}.
\newblock \bibinfo{journal}{AIAA Journal} \bibinfo{volume}{60}, \bibinfo{pages}{2713--2715}.
\bibitem[{Yang and Yee(2022b)}]{yang2022design}
\bibinfo{author}{Yang, S.}, \bibinfo{author}{Yee, K.}, \bibinfo{year}{2022}b.
\newblock \bibinfo{title}{Design rule extraction using multi-fidelity surrogate model for unmanned combat aerial vehicles}.
\newblock \bibinfo{journal}{Journal of Aircraft} , \bibinfo{pages}{1--15}.
\bibitem[{Yang and Yee(2023)}]{yang2023uncertainty}
\bibinfo{author}{Yang, S.}, \bibinfo{author}{Yee, K.}, \bibinfo{year}{2023}.
\newblock \bibinfo{title}{Uncertainty quantification via deep ensembles in missile performance prediction}, in: \bibinfo{booktitle}{AIAA SCITECH 2023 Forum}, p. \bibinfo{pages}{2684}.
\bibitem[{Y{\i}ld{\i}z(2020)}]{yildiz2020slime}
\bibinfo{author}{Y{\i}ld{\i}z, B.S.}, \bibinfo{year}{2020}.
\newblock \bibinfo{title}{Slime mould algorithm and kriging surrogate model-based approach for enhanced crashworthiness of electric vehicles}.
\newblock \bibinfo{journal}{International Journal of Vehicle Design} \bibinfo{volume}{83}, \bibinfo{pages}{54--68}.
\bibitem[{Y{\i}ld{\i}z(2022)}]{yildiz2022marine}
\bibinfo{author}{Y{\i}ld{\i}z, B.S.}, \bibinfo{year}{2022}.
\newblock \bibinfo{title}{Marine predators algorithm and multi-verse optimisation algorithm for optimal battery case design of electric vehicles}.
\newblock \bibinfo{journal}{International Journal of Vehicle Design} \bibinfo{volume}{88}, \bibinfo{pages}{1--11}.
\bibitem[{Zadrozny and Elkan(2001)}]{zadrozny2001obtaining}
\bibinfo{author}{Zadrozny, B.}, \bibinfo{author}{Elkan, C.}, \bibinfo{year}{2001}.
\newblock \bibinfo{title}{Obtaining calibrated probability estimates from decision trees and naive bayesian classifiers}, in: \bibinfo{booktitle}{Icml}, \bibinfo{organization}{Citeseer}. pp. \bibinfo{pages}{609--616}.
\bibitem[{Zadrozny and Elkan(2002)}]{zadrozny2002transforming}
\bibinfo{author}{Zadrozny, B.}, \bibinfo{author}{Elkan, C.}, \bibinfo{year}{2002}.
\newblock \bibinfo{title}{Transforming classifier scores into accurate multiclass probability estimates}, in: \bibinfo{booktitle}{Proceedings of the eighth ACM SIGKDD international conference on Knowledge discovery and data mining}, pp. \bibinfo{pages}{694--699}.
\bibitem[{Zhang et~al.(2019)Zhang, Lu, Guo and Karniadakis}]{zhang2019quantifying}
\bibinfo{author}{Zhang, D.}, \bibinfo{author}{Lu, L.}, \bibinfo{author}{Guo, L.}, \bibinfo{author}{Karniadakis, G.E.}, \bibinfo{year}{2019}.
\newblock \bibinfo{title}{Quantifying total uncertainty in physics-informed neural networks for solving forward and inverse stochastic problems}.
\newblock \bibinfo{journal}{Journal of Computational Physics} \bibinfo{volume}{397}, \bibinfo{pages}{108850}.
\bibitem[{Zhang et~al.(2022)Zhang, Chen, Iyer, Apley and Chen}]{zhang2022uncertainty}
\bibinfo{author}{Zhang, H.}, \bibinfo{author}{Chen, W.W.}, \bibinfo{author}{Iyer, A.}, \bibinfo{author}{Apley, D.W.}, \bibinfo{author}{Chen, W.}, \bibinfo{year}{2022}.
\newblock \bibinfo{title}{Uncertainty-aware mixed-variable machine learning for materials design}.
\newblock \bibinfo{journal}{Scientific reports} \bibinfo{volume}{12}, \bibinfo{pages}{1--13}.
\bibitem[{Zhong et~al.(2019)Zhong, Qiao, Peng, Li, Fan and Qian}]{zhong2019operation}
\bibinfo{author}{Zhong, W.}, \bibinfo{author}{Qiao, C.}, \bibinfo{author}{Peng, X.}, \bibinfo{author}{Li, Z.}, \bibinfo{author}{Fan, C.}, \bibinfo{author}{Qian, F.}, \bibinfo{year}{2019}.
\newblock \bibinfo{title}{Operation optimization of hydrocracking process based on kriging surrogate model}.
\newblock \bibinfo{journal}{Control Engineering Practice} \bibinfo{volume}{85}, \bibinfo{pages}{34--40}.

\end{thebibliography}





\end{document}